\documentclass[10pt]{article} 

\usepackage[preprint]{rlj} 

\usepackage{booktabs}       
\usepackage{amsfonts}       
\usepackage{nicefrac}       
\usepackage{microtype}      
\usepackage{graphicx}       
\usepackage{amsmath}
\usepackage{algorithm}
\usepackage[noend]{algpseudocode}
\usepackage{mathrsfs}
\usepackage{dsfont}
\usepackage{array}
\usepackage{bbm}
\usepackage[mathscr]{eucal}
\usepackage{psfrag}
\usepackage{here}
\usepackage{wasysym}
\usepackage[dvipsnames, table]{xcolor}  
\usepackage{caption}                     
\usepackage{subcaption}
\usepackage{amsthm}
\usepackage{todonotes}
\usepackage{tabulary}
\usepackage{outlines}
\usepackage{thmtools, thm-restate}

\newtheorem{proposition} {Proposition}
\newtheorem{lemma} {Lemma}

\definecolor{ian_highlight}{RGB}{100, 2, 2}

\errorcontextlines\maxdimen

\makeatletter
    \newcommand*{\algrule}[1][\algorithmicindent]{\makebox[#1][l]{\hspace*{.5em}\thealgruleextra\vrule height \thealgruleheight depth \thealgruledepth}}%
\newcommand*{\thealgruleextra}{}
\newcommand*{\thealgruleheight}{.75\baselineskip}
\newcommand*{\thealgruledepth}{.25\baselineskip}

\newcount\ALG@printindent@tempcnta
\def\ALG@printindent{%
    \ifnum \theALG@nested>0
        \ifx\ALG@text\ALG@x@notext
        \else
            \unskip
            \addvspace{-1pt}
            \ALG@printindent@tempcnta=1
            \loop
                \algrule[\csname ALG@ind@\the\ALG@printindent@tempcnta\endcsname]%
                \advance \ALG@printindent@tempcnta 1
            \ifnum \ALG@printindent@tempcnta<\numexpr\theALG@nested+1\relax
            \repeat
        \fi
    \fi
    }%
\usepackage{etoolbox}
\makeatother

\newbox\statebox
\newcommand{\myState}[1]{%
    \setbox\statebox=\vbox{#1}%
    \edef\thealgruleheight{\dimexpr \the\ht\statebox+1pt\relax}%
    \edef\thealgruledepth{\dimexpr \the\dp\statebox+1pt\relax}%
    \ifdim\thealgruleheight<.75\baselineskip
        \def\thealgruleheight{\dimexpr .75\baselineskip+1pt\relax}%
    \fi
    \ifdim\thealgruledepth<.25\baselineskip
        \def\thealgruledepth{\dimexpr .25\baselineskip+1pt\relax}%
    \fi
    \State #1%
    \def\thealgruleheight{\dimexpr .75\baselineskip+1pt\relax}%
    \def\thealgruledepth{\dimexpr .25\baselineskip+1pt\relax}%
}

\title{Delightful Policy Gradient}
\setrunningtitle{Delightful Policy Gradient}

\author{Ian Osband\textsuperscript{1}}
\emails{iosband@google.com}
\affiliations{
$^{1}$\textbf{Google DeepMind}
}

\contribution{
    We introduce the Delightful Policy Gradient (DG), which gates each sampled gradient term by a sigmoid of \emph{delight}, the product of advantage and action surprisal (negative log probability).
    DG amplifies rare successes and suppresses rare failures, yielding a simple drop-in replacement for standard policy gradients that requires no importance ratios.
    }{
    Prior variance-reduction methods (baselines, control variates~\citep{kool2019buy}) reduce variance but leave the expected gradient direction unchanged.
    Trust-region methods~\citep{schulman2015trust,schulman2017proximal} clip importance ratios to limit policy change, while advantage-weighted methods~\citep{peng2019advantage,abdolmaleki2024preference} reweight by advantage alone.
    Neither uses action surprisal to reshape the update.
    }

\contribution{
    In tabular $K$-armed bandits, we prove that DG improves policy-gradient updates through two distinct mechanisms.
    Within a single decision context, DG reduces directional variance by suppressing noise from low-probability negative-advantage actions (Proposition~1).
    Across multiple contexts, DG changes the bias of the expected gradient, shifting it strictly closer to the supervised cross-entropy oracle, even in the infinite-sample limit (Proposition~2).
    }{
    Standard policy gradients allocate gradient budget in proportion to current success probability, creating a self-reinforcing dynamic in which easy contexts dominate while harder ones stall \citep{williams1992simple}.
    DG instead redistributes budget toward harder contexts, identifying a more balanced allocation rule for long-run learning under bandit feedback.
    }

\contribution{
    Empirically, DG outperforms REINFORCE, PPO, and advantage-weighted baselines across MNIST, transformer sequence modeling, and continuous control.
    The gains grow with problem difficulty: on Token Reversal, DG exhibits a smaller empirical scaling exponent than all baselines.
    }{
    Together, these experiments show that DG's mechanism is not a tabular artifact: MNIST diagnoses gradient geometry, Token Reversal tests scaling in sequential learning with Transformers, and the DeepMind Control Suite~\citep{tassa2018deepmind} tests density-based surprisal in continuous action spaces.
    }

\keywords{policy gradient, reinforcement learning, gradient estimation, variance reduction, scaling}

\summary{
Standard policy gradients weight each sampled action by advantage alone, regardless of how likely that action was under the current policy.
This creates two pathologies: within a single decision context (e.g.\ one image or prompt), a rare negative-advantage action can disproportionately distort the update direction; across many such contexts in a batch, the expected gradient over-allocates budget to contexts the policy already handles well.
We introduce the \textit{Delightful Policy Gradient} (DG), which gates each term with a sigmoid of \emph{delight}, the product of advantage and action surprisal (negative log-probability).
For $K$-armed bandits, DG provably improves directional accuracy in a single context and, across multiple contexts, shifts the expected gradient strictly closer to the supervised cross-entropy oracle.
This second effect is not variance reduction: it persists even with infinite samples.
Empirically, DG outperforms REINFORCE, PPO, and advantage-weighted baselines across MNIST, transformer sequence modeling, and continuous control, with larger gains on harder tasks.
}

\begin{document}

\makeCover  
\maketitle

\begin{abstract}
Standard policy gradients weight each sampled action by advantage alone, regardless of how likely that action was under the current policy.
This creates two pathologies: within a single decision context (e.g.\ one image or prompt), a rare negative-advantage action can disproportionately distort the update direction; across many such contexts in a batch, the expected gradient over-allocates budget to contexts the policy already handles well.
We introduce the \textit{Delightful Policy Gradient} (DG), which gates each term with a sigmoid of \emph{delight}, the product of advantage and action surprisal (negative log-probability).
For $K$-armed bandits, DG provably improves directional accuracy in a single context and, across multiple contexts, shifts the expected gradient strictly closer to the supervised cross-entropy oracle.
This second effect is not variance reduction: it persists even with infinite samples.
Empirically, DG outperforms REINFORCE, PPO, and advantage-weighted baselines across MNIST, transformer sequence modeling, and continuous control, with larger gains on harder tasks.
\end{abstract}

\section{Introduction}
\label{sec:intro}

Policy gradient methods underpin some of the most consequential systems in modern AI, from superhuman game play~\citep{silver2017mastering,vinyals2019grandmaster} to large language model alignment~\citep{ouyang2022training}.
In deep learning, optimizers often normalize gradients or constrain effective step size~\citep{kingma2014adam,you2019large}, making update direction a primary determinant of learning progress.
Most prior work therefore asks how to estimate the policy-gradient direction with lower variance or follow it more safely.
We ask a different question: is the policy-gradient direction itself the right one to follow?

We show that, in many settings, it is not.
Policy gradients weight each per-sample term by advantage, regardless of how probable the sampled action was under the current policy~\citep{williams1992simple,sutton1999policy}.
Within a single decision context, a rare negative-advantage action can disproportionately distort the update direction, even though the policy already avoids it.
Across contexts, the problem is deeper: the expected policy-gradient direction systematically over-allocates gradient budget to contexts the policy already solves well.
A classifier allocates more gradient budget to an image it already gets right 99\% of the time than to one it gets right 50\%; likewise, an easy prompt dominates a harder one simply because the model already succeeds on it.
This bias is not a finite-sample artifact: it persists even with infinite data.

We propose a simple fix: gate each gradient term with a sigmoid of \emph{delight}, the product of advantage and action surprisal, where surprisal is the negative log-probability of the sampled action under the current policy.
The resulting estimator is the \textit{Delightful Policy Gradient} (DG): one sigmoid, one multiply, and a temperature $\eta$ that we fix to $1$ throughout.
Within a single context, DG suppresses perpendicular noise from unlikely negative-advantage actions (Prop.~\ref{prop:variance}), improving directional accuracy; this variance effect vanishes as batch size grows.
Across contexts, DG shifts the expected gradient strictly closer to the supervised cross-entropy oracle (Prop.~\ref{prop:direction}); this directional effect persists even in the infinite-sample limit.

We build this case progressively across theory and experiments.
In tabular $K$-armed bandits, we isolate the two mechanisms analytically (Section~\ref{sec:bandit}).
On MNIST contextual bandits, we directly confirm both effects and show that DG closes roughly half the gap to supervised cross-entropy (Section~\ref{sec:mnist}).
On token reversal with transformers, DG's advantage compounds with difficulty, yielding a smaller scaling exponent than all baselines (Section~\ref{sec:transformer}).
On continuous control across 28 environments, DG matches or exceeds baselines without task-specific tuning (Section~\ref{sec:control}).

\section{Delightful Policy Gradient}
\label{sec:method}

We consider the standard episodic reinforcement learning setting.
At each timestep $t$, the agent observes a history $\mathcal{H}_t$, samples an action $A_t \sim \pi_\theta(\cdot \mid \mathcal{H}_t)$, and receives reward $R_t$.
Standard policy gradients form per-sample updates
$g_t = U_t \nabla_\theta \log \pi_\theta(A_t \mid \mathcal{H}_t),$
where $U_t$ denotes an advantage estimate.
Thus each score term is weighted by advantage alone, regardless of how likely the sampled action was under the current policy.
DG modulates each term by a second quantity: \emph{action surprisal}.

\subsection{Definitions}
\label{sec:definitions}
\vspace{-1mm}

The action surprisal is 
$\ell_t = -\log \pi_\theta(A_t \mid \mathcal{H}_t),$
which is large when the chosen action is unlikely under the current policy in that decision context.
This surprisal is policy-relative: it measures how unlikely the action was under the policy, not how common that action is in the environment.
It is also action-relative rather than outcome-relative: $\ell_t$ depends only on the probability the policy assigned to the sampled action, not on whether the resulting reward was good or bad.

We define \emph{delight} as
$\chi_t = U_t \ell_t,$
the product of advantage and action surprisal, which is large when an unlikely action has high advantage.%
\footnote{We write $\chi$ for the Greek \emph{chara} (delight).}
The gate is
$w_t = \sigma(\chi_t / \eta)$
for the sigmoid $\sigma(x)=\frac{1}{1+e^{-x}},$
where $\eta > 0$ controls sharpness.
DG therefore replaces the standard term $g_t$ with the gated term $w_t\, g_t$.
In other words, DG keeps the standard policy-gradient term but rescales it according to delight.

This gate weights score terms asymmetrically.
When a rare action has positive advantage ($\chi_t \gg 0$), the gate opens and $w_t \approx 1$; we call such events \emph{breakthroughs}.
When a rare action has negative advantage ($\chi_t \ll 0$), the gate closes and $w_t \approx 0$; we call such events \emph{blunders}.
For actions the policy already favors, the surprisal is small, so the gate stays near $\tfrac{1}{2}$.

The intuition for this asymmetry is simple.
A \emph{blunder}---a low-probability action with negative advantage---is already being avoided; pushing it down further has limited value.
A \emph{breakthrough}---a low-probability action with positive advantage---is a discovery the policy should exploit, and increasing its probability also creates more opportunities to learn from it in the future.
Standard policy gradients treat these two cases symmetrically; DG preserves breakthroughs while attenuating blunders.
The sigmoid gate also arises from a local entropy-regularized objective over gate values (Appendix~\ref{app:derivations}).
Figure~\ref{fig:dg_coefficient} visualizes the resulting effective coefficient on the gradient $\nabla_\theta \log \pi$.

\begin{figure}[ht!]
\centering
\begin{subfigure}[t]{0.48\columnwidth}
    \centering
    \includegraphics[width=\linewidth]{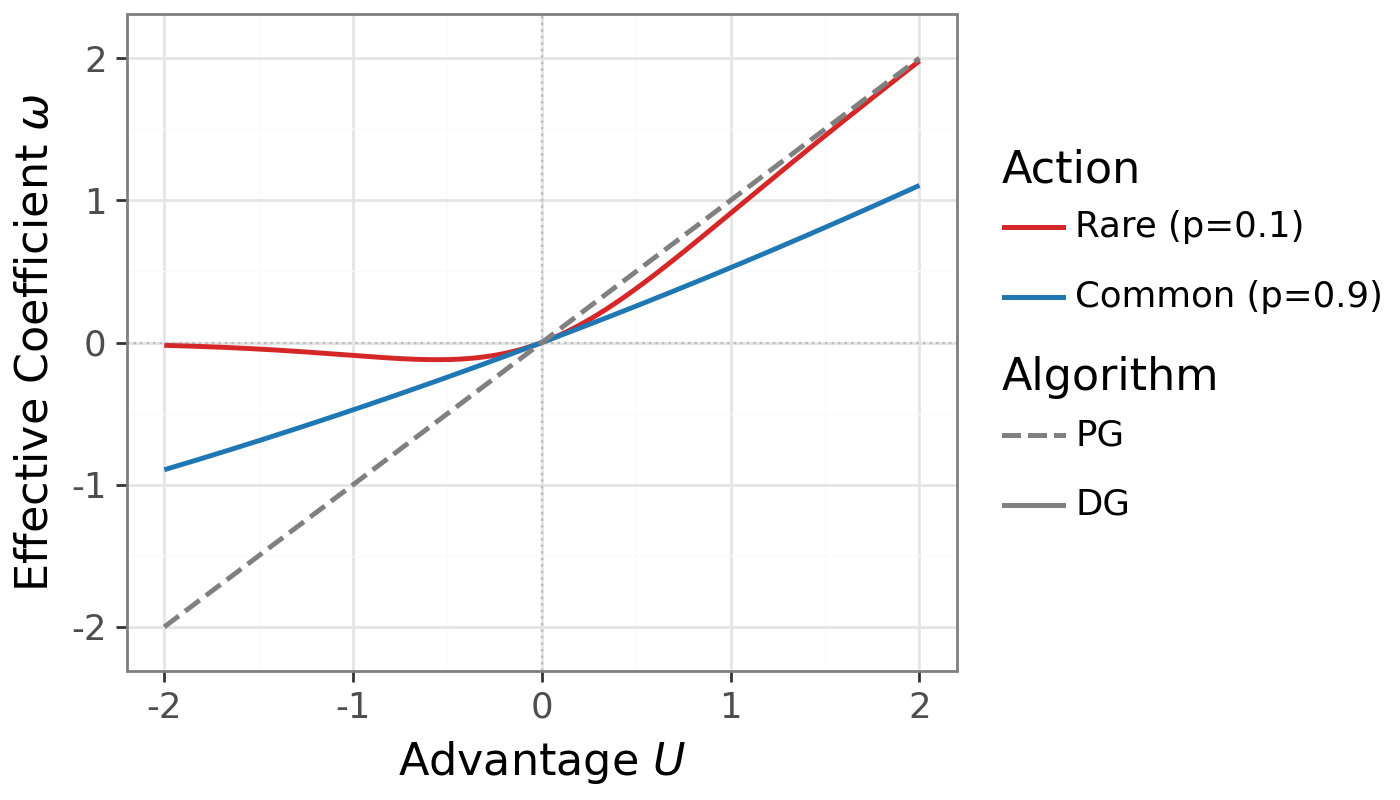}
    \vspace{-4mm}
    \caption{Coefficient $\omega$ vs.\ advantage $U$.}
    \label{fig:dg_vs_u}
\end{subfigure}
\hfill
\begin{subfigure}[t]{0.48\columnwidth}
    \centering
    \includegraphics[width=\linewidth]{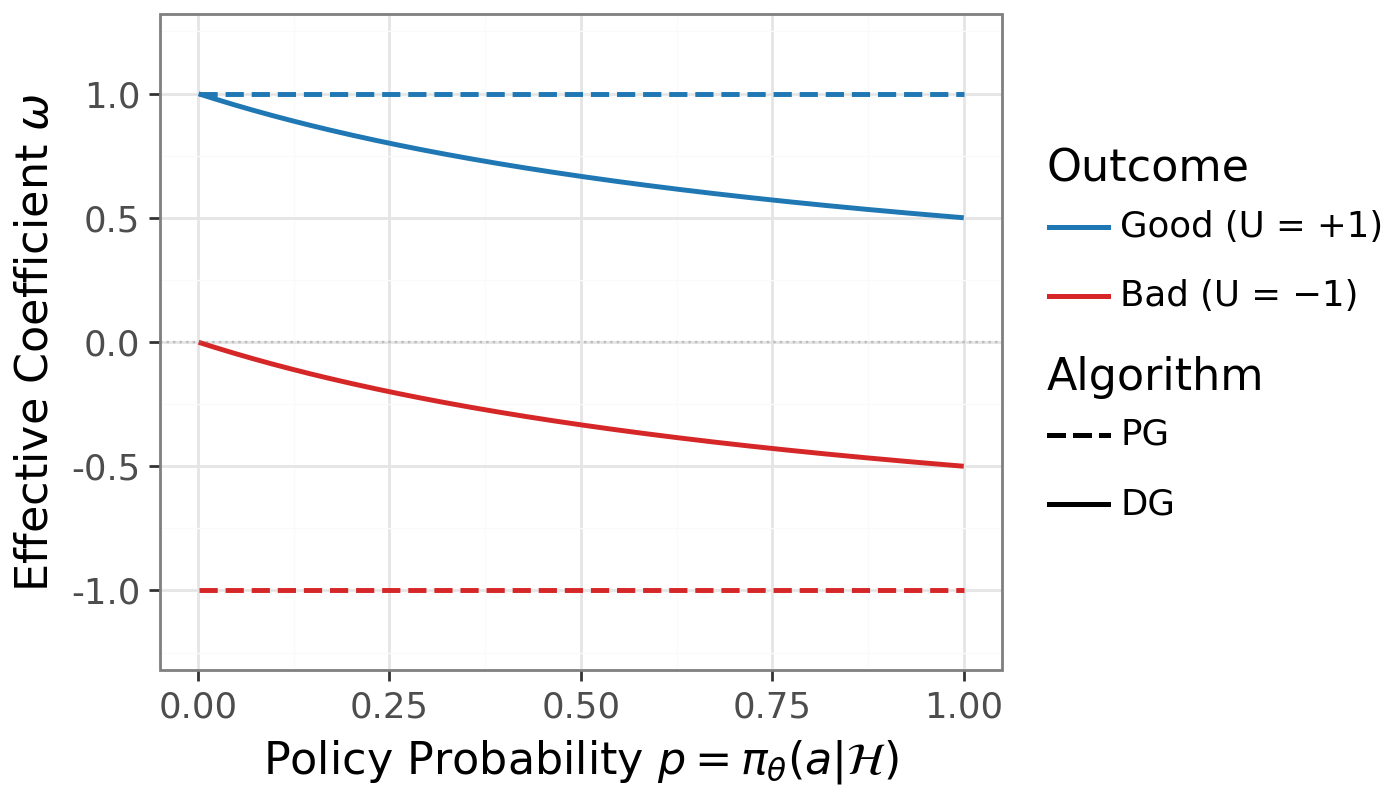}
    \vspace{-4mm}
    \caption{Coefficient $\omega$ vs.\ action probability $p$.}
    \label{fig:dg_vs_p}
\end{subfigure}
\vspace{-2mm}
\caption{Effective coefficient $\omega = w \cdot U$ weighting $\nabla_\theta \log \pi$.
DG amplifies breakthroughs (rare successes) and suppresses blunders (rare failures); PG~(dashed) is probability-blind.}
\label{fig:dg_coefficient}
\vspace{-2mm}
\end{figure}

\subsection{Estimator and Implementation}
\label{sec:estimator}

Operationally, DG simply multiplies each standard policy-gradient term
$g_t = U_t \nabla_\theta \log \pi_\theta(A_t \mid \mathcal{H}_t)$
by the gate $w_t = \sigma(\chi_t/\eta)$.
This is a drop-in replacement for standard policy gradients:
\begin{equation}
\label{eq:dg}
\Delta \theta \;\propto\; \sum_{t \in \mathcal{B}} w_t\, g_t
\;=\;
\sum_{t \in \mathcal{B}} \sigma(\chi_t / \eta)\, U_t \,\nabla_\theta \log \pi_\theta(A_t \mid \mathcal{H}_t).
\end{equation}
Because the gate reweights samples, DG changes the expected update direction, not merely its variance.
At an optimal policy, on-policy advantages vanish, so delight vanishes and optimal policies are stationary points of DG for all $\eta > 0$ (Appendix~\ref{app:derivations}).
We use $\eta = 1$ throughout and find it robust across experiments.
Algorithm~\ref{alg:dg} gives pseudocode; relative to PG, DG adds one sigmoid and one multiply per sample.

\begin{center}
\begin{minipage}{0.75\columnwidth}
\begin{algorithm}[H]
\caption{Delightful Policy Gradient (discrete actions)}
\label{alg:dg}
\begin{algorithmic}[1]
\Require Batch $\mathcal{B}$, policy $\pi_\theta$, temperature $\eta{=}1$
\State $\Delta\theta \gets 0$
\For{$t \in \mathcal{B}$}
    \State $\ell_t \gets -\log \pi_\theta(A_t \mid \mathcal{H}_t)$ \Comment{Surprisal}
    \State $\chi_t \gets U_t \cdot \ell_t$ \Comment{Delight}
    \State $w_t \gets \sigma(\chi_t / \eta)$ \Comment{Gate}
    \State $\Delta\theta \gets \Delta\theta + w_t \, U_t \, \nabla_\theta \log \pi_\theta(A_t \mid \mathcal{H}_t)$
\EndFor
\State \Return $\Delta\theta$
\end{algorithmic}
\end{algorithm}
\end{minipage}
\end{center}

For continuous actions, density-based surprisal makes $\chi = U\ell$ sensitive to action scaling.
In our control experiments, clipping log-densities to $[-10,10]$ was sufficient (Algorithm~\ref{alg:dg_continuous} in Appendix~\ref{app:control_setup}); when action scales vary substantially across dimensions, one can also whiten $\tilde{\chi}_t = (\chi_t - \mu_\chi)/(\sigma_\chi + \epsilon)$ before gating.

\section{MNIST Diagnostic}
\label{sec:mnist}

We cast MNIST classification as a one-step contextual bandit.
Given an image, the learner predicts a digit and observes only whether that prediction was correct, not the label itself.
This makes MNIST a clean test of whether policy-gradient updates recover the gradient geometry that standard supervised learning gets from full labels, without the additional complications of sequential control.

Formally, given an image $X$, the agent samples $A \in \{0,\dots,9\}$ from the policy and receives reward $R = \mathbb{I}\{A = Y\}$, where $Y$ is the true label.
The label itself is never revealed to the learner.
We train a two-layer ReLU network with Adam on batches of $B = 100$ images.

To diagnose gradient quality, we compare each batch update against two oracle directions, both computed from the true labels and therefore unavailable to the learner:
\begin{align*}
g^*_{\mathrm{PG}} &= \textstyle\sum_{x \in \mathcal{B}} p(x)\,\nabla_\theta \log \pi_\theta(Y \mid x)
&&\text{(PG oracle),}\\
g^*_{\mathrm{CE}} &= \textstyle\sum_{x \in \mathcal{B}} \nabla_\theta \log \pi_\theta(Y \mid x)
&&\text{(cross-entropy oracle),}
\end{align*}
where $p(x) := \pi_\theta(Y \mid x)$ is the probability assigned to the correct label.
The PG oracle weights each image by its current success probability, whereas the cross-entropy oracle weights all images equally.
Equivalently, these directions correspond to maximizing $\sum p(x)$ and $\sum \log p(x)$, respectively.
All results average over 100 seeds; shaded regions show $\pm$1 standard error.
Figure~\ref{fig:mnist_curve} shows that DG learns faster than PG, closing roughly half the gap to supervised cross-entropy (CE) on this configuration.
CE requires labels; PG and DG see only rewards.
The architecture and optimizer are identical across methods; the only difference is how each method weights its gradients.

\begin{figure}[ht!]
\centering
\vspace{-2mm}
\includegraphics[width=0.5\columnwidth]{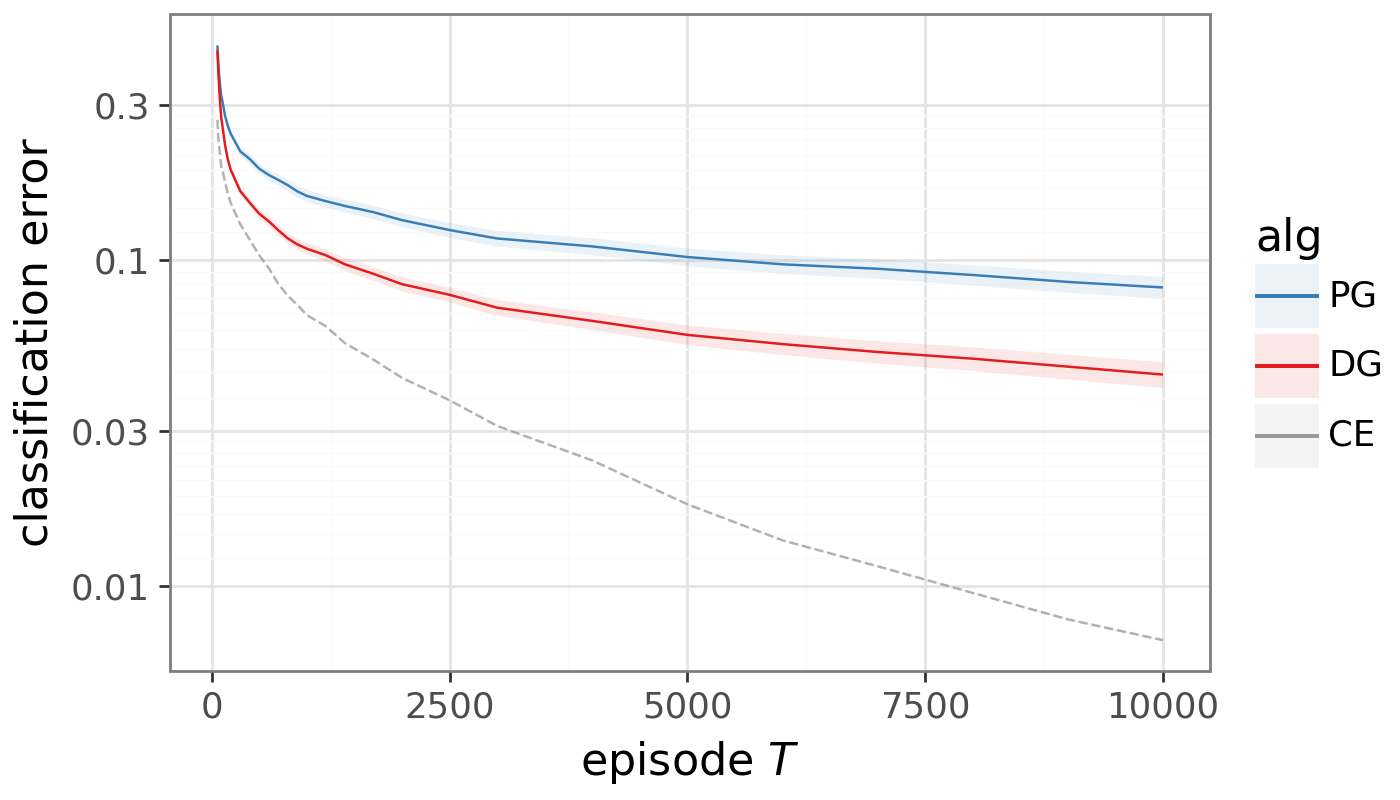}
\vspace{-3mm}
\caption{MNIST classification error.
Supervised CE requires labels; PG and DG do not.}
\vspace{-2mm}
\label{fig:mnist_curve}
\end{figure}

To test whether this advantage is merely variance reduction, Figure~\ref{fig:mnist_limit} varies both the baseline and the number of independent action samples drawn per image, denoted $S$.
The advantage takes the form $U = R - b$, and we consider four baselines: $b = 0$ (zero), $b = 0.5$ (constant), $b = \hat{\mathbb{E}}[R \mid x]$ using the agent's own probability estimate (expected), and $b = \mathbb{E}[R \mid x]$ using the true label probability (oracle).
As $S$ increases, PG converges from above to the error floor set by the exact PG oracle $g^*_{\mathrm{PG}}$ (dashed line in Figure~\ref{fig:mnist_limit}).
DG surpasses this floor for every baseline: with the expected baseline, DG at $S = 1$ already matches the level that PG approaches only as $S \to \infty$.
Because this gain persists at large $S$, it cannot be explained by variance reduction alone: DG changes the expected gradient direction itself.

\begin{figure}[ht!]
\centering
\vspace{-1mm}
\includegraphics[width=\columnwidth]{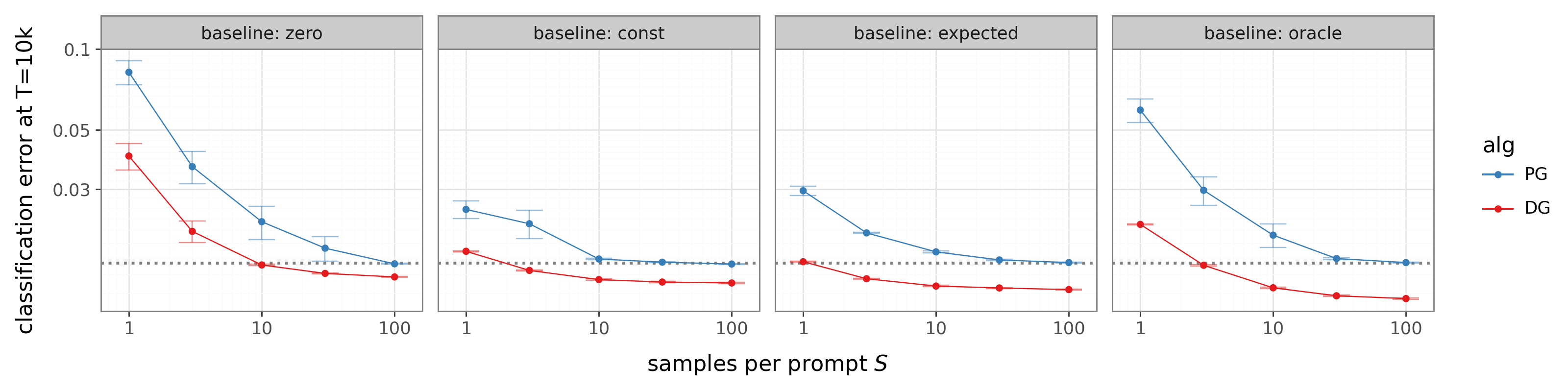}
\vspace{-3mm}
\caption{Classification error at $T = 10\text{k}$ vs.\ samples per image $S$, faceted by baseline.
Dashed line: error from the exact PG-oracle gradient $g^*_{\mathrm{PG}}$, PG's best achievable direction.}
\vspace{-1mm}
\label{fig:mnist_limit}
\end{figure}

Figure~\ref{fig:mnist_grad} separates the two mechanisms by measuring alignment to both oracles at $S = 1$ and $S = 100$.
Against $g^*_{\mathrm{PG}}$~(a), DG's advantage shrinks at $S = 100$: this is the within-context variance effect disappearing as sampling noise is reduced.
Against $g^*_{\mathrm{CE}}$~(b), DG's advantage persists at $S = 100$: this is the cross-context directional effect, which remains even when sampling noise is negligible.
The only difference between the two oracles is how they weight images: the PG oracle allocates more budget to images the model already classifies correctly, whereas the cross-entropy oracle treats every image equally.
DG compresses these weights toward equality, reallocating gradient budget from well-solved images to harder ones.
Section~\ref{sec:bandit} formalizes both mechanisms in settings where the true gradients are analytically tractable.

\begin{figure}[ht!]
\centering
\vspace{-2mm}
\begin{subfigure}[t]{0.48\columnwidth}
    \centering
    \includegraphics[width=\linewidth]{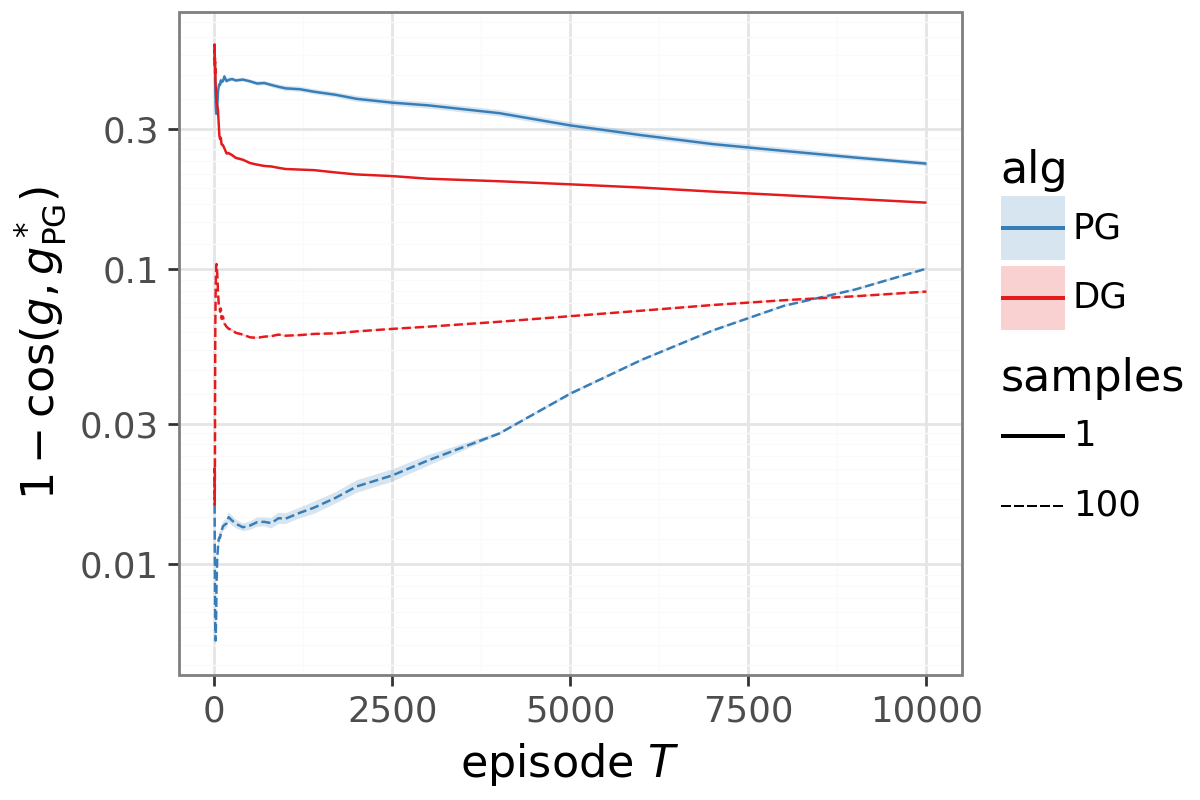}
    \vspace{-2mm}
    \caption{Misalignment to $g^*_{\mathrm{PG}}$.}
    \label{fig:mnist_grad_pg}
\end{subfigure}
\hfill
\begin{subfigure}[t]{0.48\columnwidth}
    \centering
    \includegraphics[width=\linewidth]{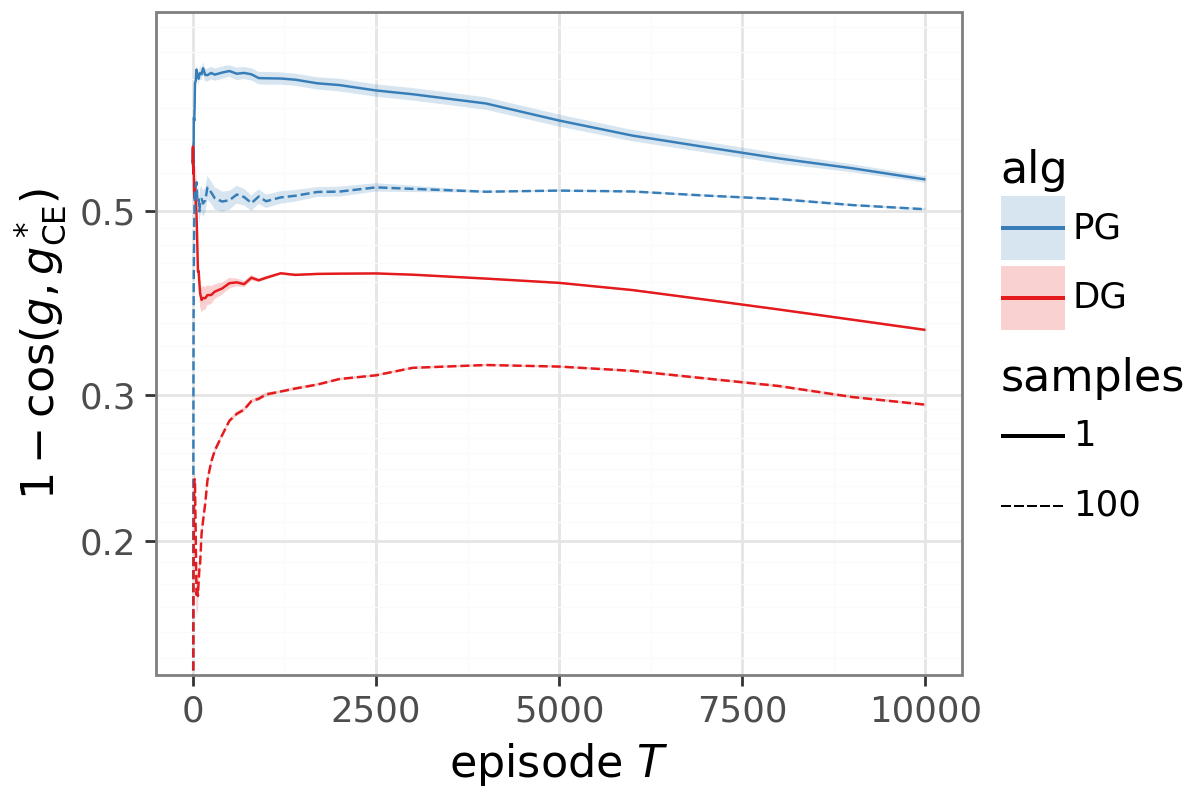}
    \vspace{-2mm}
    \caption{Misalignment to $g^*_{\mathrm{CE}}$.}
    \label{fig:mnist_grad_ce}
\end{subfigure}
\vspace{-2mm}
\caption{Gradient misalignment over training at $S = 1$ (solid) and $S = 100$ (dashed).
\textbf{(a)}~DG's advantage diminishes with $S$ (variance).
\textbf{(b)}~DG's advantage persists at $S = 100$ (directional).}
\label{fig:mnist_grad}
\vspace{-2mm}
\end{figure}

Appendix~\ref{app:mnist} provides extensive ablations showing that DG's gains persist across learning rates, batch sizes, network widths, and baselines.
Validation error tracks training error closely, indicating no overfitting.
The multiplicative form $\chi = U \cdot \ell$ also outperforms additive alternatives and entropy regularization.
Taken together, these results suggest that DG addresses a core policy-gradient mismatch that appears even in canonical classification under bandit feedback.

\section{Tabular Analysis}
\label{sec:bandit}

The MNIST diagnostics revealed two effects: DG denoises the gradient estimate (Figure~\ref{fig:mnist_grad_pg}) and rotates its expected direction toward the cross-entropy oracle (Figure~\ref{fig:mnist_grad_ce}).
To isolate these mechanisms analytically, we move to the tabular setting, replacing the neural network with explicit tables over actions, and study $K$-armed bandits under simple assumptions: symmetry, orthogonality, and normalized steps.

The MNIST, transformer, and control experiments violate all of these assumptions; the tabular analysis isolates mechanism rather than modeling the full applications.
We parameterize policies by logits $z$, so $\phi(a) = \nabla_z \log \pi(a) = e_a - \pi$.%
\footnote{Elsewhere $\phi$ denotes the score in general $\theta$-space; under logits, it simplifies to $e_a - \pi$.}
To model direction-limited optimization, each step takes the form $z \leftarrow z + \alpha g / \|g\|$: every update has norm $\alpha$, so better direction is the only way to learn faster.
This is a useful idealization of large-scale training regimes in which adaptive optimization and gradient clipping can attenuate differences in raw gradient norm, making update direction increasingly important for progress (Appendix~\ref{app:cosine_progress}).

\subsection{Single context: variance reduction}
\label{subsec:variance}

We begin with a single context: a $K$-armed bandit with one correct action $y^*$ among $K \ge 3$ choices, like classifying a single MNIST image but with the neural network replaced by an explicit probability table.
The reward is $R = \mathbb{I}\{A = y^*\}$.
In any single context of this form, the reward and cross-entropy oracles coincide.
The score identity $\sum_a \pi(a)\,\phi(a) = 0$ implies $\mathbb{E}[g_{\mathrm{PG}}] = \pi(y^*)\,\phi(y^*) \propto g^*_{\mathrm{CE}}$.
The only way to improve gradient quality is therefore to reduce perpendicular sampling noise.
We write $\Pi_\perp$ for projection orthogonal to $\nabla J$ and $\mathrm{Var}_\perp(g) = \mathbb{E}\|\Pi_\perp(g)\|^2$ for the perpendicular variance.

Symmetry yields closed-form equalities.
We specialize to a policy with $\pi(y^*) = 1 - \varepsilon$ and $\pi(a) = \varepsilon/(K{-}1)$ for each incorrect action, with baseline $b \in (0,1)$.
The DG gate then takes a common value $w_+$ on the correct action and $w_-$ on all incorrect actions, with $w_- < \tfrac{1}{2} < w_+$.%
\footnote{Explicitly: $w_+ = \sigma\!\big((1{-}b)\log\tfrac{1}{1-\varepsilon}/\eta\big)$ and $w_- = \sigma\!\big(-b\log\tfrac{K-1}{\varepsilon}/\eta\big)$.}

\begin{proposition}[Variance reduction in symmetric bandits]
\label{prop:variance}
In the bandit above, for any $\eta > 0$:

\textbf{(i)} DG preserves the expected gradient direction: $\mathbb{E}[g_{\mathrm{DG}}] = s \cdot g^*_{\mathrm{PG}}$, where $s = (1{-}b)\,w_+ + b\,w_- > 0$.

\textbf{(ii)} DG reduces perpendicular variance by exactly $w_-^2$: $\mathrm{Var}_\perp(g_{\mathrm{DG}}) = w_-^2 \cdot \mathrm{Var}_\perp(g_{\mathrm{PG}})$.

\textbf{(iii)} For the batch mean $\bar{g} = \frac{1}{B}\sum_{i=1}^B g_i$, in the regime where $\bar{g}$ concentrates around its mean,
\begin{equation}
\label{eq:gap_ratio}
\frac{1 - \mathbb{E}[\cos(\bar{g}_{\mathrm{DG}},\, g^*_{\mathrm{PG}})]}
     {1 - \mathbb{E}[\cos(\bar{g}_{\mathrm{PG}},\, g^*_{\mathrm{PG}})]}
\;\approx\; \frac{w_-^2}{s^2} \;<\; 1.
\end{equation}
DG always reduces the alignment gap; both methods converge to cosine~$1$ as $B \to \infty$.
\end{proposition}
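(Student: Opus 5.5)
The proof will be a direct computation in logit space using the score vectors $\phi(a) = e_a - \pi$. We write $g_{\mathrm{PG}} = U\,\phi(A)$ with $U = R - b$, and $g_{\mathrm{DG}} = w(A)\,U\,\phi(A)$. The symmetry assumption makes $w(A)$ take only two values: $w_+$ when $A = y^*$ and $w_-$ otherwise. The advantage is also two-valued, equal to $1-b$ on $y^*$ and $-b$ off it.

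\textbf{Part (i).} We compute the expectation directly:
\[
\mathbb{E}[g_{\mathrm{DG}}] = (1-\varepsilon)(1-b)w_+\phi(y^*) - b\,w_-\sum_{a\neq y^*}\pi(a)\phi(a).
\]
The score identity gives $\sum_{a\neq y^*}\pi(a)\phi(a) = -(1-\varepsilon)\phi(y^*)$. Substituting,
\[
\mathbb{E}[g_{\mathrm{DG}}] = (1-\varepsilon)\big[(1-b)w_+ + b\,w_-\big]\phi(y^*) = s\cdot g^*_{\mathrm{PG}},
\]
using $g^*_{\mathrm{PG}} = \pi(y^*)\phi(y^*)$. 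Positivity of $s$ is immediate because $b\in(0,1)$ and both gates are positive. Setting $w_\pm = 1$ recovers $\mathbb{E}[g_{\mathrm{PG}}] = g^*_{\mathrm{PG}}$, so both means lie along $\nabla J \propto \phi(y^*)$.

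\textbf{Part (ii).} The key observation is that $\Pi_\perp\phi(y^*) = 0$. The sample $A = y^*$ therefore contributes nothing to the perpendicular component, in either estimator. All perpendicular mass comes from incorrect actions, where
\[
\Pi_\perp g_{\mathrm{DG}} = -b\,w_-\,\Pi_\perp\phi(a) = w_-\,\Pi_\perp g_{\mathrm{PG}}.
\]
Squaring and averaging gives $\mathrm{Var}_\perp(g_{\mathrm{DG}}) = w_-^2\,\mathrm{Var}_\perp(g_{\mathrm{PG}})$ exactly. I would also verify that $\Pi_\perp\phi(a)\neq 0$ for $a\neq y^*$. This requires $K\ge 3$: $\phi(a)$ is not parallel to $\phi(y^*)$, since $\phi(a)-\pi$-type components differ on the other incorrect coordinates. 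Hence the ratio statement is nontrivial.

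\textbf{Part (iii).} Write $\bar g = \mu + \xi_\parallel + \xi_\perp$, where $\mu$ is the mean, $\xi_\parallel$ is the noise component along the mean, and $\xi_\perp$ is the perpendicular noise, with $\mathbb{E}\|\xi_\perp\|^2 = \mathrm{Var}_\perp(g)/B$. Then
\[
\cos(\bar g,\mu) = \frac{\|\mu\|+\xi_\parallel}{\sqrt{(\|\mu\|+\xi_\parallel)^2+\|\xi_\perp\|^2}}.
\]
In the concentration regime, a second-order expansion gives
\[
1-\cos \approx \frac{\|\xi_\perp\|^2}{2\|\mu\|^2},
\]
since the $\xi_\parallel$ terms enter only at higher order. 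Taking expectations yields
\[
1-\mathbb{E}\cos \approx \frac{\mathrm{Var}_\perp(g)}{2B\,\|\mu\|^2}.
\]
Apply this to both estimators, using $\|\mu_{\mathrm{DG}}\| = s\|g^*_{\mathrm{PG}}\|$ from (i) and the variance ratio from (ii). The ratio of alignment gaps is then $w_-^2/s^2$. Both gaps are $O(1/B)$, which gives convergence to cosine $1$ as $B\to\infty$.

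It remains to show $w_-^2/s^2<1$, i.e.\ $w_- < s$. Since $s$ is a convex combination of $w_+$ and $w_-$ with weight $1-b>0$ on $w_+$, and $w_+ > \tfrac12 > w_-$, we get $s > w_-$. The gate inequalities follow from the signs of the delight values: $(1-b)\log\frac{1}{1-\varepsilon} > 0$ and $-b\log\frac{K-1}{\varepsilon} < 0$, which needs $\varepsilon < K-1$, true for $\varepsilon\in(0,1)$.

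\textbf{Main obstacle.} The delicate step is making (iii) precise. The approximation requires justifying that the $\xi_\parallel$ cross-terms and the higher-order terms are negligible relative to $\|\xi_\perp\|^2/\|\mu\|^2$. I would state this as a leading-order asymptotic in $1/B$: the remainder is $O(B^{-3/2})$, controlled by bounded moments, since $\phi$ is bounded. The displayed $\approx$ is then read as equality of the $1/B$ coefficients.
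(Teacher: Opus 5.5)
Your proposal is correct and follows the paper's proof step for step. For (i), you use the score identity $\sum_a \pi(a)\phi(a)=0$ directly, which is exactly what the paper's component-wise symmetry lemma encodes. For (ii), you use the same fact that $\Pi_\perp\phi(y^*)=0$, so incorrect-action terms are simply scaled by $w_-$. For (iii), you use the same second-order expansion $1-\cos\approx\|\xi_\perp\|^2/(2\|\mu\|^2)$. Your explicit checks that $w_-<s$ and that $\Pi_\perp\phi(a)\neq 0$ for $K\ge 3$, and your reading of $\approx$ as agreement of the $1/B$ coefficients, are small gains in rigor that the paper leaves implicit.
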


The ratio $w_-^2/s^2$ measures the fraction of PG's cosine gap that DG retains.
Since $\sigma(-x) \le e^{-x}$, the gate satisfies $w_- \le \sqrt{\varepsilon/(K{-}1)}$ for $b = \tfrac{1}{2}$ and $\eta = 1$, giving $w_-^2/s^2 \le 16\,\varepsilon/(K{-}1)$.
Evaluating the exact ratio confirms that the reduction is substantial: for $K{=}100$ at $\varepsilon{=}0.5$, DG retains only $4\%$ of PG's cosine gap; at $\varepsilon{=}0.1$, only $1\%$.

\paragraph{Beyond symmetry.}
Without uniform incorrect-action probabilities, each incorrect action receives a different gate, so $\mathbb{E}[g_{\mathrm{DG}}]$ is no longer exactly collinear with $\nabla J$.
However, tail suppression still holds: $w_-(a) \le \pi(a)^{b/\eta}$, so rare actions are damped more aggressively than under PG.
As the policy concentrates ($\varepsilon \to 0$), the directional bias vanishes and the variance bound tightens (Appendix~\ref{app:nonsymmetric}).

We validate this picture with $K{=}100$, $B{=}100$, $\alpha{=}0.1$, and $\eta{=}1$, averaged over 100 seeds.
Despite identical step magnitudes, DG converges faster (Figure~\ref{fig:bandit}a) and maintains lower misalignment throughout training (Figure~\ref{fig:bandit}b).
Late in training, incorrect actions become rare but each delivers a large perpendicular kick; PG's misalignment rebounds while DG's stays suppressed.

\begin{figure}[ht!]
\centering
\vspace{-1mm}
\begin{subfigure}[t]{0.48\columnwidth}
    \centering
    \includegraphics[width=\linewidth]{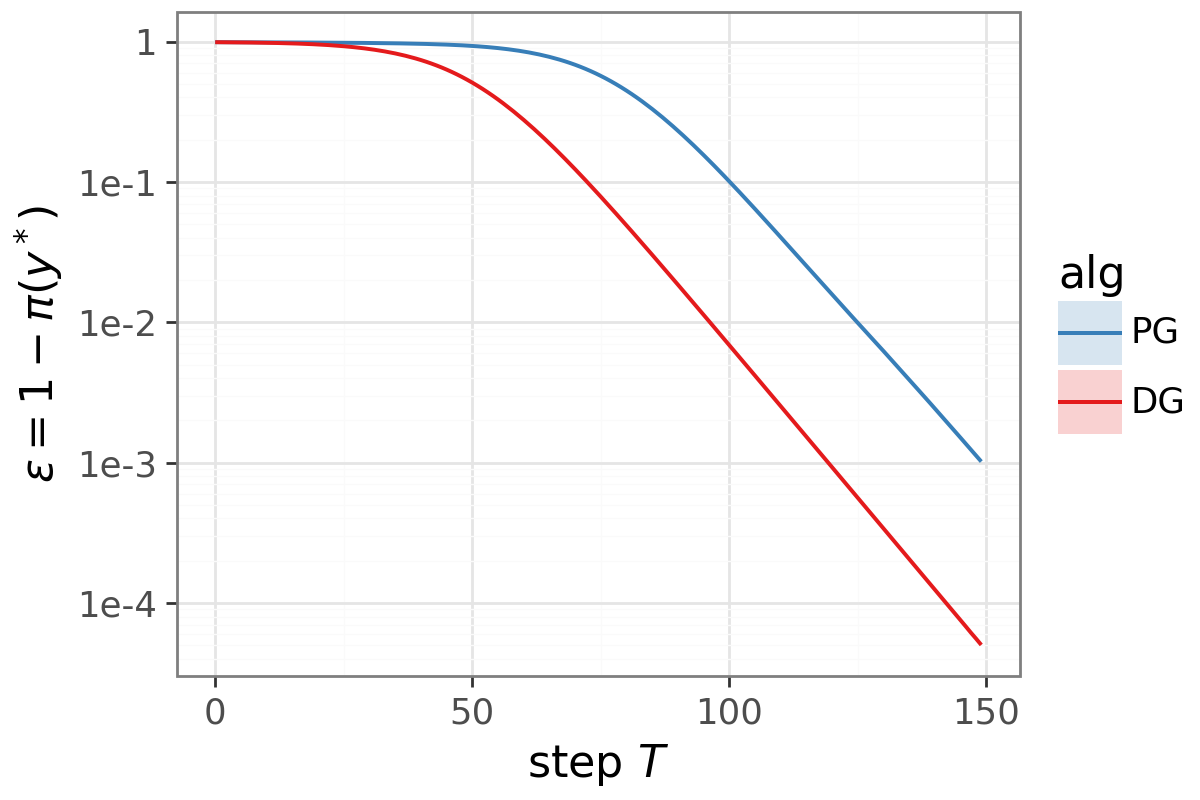}
    \vspace{-4mm}
    \caption{Error $\varepsilon = 1 - \pi(y^*)$.}
    \label{fig:bandit_conv}
\end{subfigure}
\hfill
\begin{subfigure}[t]{0.48\columnwidth}
    \centering
    \includegraphics[width=\linewidth]{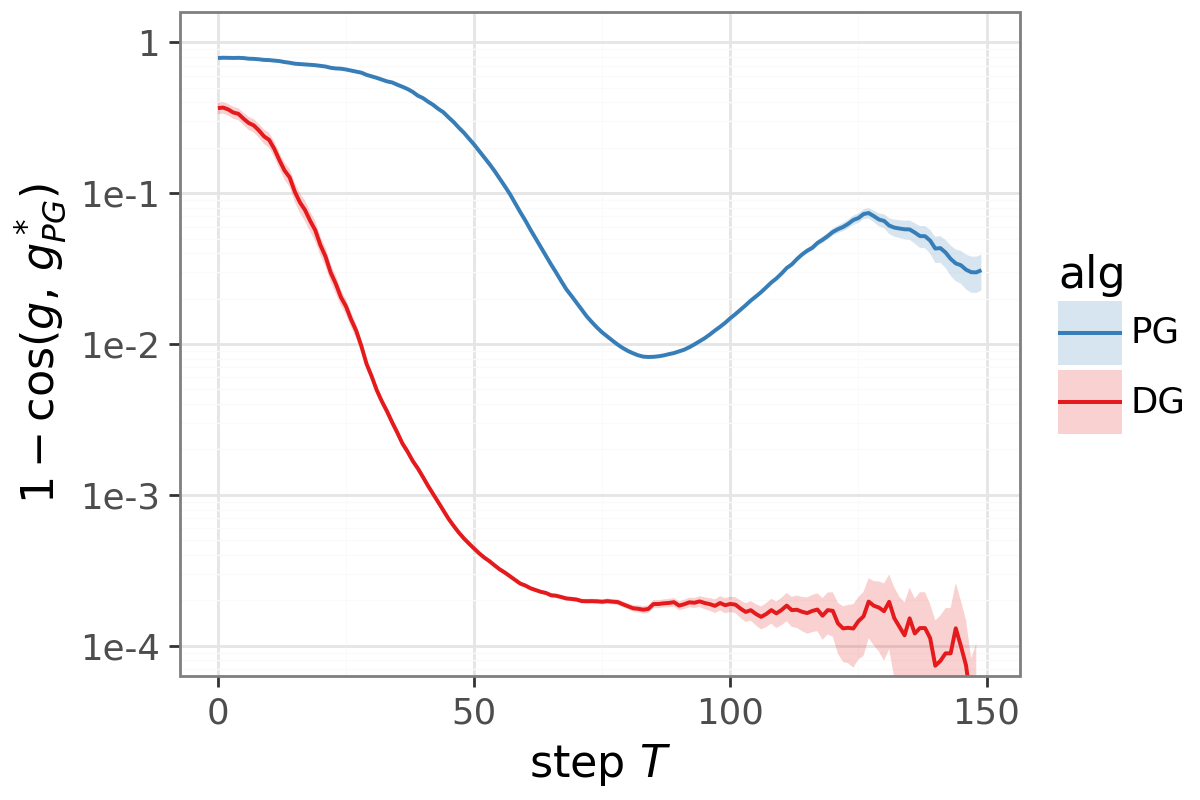}
    \vspace{-4mm}
    \caption{Misalignment $1 - \cos(g,\, g^*_{\mathrm{PG}})$.}
    \label{fig:bandit_cos}
\end{subfigure}
\vspace{-2mm}
\caption{Symmetric bandit, normalized steps ($K{=}100$, $B{=}100$, $\alpha{=}0.1$).}
\vspace{-1mm}
\label{fig:bandit}
\end{figure}

\subsection{Multiple contexts: directional improvement}
\label{subsec:direction}

For a single context, the reward and cross-entropy oracles coincide (Section~\ref{subsec:variance}), so DG can only reduce variance.
In practice, however, a single gradient step must improve many contexts at once: each MNIST image is a separate $K{=}10$ classification problem, and the update must allocate learning across all of them.
With $N$ independent contexts at each step (a \emph{contextual bandit}), the two oracles diverge.
PG allocates gradient budget proportional to $p_n$, the probability of the correct action in context $n$; CE allocates equally, making direction itself a degree of freedom.
MNIST confirmed this: DG surpasses PG's performance floor even at large $S$ (Figure~\ref{fig:mnist_limit}).

Formally, suppose the agent faces $N$ independent contexts at each step, sums their gradients, and takes a single normalized step.
Each context $n$ contributes an orthogonal gradient component $v_n$; let $p_n := \pi_n(y_n)$ denote the probability of the correct action.
With baseline $b = 0$, only correct actions contribute, and the three gradient directions differ only in how they weight contexts:%
\footnote{With baseline $b > 0$, DG additionally suppresses incorrect-action variance (Prop.~\ref{prop:variance}). Setting $b = 0$ isolates the cross-context reweighting for this subsection.}
\begin{align}
\label{eq:three_directions}
g^*_{\mathrm{CE}} &= \textstyle\sum_n v_n  &&\text{(cross-entropy: equal weight),} \nonumber\\[2pt]
g^*_{\mathrm{PG}} &= \textstyle\sum_n p_n \, v_n  &&\text{(PG: weight $\propto p_n$),}\\[2pt]
\mathbb{E}[g_{\mathrm{DG}}] &= \textstyle\sum_n p_n \, \sigma\!\big(({-}\log p_n)/\eta\big) \, v_n  &&\text{(DG: weights compressed by gate).} \nonumber
\end{align}
PG weights each context by $p_n$, so well-solved contexts dominate the gradient.
The gate $\sigma((-\log p_n)/\eta)$ is larger when $p_n$ is small and smaller when $p_n$ is large, partially cancelling the $p_n$ weighting and redistributing budget toward hard contexts (Figure~\ref{fig:mnist_grad_ce}).

These three directions are not arbitrary.
Under normalized steps, $c_n \propto p_n$ is greedy-optimal for $\Delta(\sum p_n)$ and $c_n \propto 1$ for $\Delta(\sum \log p_n)$ (Lemma~\ref{lem:greedy}).
PG's direction is myopically optimal but self-reinforcing: since $\nabla p_n = p_n v_n$, improving an easy context makes its gradient larger, attracting even more budget next step.
DG moves weights toward $c_n \propto 1$, rebalancing budget to hard contexts where each step yields more long-run progress.
As with the Kelly criterion~\citep{kelly1956new}, greedy single-step optimality does not imply optimal long-run compounding.

\begin{lemma}[Greedy directions under normalized steps]
\label{lem:greedy}
Under a normalized step with $g = \sum_n c_n v_n$:
the maximizer of $\Delta(\sum p_n)$ is $c_n \propto p_n$, and the maximizer of $\Delta(\sum \log p_n)$ is $c_n \propto 1$.
\end{lemma}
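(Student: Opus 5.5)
We work to first order in the step size $\alpha$. The normalized update is $\Delta z = \alpha g/\|g\|$ with $g = \sum_n c_n v_n$. Each objective changes by $\langle \nabla F, \Delta z\rangle + O(\alpha^2)$, so the greedy problem is to maximize $\langle \nabla F, g\rangle/\|g\|$ over the coefficients $c$. By Cauchy--Schwarz, this is maximized exactly when $g$ is a positive multiple of $\nabla F$, restricted to the span of $\{v_n\}$. So the plan has two steps: compute the gradients of the two objectives in the $v_n$ basis, then read off the coefficients.

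\textbf{Step 1: compute the gradients.} Context $n$ has its own logits $z_n$, and contexts are independent, so the blocks are disjoint. With $v_n := \phi_n(y_n) = e_{y_n} - \pi_n$ embedded in block $n$, the $v_n$ are mutually orthogonal, which is the orthogonality assumption of the section. The score identity gives $\nabla_{z} \log p_n = v_n$. Hence $\nabla \sum_n \log p_n = \sum_n v_n$, and $\nabla p_n = p_n v_n$ (the fact already noted above), so $\nabla \sum_n p_n = \sum_n p_n v_n$.

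\textbf{Step 2: identify the maximizers.} Using orthogonality,
\[
\frac{\langle \sum_n d_n v_n, \sum_n c_n v_n\rangle}{\|\sum_n c_n v_n\|} = \frac{\sum_n d_n c_n \|v_n\|^2}{\bigl(\sum_n c_n^2\|v_n\|^2\bigr)^{1/2}}.
\]
Substitute $u_n = c_n\|v_n\|$ and $a_n = d_n\|v_n\|$. The expression becomes $\langle a,u\rangle/\|u\|$, which Cauchy--Schwarz maximizes exactly at $u \propto a$ with a positive constant, that is, at $c_n \propto d_n$. Taking $d_n = p_n$ gives $c_n \propto p_n$, and taking $d_n = 1$ gives $c_n \propto 1$. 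This proves the lemma.

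\textbf{Main obstacle.} The argument relies on the first-order (small-$\alpha$) reading of "greedy." I would state explicitly that $\Delta$ means the first-order change, or equivalently the directional derivative along the unit step. For finite $\alpha$, the curvature of $p_n$ and $\log p_n$ introduces $O(\alpha^2)$ corrections that can shift the exact maximizer. I would also note the one nondegeneracy condition the argument needs: every $v_n \neq 0$, which holds whenever $p_n < 1$. Otherwise the corresponding coefficient is irrelevant and can be set arbitrarily.
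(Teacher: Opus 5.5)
Your proposal is correct and follows the paper's proof step for step. Both compute $\nabla p_n = p_n v_n$ and $\nabla \log p_n = v_n$, use orthogonality to write the normalized first-order gain as $\sum_n d_n c_n \|v_n\|^2 / (\sum_n c_n^2 \|v_n\|^2)^{1/2}$, and apply Cauchy--Schwarz after rescaling by $\|v_n\|$. The paper leaves two things implicit that you state explicitly: that $\Delta$ means the first-order change, and that every $v_n \neq 0$.
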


\begin{proposition}[Directional improvement toward cross-entropy]
\label{prop:direction}
Let $N = 2$ with $p_1 \neq p_2$ and $\eta > 1/2$.
Then 
$
\cos\!\big(\mathbb{E}[g_{\mathrm{DG}}],\; g^*_{\mathrm{CE}}\big)
\;>\;
\cos\!\big(g^*_{\mathrm{PG}},\; g^*_{\mathrm{CE}}\big).
$
DG's expected direction is strictly closer to the cross-entropy oracle than PG's, for any batch size, including the limit $B \to \infty$.
\end{proposition}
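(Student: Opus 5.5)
The plan is to reduce both cosines to a single monotone function of the ratio between the two context weights, and then show that the DG gate moves this ratio strictly toward $1$ without overshooting it. I take the setup of \eqref{eq:three_directions} literally: the components $v_1, v_2$ are orthogonal, and I assume they have equal norm, which I normalize to $1$. This equal-norm condition is an assumption I am adding; without it, $g^*_{\mathrm{CE}}$ would not weight the contexts equally in any geometric sense.

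\textbf{Step 1: cosine as a function of the weight ratio.} For any direction $g = c_1 v_1 + c_2 v_2$ with $c_1, c_2 > 0$, orthonormality gives
\[
\cos(g, g^*_{\mathrm{CE}}) = \frac{c_1 + c_2}{\sqrt{2}\,\sqrt{c_1^2 + c_2^2}} = h(r),
\qquad
r = \frac{\min(c_1,c_2)}{\max(c_1,c_2)} \in (0,1],
\qquad
h(r) = \frac{1+r}{\sqrt{2(1+r^2)}}.
\]
A direct derivative shows $h'(r) \propto 1 - r$, so $h$ is strictly increasing on $(0,1]$. It therefore suffices to show that DG's weight ratio $r_{\mathrm{DG}}$ satisfies $r_{\mathrm{PG}} < r_{\mathrm{DG}} \le 1$.

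\textbf{Step 2: the ratio moves toward $1$.} Write $k = 1/\eta$ and $p^k$ for $p^{1/\eta}$. The DG weight is
\[
f(p) = p\,\sigma\!\big(-\log p/\eta\big) = \frac{p}{1 + p^{k}}.
\]
Without loss of generality take $p_1 < p_2$. Then
\[
\frac{f(p_1)}{f(p_2)} = \frac{p_1}{p_2}\cdot\frac{1 + p_2^{k}}{1 + p_1^{k}} > \frac{p_1}{p_2} = r_{\mathrm{PG}},
\]
because $p_2^{k} > p_1^{k}$.

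\textbf{Step 3: no overshoot.} This is the only step that is not immediate, and it is where $\eta > 1/2$ enters. I need $f(p_1) < f(p_2)$, so that the ratio stays below $1$ and the ordering of the two weights is not inverted; an inversion could otherwise push the effective ratio $\min/\max$ below $r_{\mathrm{PG}}$. Differentiating,
\[
f'(p) = \frac{1 + (1-k)\,p^{k}}{(1+p^{k})^2}.
\]
The numerator is positive for all $p \in (0,1]$ exactly when its value at $p = 1$, namely $2 - k$, is positive, i.e.\ $k < 2$, i.e.\ $\eta > 1/2$. So $f$ is strictly increasing on $(0,1]$, and hence $r_{\mathrm{DG}} = f(p_1)/f(p_2) \in (r_{\mathrm{PG}}, 1)$.

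\textbf{Conclusion.} By Step 1, $\cos(\mathbb{E}[g_{\mathrm{DG}}], g^*_{\mathrm{CE}}) = h(r_{\mathrm{DG}}) > h(r_{\mathrm{PG}}) = \cos(g^*_{\mathrm{PG}}, g^*_{\mathrm{CE}})$. The statement concerns $\mathbb{E}[g_{\mathrm{DG}}]$, which the batch mean estimates without bias, so the inequality holds for any batch size and persists as $B \to \infty$.
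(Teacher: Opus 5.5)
There is a genuine gap in Step 1, where you assume $\|v_1\|=\|v_2\|$. Your Steps 2 and 3 are correct and match the paper's ratio-compression argument: $f(p)=p/(1+p^{1/\eta})$ is increasing for $\eta>1/2$ while the gate is decreasing, so $r_{\mathrm{DG}}$ lies strictly between $r_{\mathrm{PG}}$ and $1$.

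The equal-norm assumption is not a normalization you are free to make, and your reason for it is mistaken. $g^*_{\mathrm{CE}}$ is the specific vector $v_1+v_2$: equal \emph{coefficients} $c_n\propto 1$, as in Lemma~\ref{lem:greedy}. The proposition is about cosines with that vector whatever the norms are. Moreover, your assumption excludes the natural instance of the setting. With $v_n=e_{y_n}-\pi_n$ in context $n$'s logit block and incorrect-action mass spread uniformly, $\|v_n\|^2=(1-p_n)^2K/(K-1)$, so $p_1\neq p_2$ forces $\|v_1\|\neq\|v_2\|$. Once the norms differ, your reduction to a function of $\min(c_1,c_2)/\max(c_1,c_2)$ breaks. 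With $a_n:=\|v_n\|^2$ the cosine is
\[
C(r)=\frac{r a_1+a_2}{\sqrt{(r^2a_1+a_2)(a_1+a_2)}},\qquad r=c_1/c_2,
\]
which is not invariant under $r\mapsto 1/r$.

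The repair is short, and it is what the paper does in Lemma~\ref{lem:cosine_mono}. Work with the oriented ratio $r=c_1/c_2$ rather than $\min/\max$, and note that
\[
\frac{d}{dr}\,\frac{(ra_1+a_2)^2}{r^2a_1+a_2}=\frac{2a_1a_2(1-r)(ra_1+a_2)}{(r^2a_1+a_2)^2}.
\]
So for any $a_1,a_2>0$, $C$ is strictly increasing on $(0,1]$ and strictly decreasing on $[1,\infty)$. With $p_1<p_2$, your Steps 2 and 3 give $r_{\mathrm{PG}}<r_{\mathrm{DG}}<1$, both on the increasing branch, hence $C(r_{\mathrm{DG}})>C(r_{\mathrm{PG}})$.

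In the general-norm setting your no-overshoot step carries even more weight than in your version. Since $C(r)\neq C(1/r)$, a DG ratio landing on the other side of $1$ could not be compared with $r_{\mathrm{PG}}$ by monotonicity alone. Keeping $r_{\mathrm{DG}}$ on the same side of $1$ as $r_{\mathrm{PG}}$ is exactly what $\eta>1/2$ buys.
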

\textit{Proof sketch.}
The cosine between $c_1 v_1 + c_2 v_2$ and $v_1 + v_2$ is maximized at ratio $r = c_1/c_2 = 1$ (Appendix~\ref{app:prop_b}).
The DG weight function $h(p) = p\,\sigma((-\log p)/\eta)$ is increasing for $\eta > 1/2$, and the sigmoid factor is decreasing in $p$, so DG compresses PG's ratio: $1 < r_{\mathrm{DG}} < r_{\mathrm{PG}}$, achieving higher cosine.
Appendix~\ref{app:prop_b_general} extends to arbitrary $N$ via a path argument; Figure~\ref{fig:multi_context} confirms the compression at $N{=}100$.

We validate with $N{=}100$ independent contexts, $K{=}10$ actions each, $\mathcal{N}(0,1)$ logit init, $\alpha{=}0.1$, averaged over 100 seeds.
At each step we compute \emph{exact} population gradients (no sampling noise), so PG follows $g^*_{\mathrm{PG}}$ exactly.
Despite this, DG converges faster (Figure~\ref{fig:multi_context}a): its rebalancing toward hard contexts compounds over many steps.
The cosine gap (Figure~\ref{fig:multi_cos}) persists throughout training, confirming a directional effect rather than variance reduction.

\begin{figure}[ht]
\centering
\begin{subfigure}[t]{0.48\columnwidth}
    \centering
    \vspace{-1mm}
    \includegraphics[width=\linewidth]{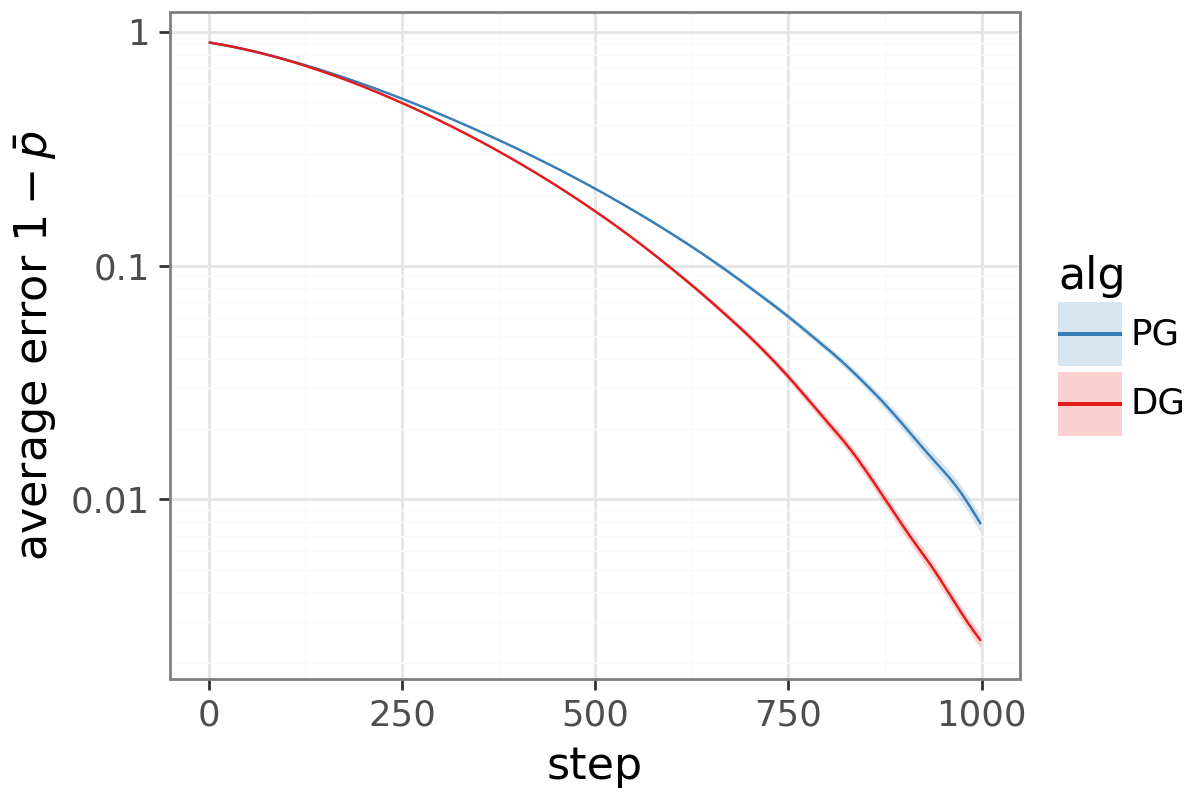}
    \vspace{-5mm}
    \caption{Average error $1 - \bar{p}$.}
    \label{fig:multi_reward}
\end{subfigure}
\hfill
\begin{subfigure}[t]{0.48\columnwidth}
    \centering
    \vspace{-1mm}
    \includegraphics[width=\linewidth]{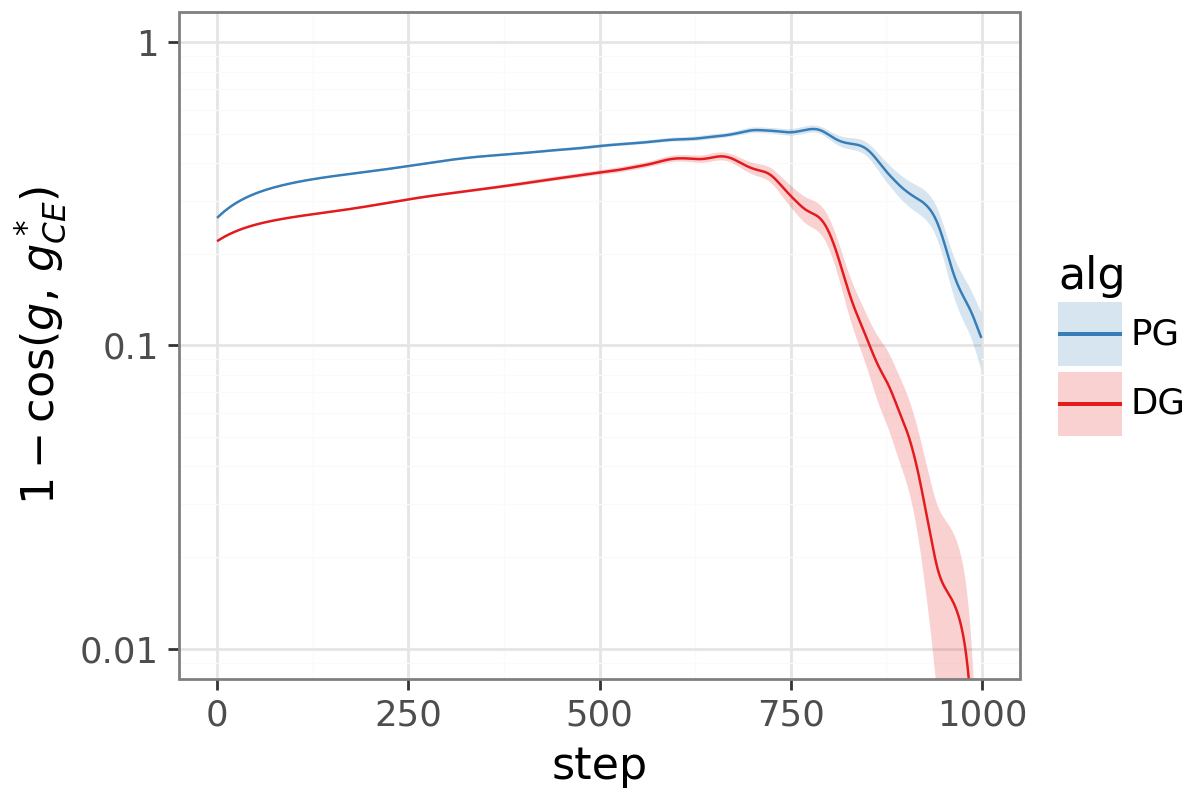}
    \vspace{-5mm}
    \caption{Misalignment to $g^*_{\mathrm{CE}}$.}
    \label{fig:multi_cos}
\end{subfigure}
\vspace{-1mm}
\caption{$N{=}100$ independent contexts, $K{=}10$ actions, $\mathcal{N}(0,1)$ init, exact gradients.
DG converges faster~(a) by rebalancing gradient budget to hard contexts~(b).}
\vspace{-1mm}
\label{fig:multi_context}
\end{figure}

The two mechanisms map onto a bias--variance decomposition.
For a single context, $g^*_{\mathrm{PG}} = g^*_{\mathrm{CE}}$, so DG's advantage is pure variance reduction and vanishes as $B \to \infty$ (Prop.~\ref{prop:variance}).
Across multiple contexts, Proposition~\ref{prop:direction} shows something stronger: DG improves the expected update itself, not just its noisy estimate.
The induced bias is beneficial, rotating the gradient toward cross-entropy and rebalancing learning toward hard contexts.
This means DG's advantage is not a finite-sample artifact: even with exact gradients, standard policy gradients can allocate update budget poorly, and DG corrects that defect.

\section{Transformer Sequence Modeling}
\label{sec:transformer}

The MNIST and bandit experiments isolated DG's effect on update geometry in single-step problems.
We now test the same mechanism in a sequential setting with memory, autoregressive generation, and temporal credit assignment.

We introduce the \textsc{Token Reversal} task (Figure~\ref{fig:binary_reverse}).
An input sequence $x_{1:H}$ of length $H$ is drawn uniformly from a vocabulary of size $M$, and a decoder-only Transformer autoregressively predicts the reverse sequence, $\hat{y}_t = x_{H-t+1}$.
This is a controlled model of token-space reasoning: the learner must attend to the input, retain sequence structure, and generate a coherent output autoregressively.
Because the output space has size $M^H$, fully correct sequences become exponentially rarer as horizon and vocabulary grow.
We report \emph{sequence error}, the fraction of output tokens predicted incorrectly; full details are in Appendix~\ref{app:scaling}.

We compare DG against REINFORCE, PPO~\citep{schulman2017proximal}, and PMPO~\citep{abdolmaleki2024preference}, each tuned over its key hyperparameters (Appendix~\ref{app:scaling_tuning}).
All methods share the same Transformer architecture, optimizer, and compute budget; results average over 30 seeds.

On a default configuration ($H{=}10$, $M{=}2$, $K{=}1{,}000$ episodes), Figure~\ref{fig:token_regret} shows that DG converges faster and reaches lower sequence error than all baselines.
Multiplicative advantage $\times$ surprisal gating provides a qualitatively different signal from additive entropy bonuses or trust-region constraints.
UCB-style mixtures $(1{-}\alpha)U + \alpha\ell$ improve over REINFORCE but do not match DG (Appendix~\ref{app:scaling_alternative}).

\noindent
\begin{minipage}[t]{0.48\columnwidth}
  \centering
  \vspace{0pt}
  \vspace{-1mm}
  \includegraphics[width=\linewidth]{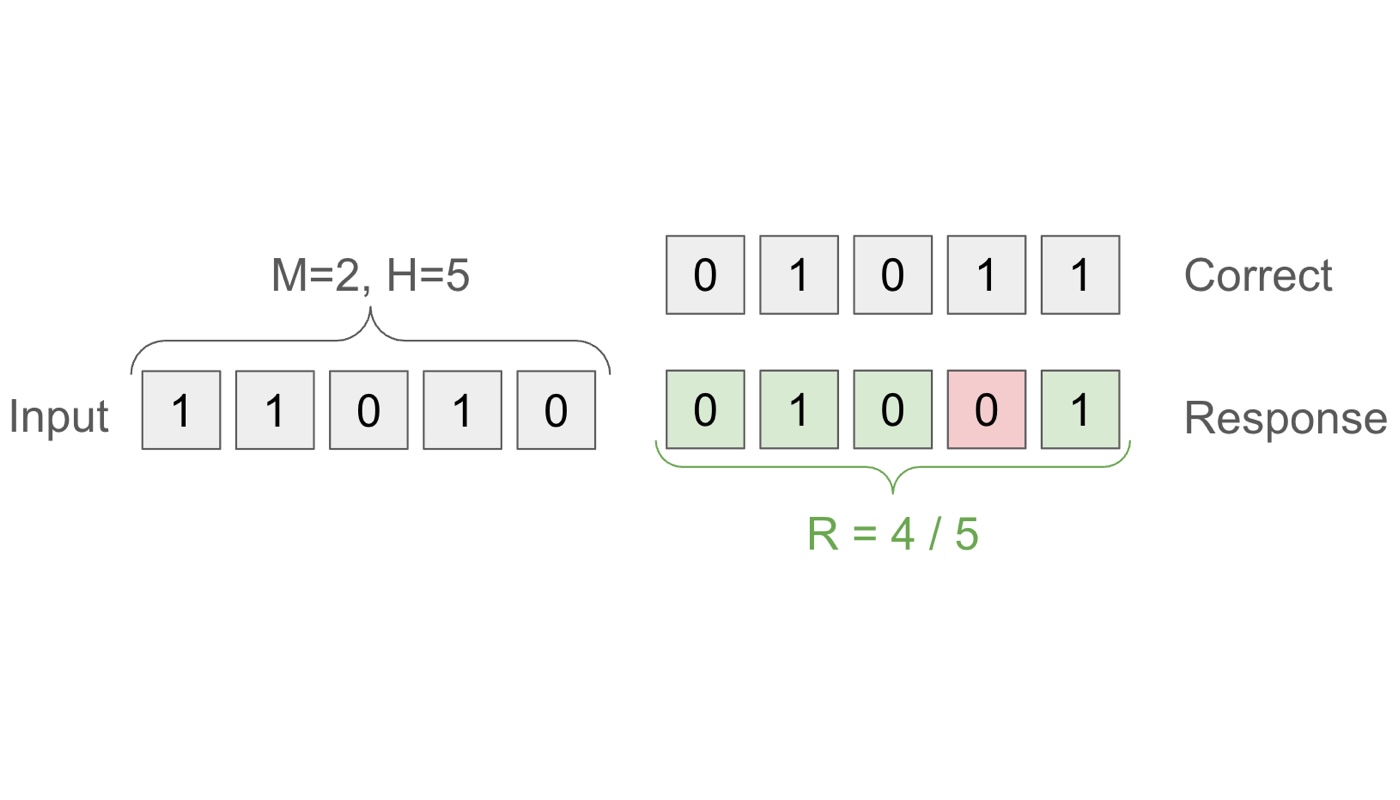}
  \vspace{-5mm}
  \captionof{figure}{Token Reversal task.}
  \vspace{-1mm}
  \label{fig:binary_reverse}
\end{minipage}
\hfill
\begin{minipage}[t]{0.48\columnwidth}
  \centering
  \vspace{0pt}
  \vspace{-1mm}
  \includegraphics[width=\linewidth]{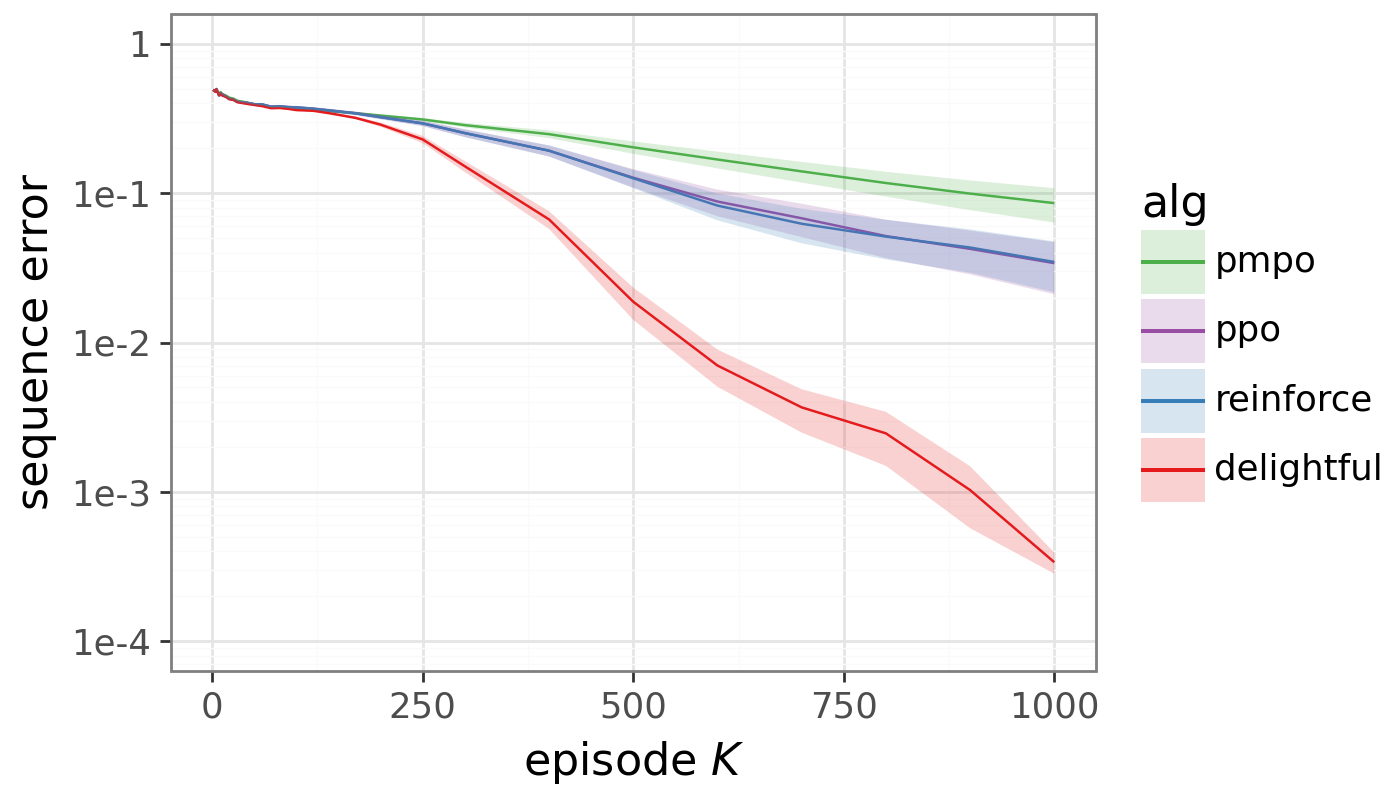}
  \vspace{-5mm}
  \captionof{figure}{Sequence error on Token Reversal.}
  \vspace{-1mm}
  \label{fig:token_regret}
\end{minipage}

To test whether DG's advantage grows with difficulty, we increase $H$ and $M$ while holding the training budget fixed at $K{=}10\text{k}$ episodes.
Figure~\ref{fig:scaling_laws}a--b shows final sequence error; Figure~\ref{fig:scaling_laws}c--d shows cumulative error on log-log axes.
A sharp contrast emerges: baseline performance deteriorates rapidly beyond a complexity threshold, whereas DG degrades much more gracefully.
The log-log plots reveal approximate power-law scaling, with DG achieving a smaller exponent; its advantage compounds with difficulty.

\begin{figure*}[ht]
\centering
\begin{subfigure}[b]{0.24\textwidth}
    \includegraphics[width=\linewidth]{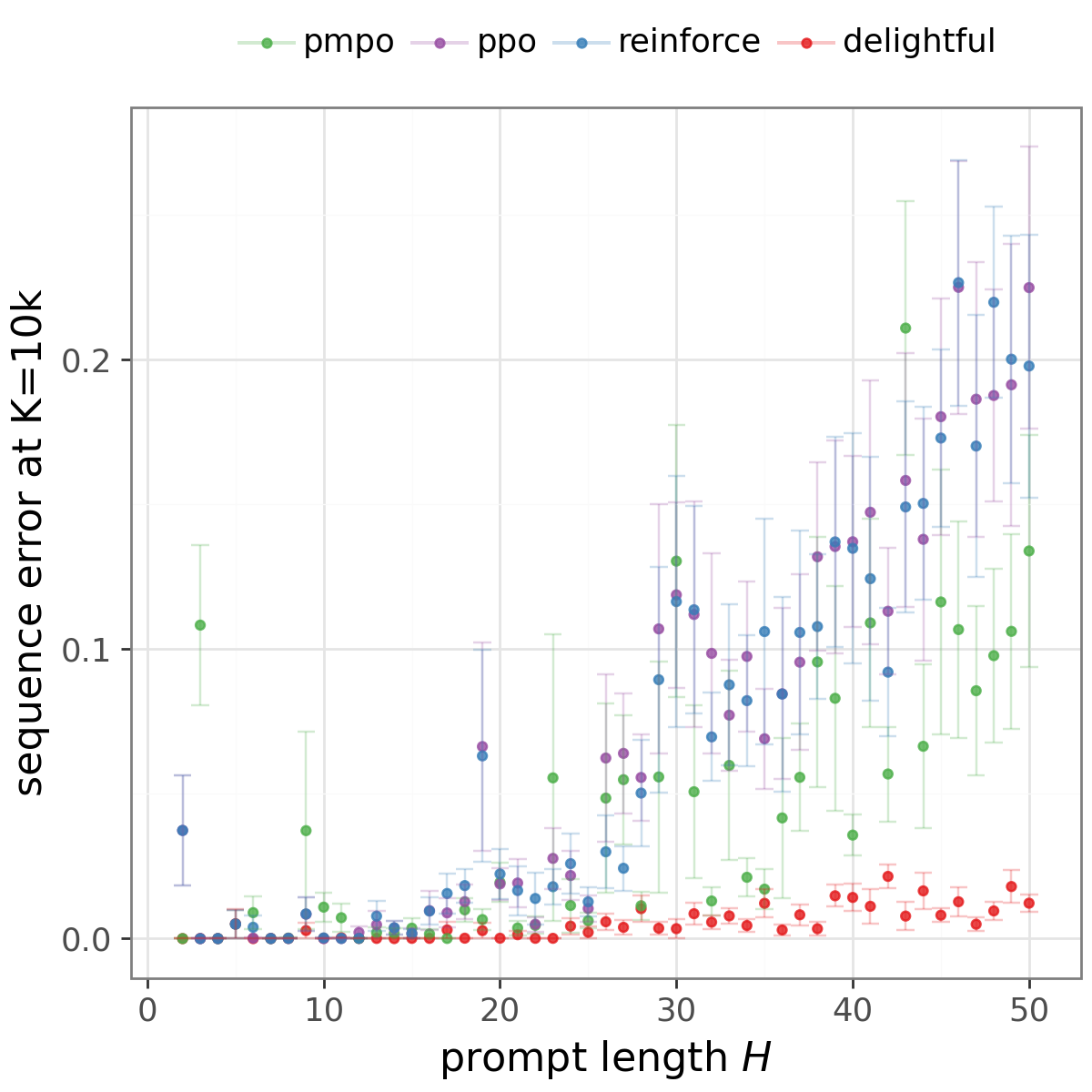}
    \caption{Final error vs.\ $H$}
\end{subfigure}
\hfill
\begin{subfigure}[b]{0.24\textwidth}
    \includegraphics[width=\linewidth]{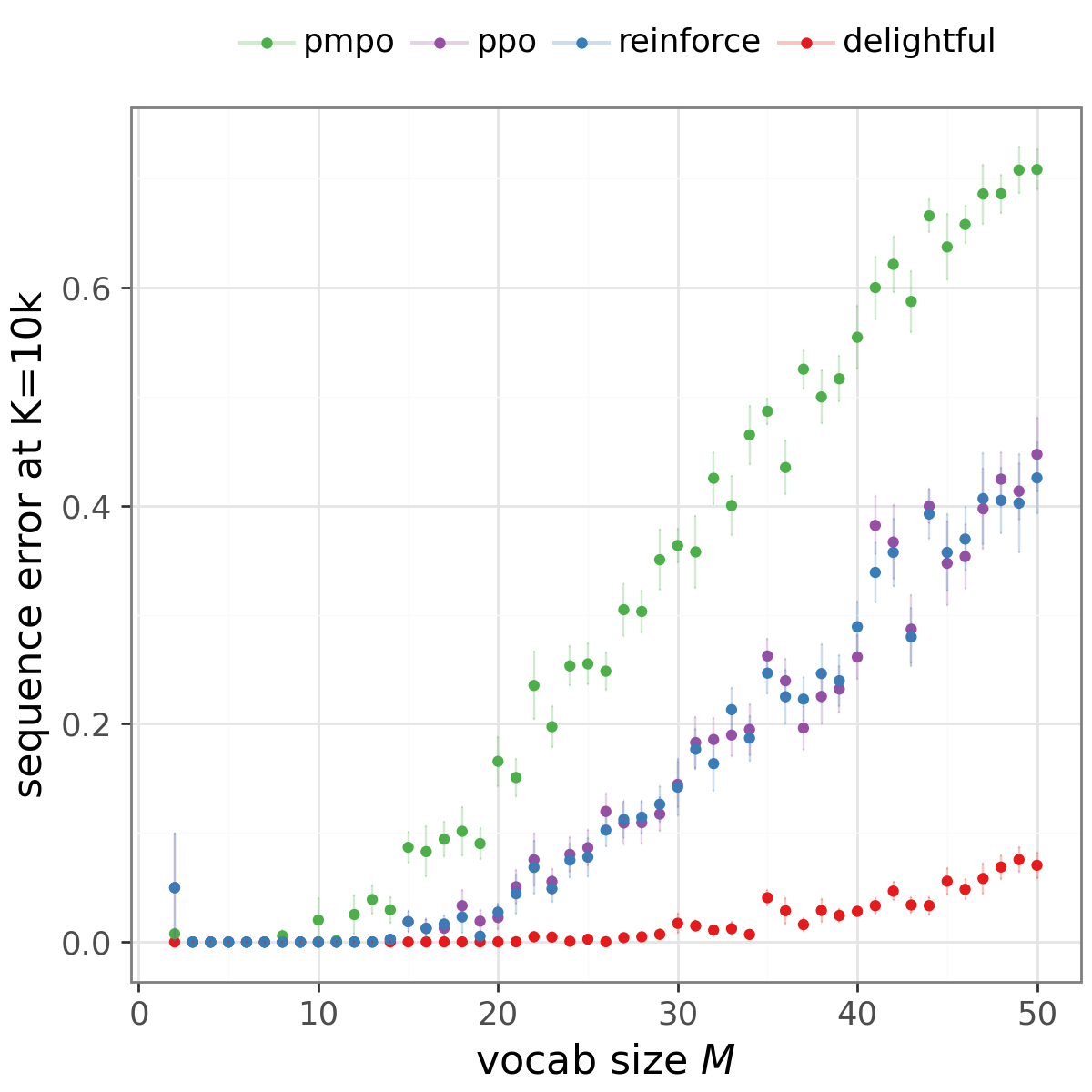}
    \caption{Final error vs.\ $M$}
\end{subfigure}
\hfill
\begin{subfigure}[b]{0.24\textwidth}
    \includegraphics[width=\linewidth]{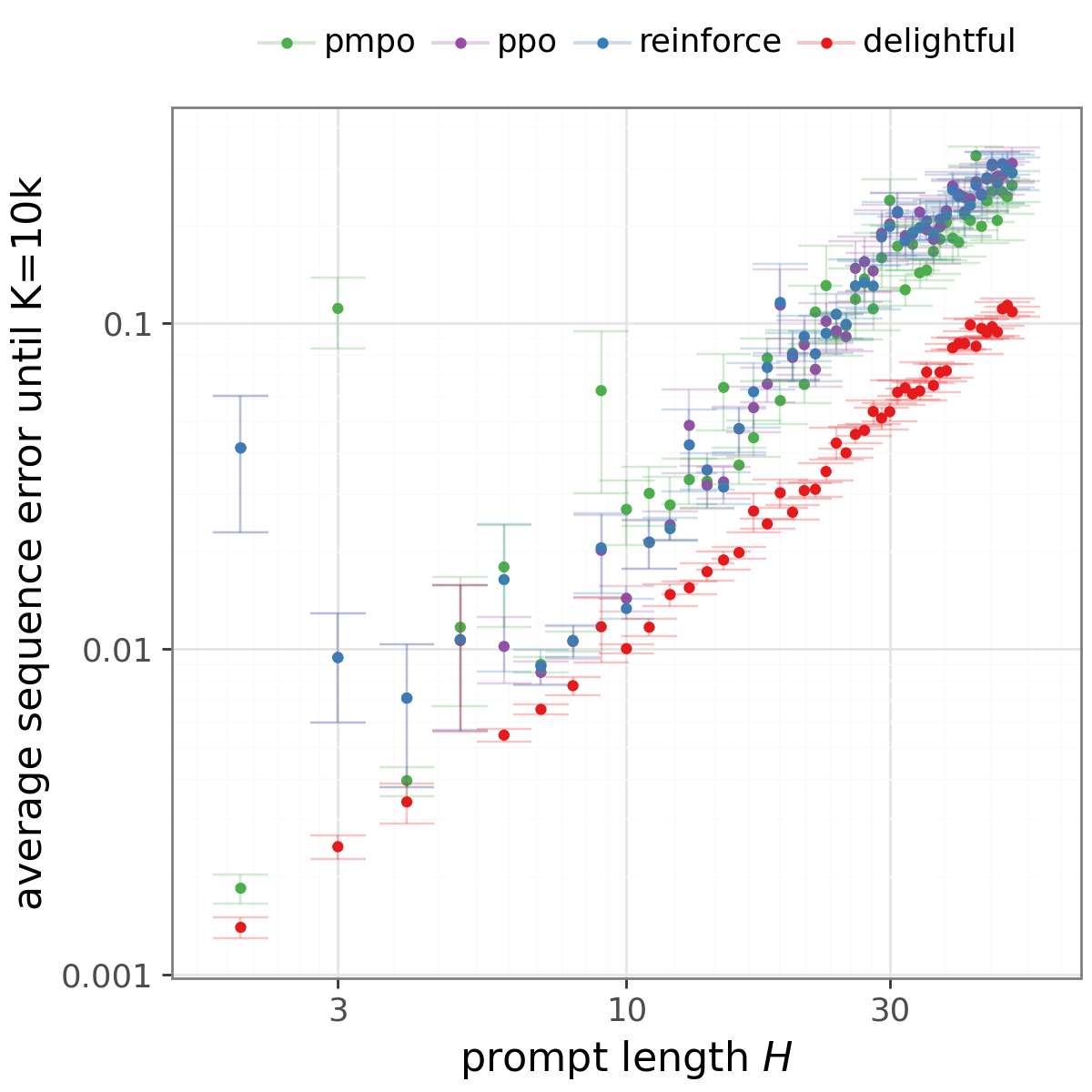}
    \caption{Avg error vs.\ $H$}
\end{subfigure}
\hfill
\begin{subfigure}[b]{0.24\textwidth}
    \includegraphics[width=\linewidth]{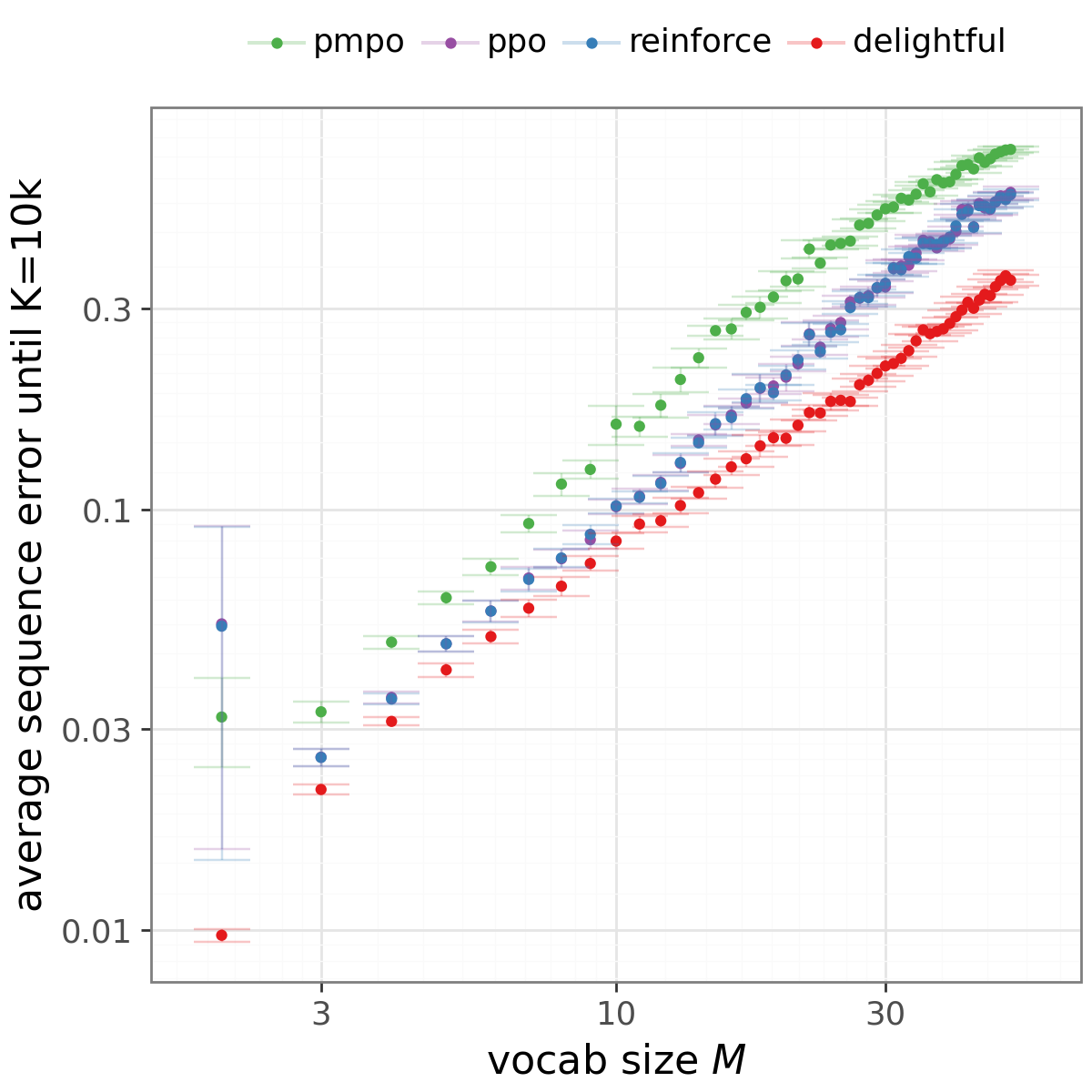}
    \caption{Avg error vs.\ $M$}
\end{subfigure}
\caption{Scaling with task complexity.
\textbf{(a,b)}~Final sequence error after $K{=}10\text{k}$ episodes; baselines degrade beyond a threshold while DG degrades more gracefully.
\textbf{(c,d)}~Log-log cumulative error reveals approximate power-law scaling; DG achieves a smaller exponent.}
\label{fig:scaling_laws}
\end{figure*}

The scaling advantage reflects the mechanism identified in the bandit analysis: as $M^H$ grows, reallocating gradient weight from high-surprisal failures to high-surprisal successes becomes increasingly valuable.
Appendix~\ref{app:scaling_environment} tests eight task variants; DG leads in every configuration, with larger margins in harder settings.
If these trends transfer to larger-scale sequence generation, where long horizons and large vocabularies make correct outputs increasingly rare, the advantage of rebalancing gradient weight could be substantial.

\section{Continuous Control}
\label{sec:control}

Token Reversal showed that DG's advantage grows with task complexity in a discrete sequential setting.
We now test whether the same mechanism transfers to continuous control, where the policy defines an action density rather than a discrete distribution.

We evaluate on the DeepMind Control Suite~\citep{tassa2018deepmind}: 28 environments, 10 million steps, and 3 seeds per environment.
All methods share the same actor--critic architecture, critic algorithm (Retrace~\citep{munos2016safe} with replay), and optimizer; only the policy update rule differs.
DG is applied only to the actor, reweighting on-policy score terms without importance ratios.
For continuous actions, delight uses clipped log-densities (Section~\ref{sec:estimator}).

DG matches or exceeds the best baseline on a majority of environments and avoids catastrophic failures: PPO collapses on \texttt{humanoid:run}, while REINFORCE fails on \texttt{hopper:hop}.
On \texttt{humanoid:run}, DG discovers a successful gait while all baselines plateau (Figure~\ref{fig:reward_humanoid_run}).
Across all 84 runs, DG is never the worst method (Figure~\ref{fig:reward_complete}) and achieves the lowest average regret throughout training (Figure~\ref{fig:regret_ave_control}).
It also remains competitive with highly tuned MPO~\citep{abdolmaleki2018maximum} and SAC~\citep{haarnoja2018soft} despite using no task-specific tuning (Appendix~\ref{app:control_sota}).
These results show that delight-based reweighting remains effective with continuous action densities.

\begin{figure}[ht!]
\centering
\vspace{-1mm}
\includegraphics[width=\columnwidth]{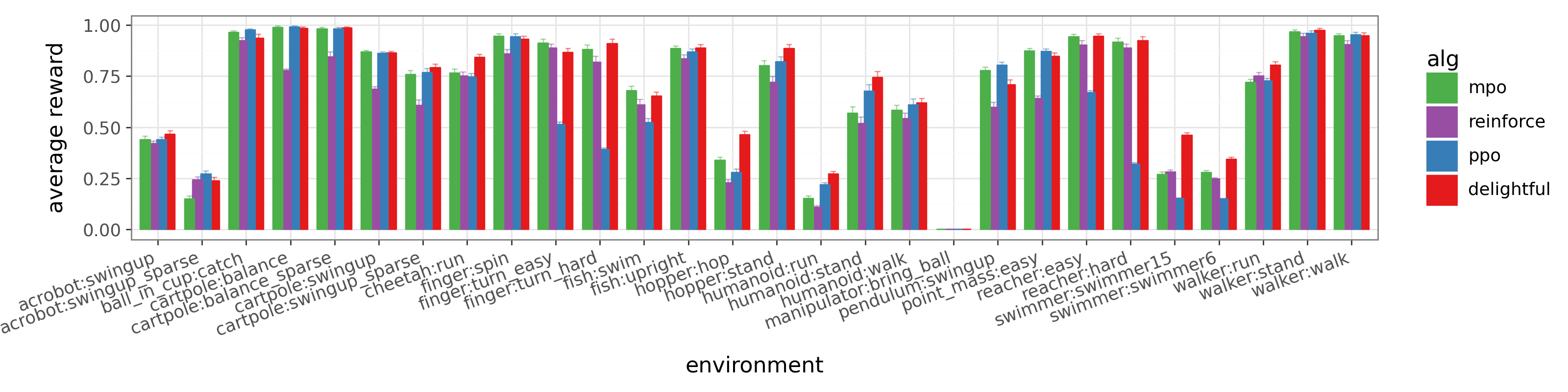}
\vspace{-5mm}
\caption{Average reward across 28 Control Suite environments.
DG (red) is never the worst method.}
\vspace{-1mm}
\label{fig:reward_complete}
\end{figure}

\noindent
\vspace{-2mm}
\begin{minipage}[t]{0.48\columnwidth}
  \centering
  \vspace{0pt}
  \includegraphics[width=\linewidth]{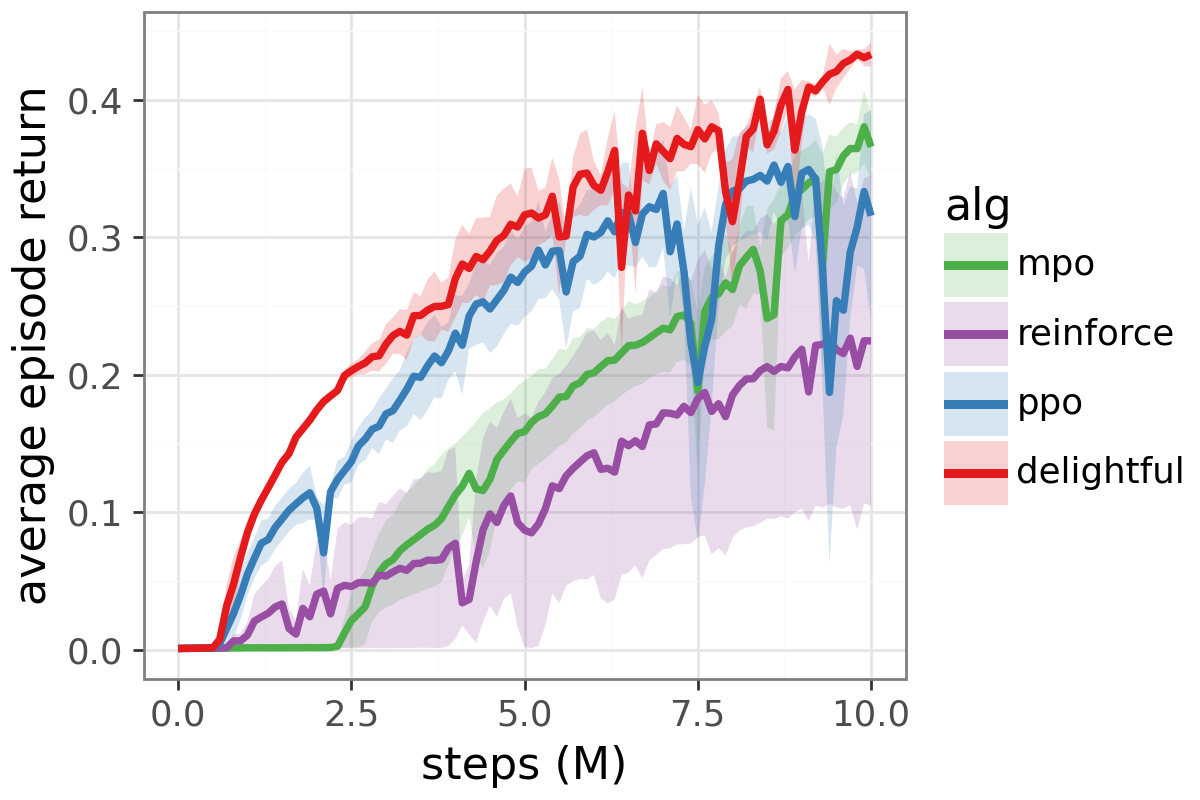}
  \vspace{-4mm}
  \captionof{figure}{Learning curve on \texttt{humanoid:run}.}
  \label{fig:reward_humanoid_run}
\end{minipage}
\hfill
\begin{minipage}[t]{0.48\columnwidth}
  \centering
  \vspace{0pt}
  \includegraphics[width=\linewidth]{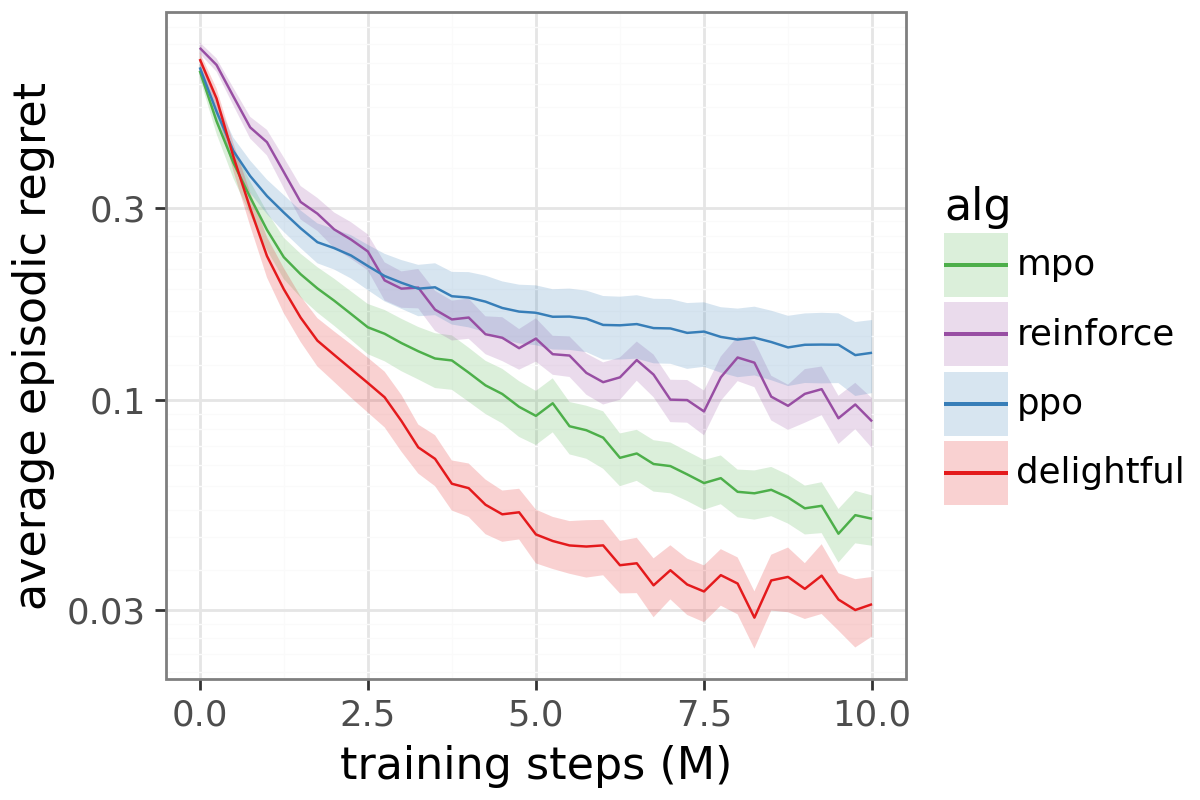}
  \vspace{-4mm}
  \captionof{figure}{Aggregate regret across 28 tasks.}
  \label{fig:regret_ave_control}
\end{minipage}

\section{Related Work}
\label{sec:related}

Many policy-gradient methods can be viewed through the effective score weight $\omega_t$ multiplying $\nabla_\theta \log \pi_\theta$.
This lens makes clear which methods distinguish rare from common actions and which do not (Figure~\ref{fig:compare_gradients}).
For standard PG, $\omega_t = U_t$ is linear in advantage and blind to action probability.
For DG, $\omega_t = \sigma(U_t \ell_t / \eta)\, U_t$ depends on both advantage and surprisal, allowing rare successes and rare failures to be treated asymmetrically.

\textbf{Trust-region methods.}
The natural policy gradient~\citep{kakade2001natural} and its trust-region successors TRPO, PPO~\citep{schulman2015trust,schulman2017proximal}, and GRPO~\citep{shao2024deepseekmath} constrain or precondition the update, keeping $\omega_t$ in a narrow band around $U_t$.
This limits large updates but also attenuates rare high-advantage events.
DG instead modulates $\omega_t$ using only the current policy and requires no importance ratios.

\textbf{Variance reduction.}
Leave-one-out baselines~\citep{kool2019buy}, generalized advantage estimation~\citep{schulman2016high}, and other control-variate methods reduce gradient variance without changing the expected direction.
DG's single-context mechanism (Prop.~\ref{prop:variance}) achieves a related effect through gating rather than control variates, but its cross-context rebalancing (Prop.~\ref{prop:direction}) changes the expected direction itself---an effect that persists even with perfect variance reduction.

\textbf{Advantage-weighted methods.}
AWR~\citep{peng2019advantage} and PMPO~\citep{abdolmaleki2024preference} set $\omega_t = f(U_t)$ for some increasing function of advantage, but remain blind to surprisal.
Gradient budget therefore continues to flow toward predictable actions until their advantage is driven toward zero.
DG makes $\omega_t$ depend on surprisal directly, so rare and common actions are weighted differently even at the same advantage.

\textbf{Entropy and intrinsic motivation.}
Entropy regularization~\citep{haarnoja2018soft} and curiosity-based methods~\citep{pathak2017curiosity} change the learning signal indirectly by modifying the reward or objective.
DG instead leaves the reward unchanged and modifies the policy-gradient coefficient itself, reallocating gradient budget toward surprising task-relevant outcomes.
Its distinct behavior relative to entropy bonuses is confirmed empirically in the MNIST experiments (Section~\ref{sec:mnist}).

\begin{figure}[ht]
\centering
\vspace{-2mm}
\includegraphics[width=\columnwidth]{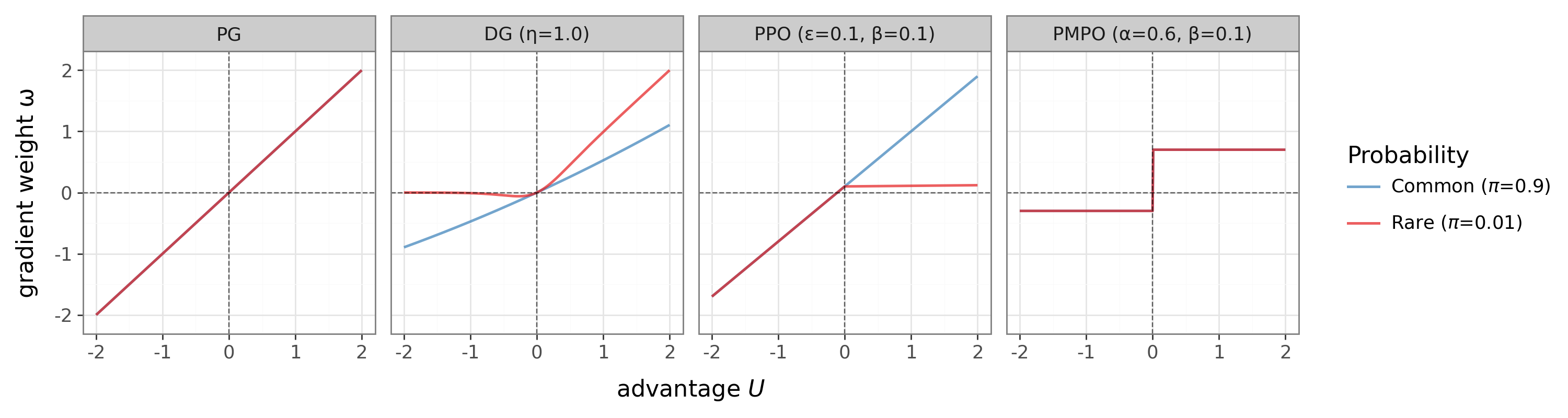}
\vspace{-6mm}
\caption{Score weight $\omega$ vs.\ advantage $U$ for common ($\pi{=}0.9$) and rare ($\pi{=}0.01$) actions.
DG treats rare successes and rare failures asymmetrically; PG and PMPO are blind to action probability, while PPO clips both tails through importance-ratio constraints.}
\label{fig:compare_gradients}
\end{figure}
\vspace{-2mm}

\section{Conclusion}
\label{sec:conclusion}
\vspace{-1mm}

This paper identifies a basic mismatch in standard policy-gradient learning.
In the K-armed bandits we analyze, rare failures inject disproportionate noise within a context, and the expected gradient over-allocates budget to contexts the policy already handles well.
MNIST, transformer, and control experiments suggest both effects persist beyond the tabular setting.
DG corrects both effects by gating each score term with delight, the product of advantage and surprisal, amplifying rare successes and suppressing rare failures.
The first effect, variance reduction, vanishes with infinite samples; the second, a beneficial bias toward the cross-entropy oracle, does not.
We characterize both analytically and confirm them empirically across bandits, MNIST, transformer sequence modeling, and continuous control.

More broadly, these results suggest that delight is a useful primitive for decision learning under evaluative feedback.
DG does more than reduce variance: it changes how finite update budget is allocated across samples and contexts.
From this perspective, the method reveals that standard policy gradients use a suboptimal weighting rule for learning from evaluative feedback.
Formal convergence guarantees remain open, as does the question of how far this mechanism transfers to sparse-reward settings, offline RL, and large-scale transformer training and RLHF.

\section*{Acknowledgements}
\vspace{-1mm}

We thank Ben Van Roy, Satinder Singh, Tor Lattimore, and Jincheng Mei for detailed reviews and feedback on earlier drafts.
John Aslanides, Yotam Doron, Georg Ostrovski, Hubert Soyer, and Blanca Huergo contributed to the codebase and helped shape the experimental infrastructure.
We are grateful to Raia Hadsell, Zoubin Ghahramani, Demis Hassabis, and Satinder Singh for fostering the research environment that made this work possible.
Finally, we thank Dan Zigmond and Barry Kerzin for inspiration from Buddhist philosophy and feedback on the wider research into delightful learning.

\bibliography{references}

@inproceedings{abdolmaleki2018maximum,
  author = {Abdolmaleki, Abbas and Springenberg, Jost Tobias and Tassa, Yuval and Munos, Remi and Heess, Nicolas and Riedmiller, Martin},
  title = {Maximum a Posteriori Policy Optimisation},
  booktitle = {International Conference on Learning Representations},
  year = {2018},
}

@article{abdolmaleki2024preference,
  author = {Abdolmaleki, Abbas and Piot, Bilal and Shahriari, Bobak and Springenberg, Jost Tobias and Hertweck, Tim and Joshi, Rishabh and Oh, Junhyuk and Bloesch, Michael and Lampe, Thomas and Heess, Nicolas and others},
  title = {Preference optimization as probabilistic inference},
  journal = {arXiv e-prints},
  pages = {arXiv--2410},
  year = {2024},
}

@inproceedings{haarnoja2018soft,
  author = {Haarnoja, Tuomas and Zhou, Aurick and Abbeel, Pieter and Levine, Sergey},
  title = {Soft Actor-Critic: Off-Policy Maximum Entropy Deep Reinforcement Learning with a Stochastic Actor},
  booktitle = {International Conference on Machine Learning},
  pages = {1861--1870},
  year = {2018},
}

@article{kelly1956new,
  author = {Kelly, John L.},
  title = {A New Interpretation of Information Rate},
  journal = {Bell System Technical Journal},
  volume = {35},
  number = {4},
  pages = {917--926},
  year = {1956},
}

@inproceedings{kakade2001natural,
  author = {Kakade, Sham M},
  title = {A Natural Policy Gradient},
  booktitle = {Advances in Neural Information Processing Systems 14},
  year = {2001},
}

@inproceedings{kool2019buy,
  author = {Kool, Wouter and van Hoof, Herke and Welling, Max},
  title = {Buy 4 {REINFORCE} samples, get a baseline for free!},
  booktitle = {Deep Reinforcement Learning Meets Structured Prediction, ICLR Workshop},
  year = {2019},
}

@article{kingma2014adam,
  author = {Kingma, Diederik P. and Ba, Jimmy},
  title = {Adam: A Method for Stochastic Optimization},
  booktitle = {Proc. of ICLR},
  year = {2015},
}

@inproceedings{munos2016safe,
  author = {Munos, R{\'e}mi and Stepleton, Tom and Harutyunyan, Anna and Bellemare, Marc},
  title = {Safe and efficient off-policy reinforcement learning},
  booktitle = {Advances in Neural Information Processing Systems 29},
  pages = {1046--1054},
  year = {2016},
}

@inproceedings{pathak2017curiosity,
  author = {Pathak, Deepak and Agrawal, Pulkit and Efros, Alexei A and Darrell, Trevor},
  title = {Curiosity-driven exploration by self-supervised prediction},
  booktitle = {International Conference on Machine Learning},
  pages = {2778--2787},
  year = {2017},
}

@article{peng2019advantage,
  author = {Peng, Xue Bin and Kumar, Aviral and Zhang, Grace and Levine, Sergey},
  title = {Advantage-weighted regression: Simple and scalable off-policy reinforcement learning},
  journal = {arXiv preprint arXiv:1910.00177},
  year = {2019},
}

@inproceedings{schulman2015trust,
  author = {Schulman, John and Levine, Sergey and Abbeel, Pieter and Jordan, Michael and Moritz, Philipp},
  title = {Trust Region Policy Optimization},
  booktitle = {Proc. of ICML},
  year = {2015},
}

@article{schulman2017proximal,
  author = {Schulman, John and Wolski, Filip and Dhariwal, Prafulla and Radford, Alec and Klimov, Oleg},
  title = {Proximal Policy Optimization Algorithms},
  journal = {arXiv preprint arXiv:1707.06347},
  year = {2017},
}

@article{schulman2016high,
  author = {Schulman, John and Moritz, Philipp and Levine, Sergey and Jordan, Michael and Abbeel, Pieter},
  title = {High-Dimensional Continuous Control Using Generalized Advantage Estimation},
  journal = {Proc. of ICLR},
  year = {2016},
}

@article{shao2024deepseekmath,
  author = {Shao, Zhihong and Wang, Peiyi and Zhu, Qihao and Xu, Runxin and Song, Junxiao and Zhang, Mingchuan and Li, Y.K. and Wu, Y. and Guo, Daya},
  title = {{DeepSeekMath}: Pushing the limits of mathematical reasoning in open language models},
  journal = {arXiv preprint arXiv:2402.03300},
  year = {2024},
}

@article{ouyang2022training,
  author = {Ouyang, Long and Wu, Jeffrey and Jiang, Xu and Almeida, Diogo and Wainwright, Carroll and Mishkin, Pamela and Zhang, Chong and Agarwal, Sandhini and Slama, Katarina and Ray, Alex and others},
  title = {Training language models to follow instructions with human feedback},
  journal = {Advances in Neural Information Processing Systems},
  volume = {35},
  pages = {27730--27744},
  year = {2022},
}

@article{silver2017mastering,
  author = {Silver, David and Schrittwieser, Julian and Simonyan, Karen and Antonoglou, Ioannis and Huang, Aja and Guez, Arthur and Hubert, Thomas and Baker, Lucas and Lai, Matthew and Bolton, Adrian and others},
  title = {Mastering the game of {Go} without human knowledge},
  journal = {Nature},
  volume = {550},
  number = {7676},
  pages = {354--359},
  year = {2017},
}

@article{vinyals2019grandmaster,
  author = {Vinyals, Oriol and Babuschkin, Igor and Czarnecki, Wojciech M and Mathieu, Micha{\"e}l and Dudzik, Andrew and Chung, Junyoung and Choi, David H and Powell, Richard and Ewalds, Timo and Georgiev, Petko and others},
  title = {Grandmaster level in {StarCraft II} using multi-agent reinforcement learning},
  journal = {Nature},
  volume = {575},
  number = {7782},
  pages = {350--354},
  year = {2019},
}

@inproceedings{sutton1999policy,
  author = {Sutton, Richard S and McAllester, David A and Singh, Satinder P and Mansour, Yishay},
  title = {Policy Gradient Methods for Reinforcement Learning with Function Approximation.},
  booktitle = {Proc. of NIPS},
  year = {1999},
}

@article{tassa2018deepmind,
  author = {Tassa, Yuval and Doron, Yotam and Muldal, Alistair and Erez, Tom and Li, Yazhe and Casas, Diego de Las and Budden, David and Abdolmaleki, Abbas and Merel, Josh and Lefrancq, Andrew and others},
  title = {DeepMind Control Suite},
  journal = {arXiv preprint arXiv:1801.00690},
  year = {2018},
}

@article{williams1992simple,
  author = {Williams, Ronald J},
  title = {Simple statistical gradient-following algorithms for connectionist reinforcement learning},
  journal = {Machine learning},
  publisher = {Springer},
  volume = {8},
  number = {3},
  pages = {229--256},
  year = {1992},
}

@inproceedings{you2019large,
  author = {You, Yang and Li, Jing and Reddi, Sashank and Hseu, Jonathan and Kumar, Sanjiv and Bhojanapalli, Srinadh and Song, Xiaodan and Demmel, James and Keutzer, Kurt and Hsieh, Cho-Jui},
  title = {Large Batch Optimization for Deep Learning: Training {BERT} in 76 Minutes},
  booktitle = {International Conference on Learning Representations},
  year = {2020},
}
\bibliographystyle{rlj}

\beginSupplementaryMaterials

\appendix

\section{The Delightful Gate: Derivation and Properties}
\label{app:derivations}

We derive the sigmoid gate and establish basic properties.

\paragraph{Entropy-regularized gate selection.}
Treat each sampled score term $g_t$ as a candidate update that is applied with weight $w \in [0,1]$.
For fixed delight $\chi$, we choose $w$ to maximize a linear reward plus an entropy cost:
\begin{equation}
\max_{w \in [0,1]} \; \chi\,w + \eta\,H(w),
\qquad
H(w) = -w\log w - (1{-}w)\log(1{-}w).
\end{equation}
Differentiating and setting to zero:
\begin{equation}
\frac{\partial}{\partial w}\bigl[\chi w + \eta H(w)\bigr] 
= \chi - \eta \log\!\left(\frac{w}{1-w}\right) = 0
\quad\Longrightarrow\quad
w^* = \sigma\!\left(\frac{\chi}{\eta}\right).
\end{equation}
Since $H$ is strictly concave, this is the unique global maximizer.

\paragraph{Softplus potential.}
Substituting $w^*$ into the objective yields the optimal value:
\begin{equation}
\Psi_\eta(\chi) = \max_{w \in [0,1]} \bigl\{\chi w + \eta H(w)\bigr\} = \eta \log\!\left(1 + e^{\chi/\eta}\right) = \eta\,\mathrm{softplus}(\chi/\eta).
\end{equation}
This softplus potential provides a local variational interpretation of the DG gate.
It saturates for $\chi \ll 0$ (suppressing updates from disfavored actions) and grows linearly for $\chi \gg 0$ (preserving updates from surprising successes).

\paragraph{Temperature interpolation.}
The temperature $\eta$ controls gate sharpness and interpolates between two regimes.
In the \emph{hedonic} limit ($\eta \to \infty$), $w^* \to 1/2$ for bounded $\chi$, recovering standard PG up to a constant: the gate ignores surprisal and responds only to advantage.
In the \emph{enlightened} limit ($\eta \to 0$), $w^* \to \mathbb{I}\{\chi > 0\}$, a hard gate that passes all positive-delight samples equally and fully suppresses negative ones.
We use $\eta = 1$ throughout, which provides meaningful asymmetry without hard thresholding.

\paragraph{Stationarity at optimal policies.}
At an optimal policy $\pi^*$, on-policy advantages vanish on the support.
Since the DG update is $w \cdot U \cdot \nabla_\theta \log \pi_\theta$ and $U = 0$ on supported actions, the update vanishes.
Optimal policies are therefore stationary points of DG for all $\eta > 0$.

\section{MNIST Experimental Details}
\label{app:mnist}

We provide supplementary details and robustness analyses for the MNIST contextual bandit experiments in Section~\ref{sec:mnist}.

\paragraph{Setup.}
We parameterize the policy $\pi_\theta$ as a two-layer MLP with ReLU activations and hidden width 100.
Training proceeds in online batches: each step, the agent collects $B=100$ trajectories, computes gradients, and updates parameters with Adam (learning rate $10^{-3}$).
We train for $K=10{,}000$ episodes unless otherwise noted.

\subsection{Generalization}
\label{app:mnist_generalization}

A natural concern is that prioritizing high-surprisal events might cause overfitting.
Figure~\ref{fig:mnist_validation} shows validation error tracks training error closely for both PG and DG, and the optimal temperature $\eta \approx 1$ is consistent across splits.
DG's gains persist on held-out data.

\begin{figure}[ht]
\centering
\begin{subfigure}[b]{0.48\columnwidth}
    \includegraphics[width=\linewidth]{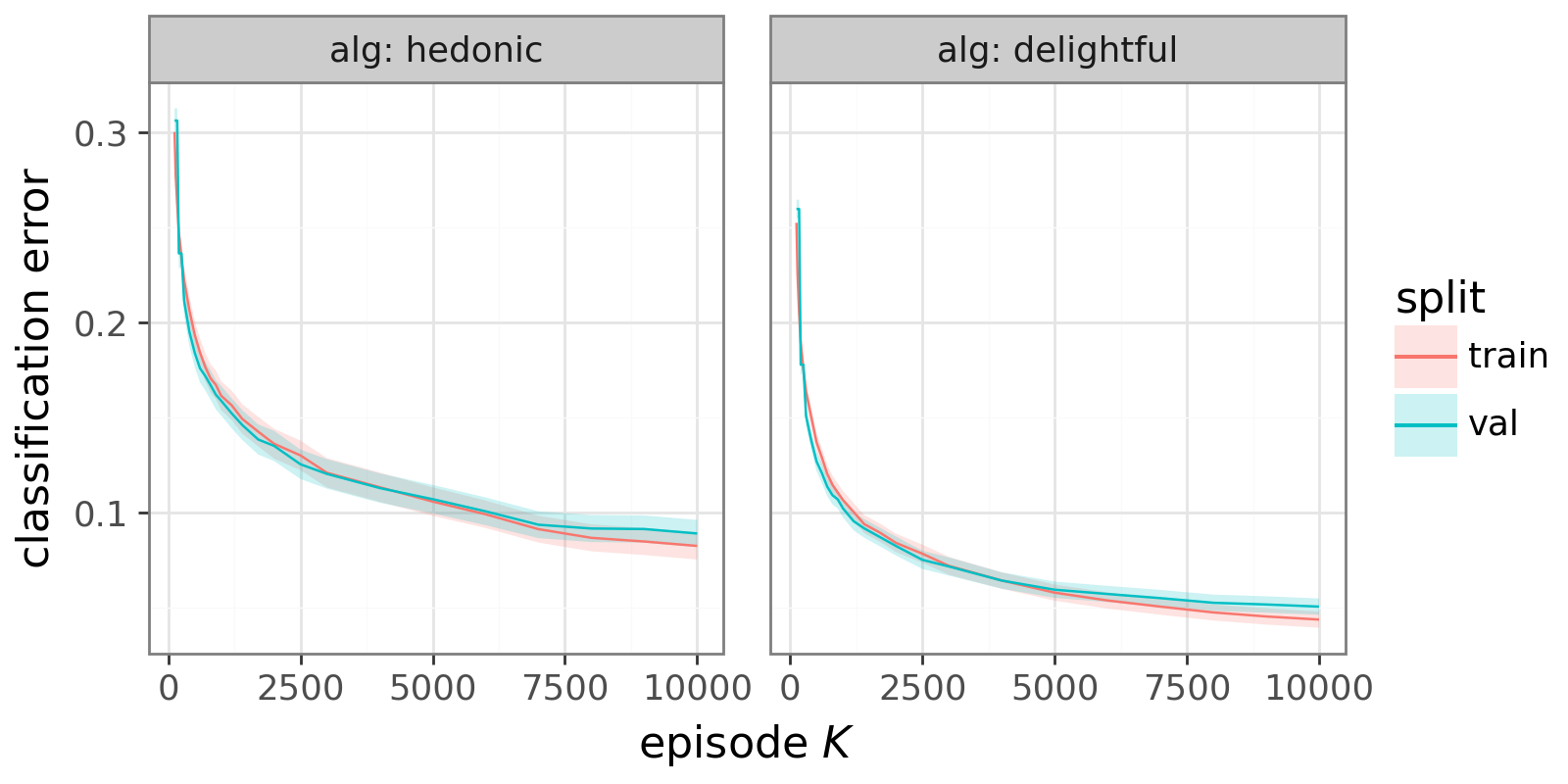}
    \caption{Learning curves (train vs.\ val).}
    \label{fig:mnist_curve_val}
\end{subfigure}
\hfill
\begin{subfigure}[b]{0.48\columnwidth}
    \includegraphics[width=\linewidth]{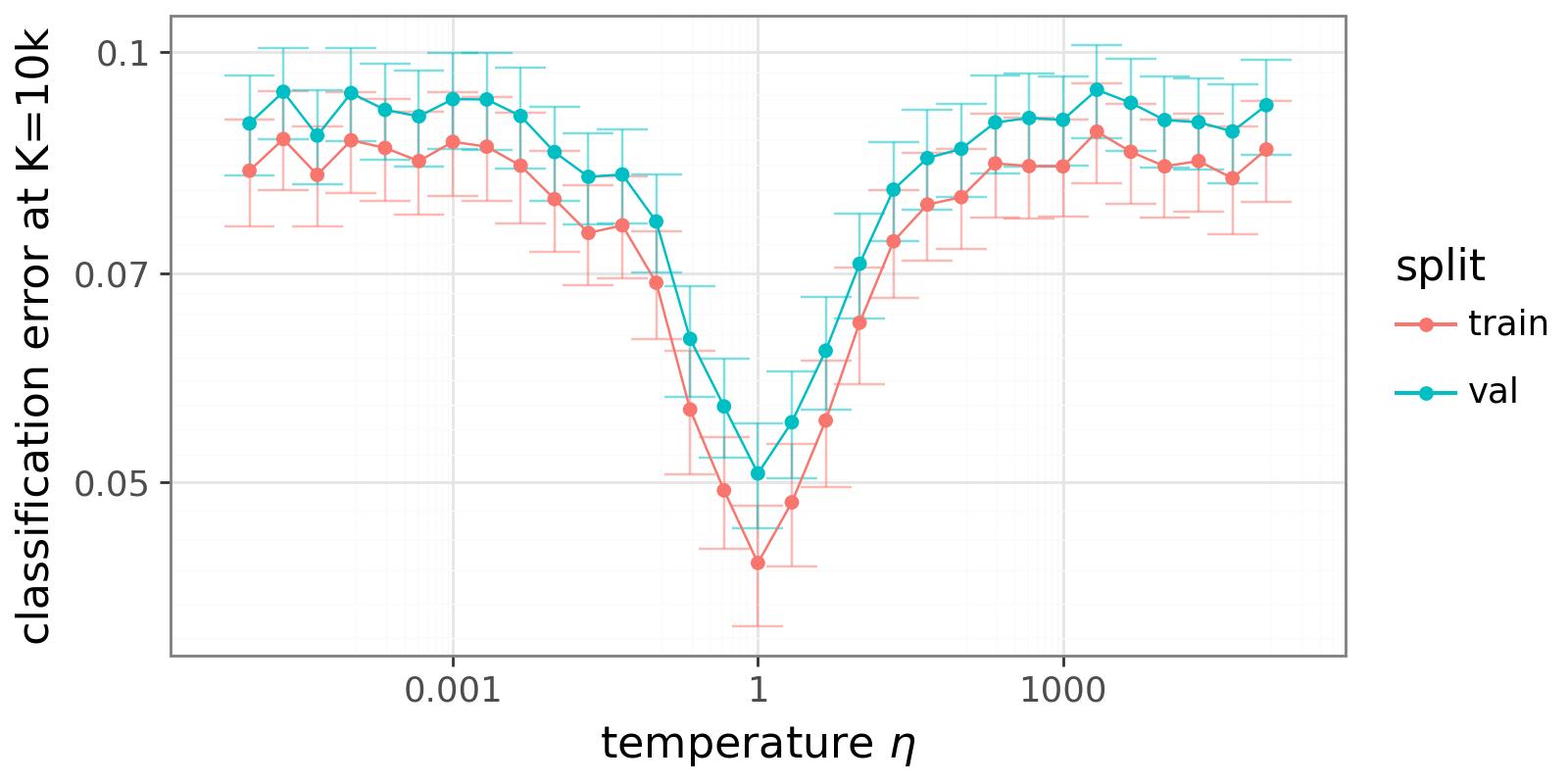}
    \caption{Temperature sweep (train vs.\ val).}
    \label{fig:mnist_eta_val}
\end{subfigure}
\caption{Generalization on MNIST.
\textbf{(a)}~Validation error tracks training error; DG achieves lower error on both.
\textbf{(b)}~The optimal temperature $\eta \approx 1$ is consistent across train and validation.}
\label{fig:mnist_validation}
\end{figure}

\subsection{Comparison with Entropy Regularization}
\label{app:mnist_entropy}

A natural alternative is additive entropy regularization, augmenting the objective with $\alpha H(\pi_\theta)$.
Figure~\ref{fig:mnist_entropy_app} sweeps $\alpha$ for PG and $\eta$ for DG.
Entropy regularization is sensitive to its coefficient: small $\alpha$ provides negligible benefit, while large $\alpha$ collapses performance by forcing excessive stochasticity.
DG achieves a lower error floor than any entropy-regularized baseline and maintains its advantage across a wide range of $\eta$.

\begin{figure}[ht]
\centering
\includegraphics[width=0.6\columnwidth]{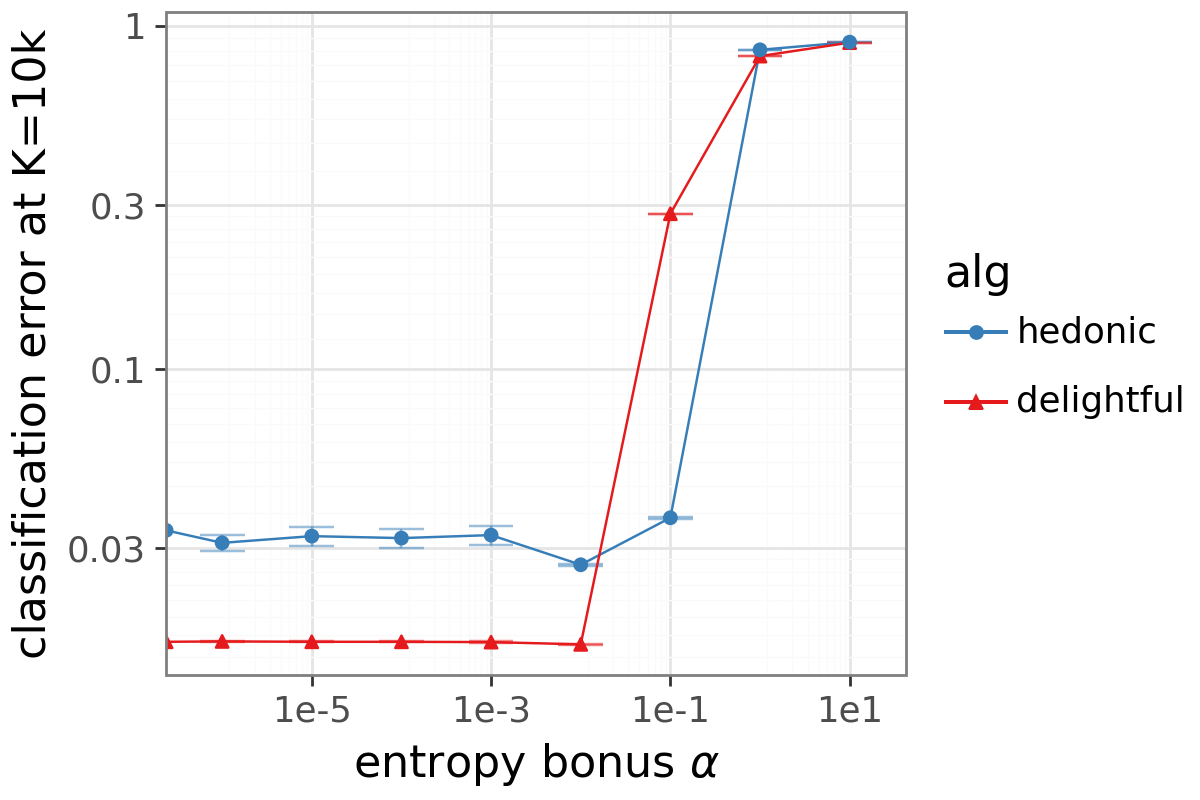}
\caption{Entropy regularization vs.\ DG temperature.
Large entropy coefficients degrade accuracy; DG is robust across $\eta$ and consistently achieves lower error.}
\label{fig:mnist_entropy_app}
\end{figure}

\subsection{Robustness to Learning Rate}
\label{app:mnist_lr}

Figure~\ref{fig:mnist_lr} sweeps learning rate $\alpha \in [10^{-5}, 10^{-2}]$.
DG consistently outperforms PG across the effective range, with a wider basin of low error around $\alpha \in [10^{-4}, 10^{-3}]$.

\begin{figure}[ht]
\centering
\includegraphics[width=\columnwidth]{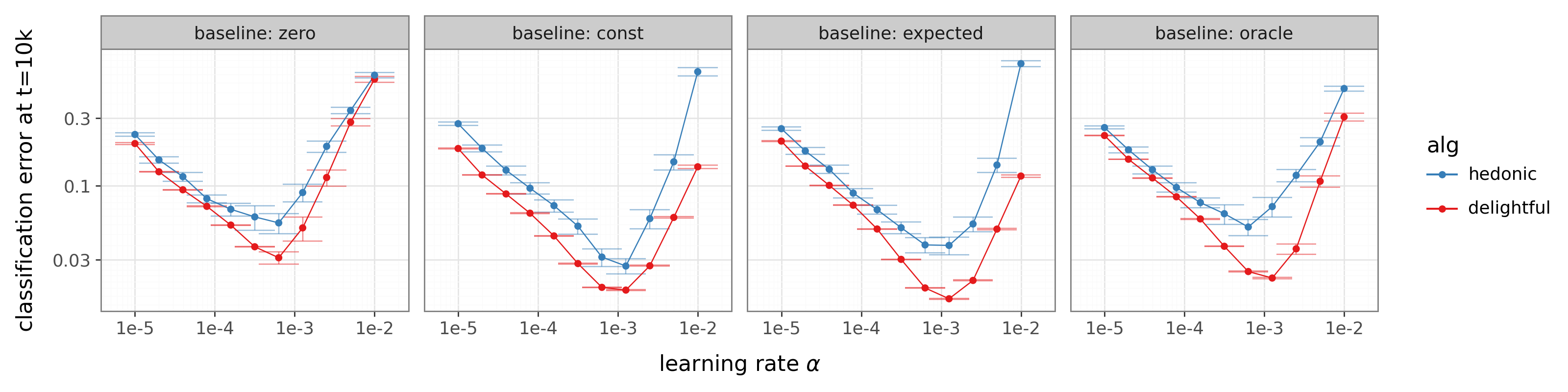}
\caption{Learning rate sensitivity.
DG (red) outperforms PG (blue) across all baselines and learning rates.}
\label{fig:mnist_lr}
\end{figure}

\subsection{Robustness to Batch Size}
\label{app:mnist_batch}

Figure~\ref{fig:mnist_batch} sweeps batch size $B \in [1, 1000]$.
Larger batches reduce gradient variance and improve both methods, but DG maintains a consistent advantage.
This confirms that DG's benefit is not merely variance reduction---it provides a distinct signal that complements larger batches.

\begin{figure}[ht]
\centering
\includegraphics[width=\columnwidth]{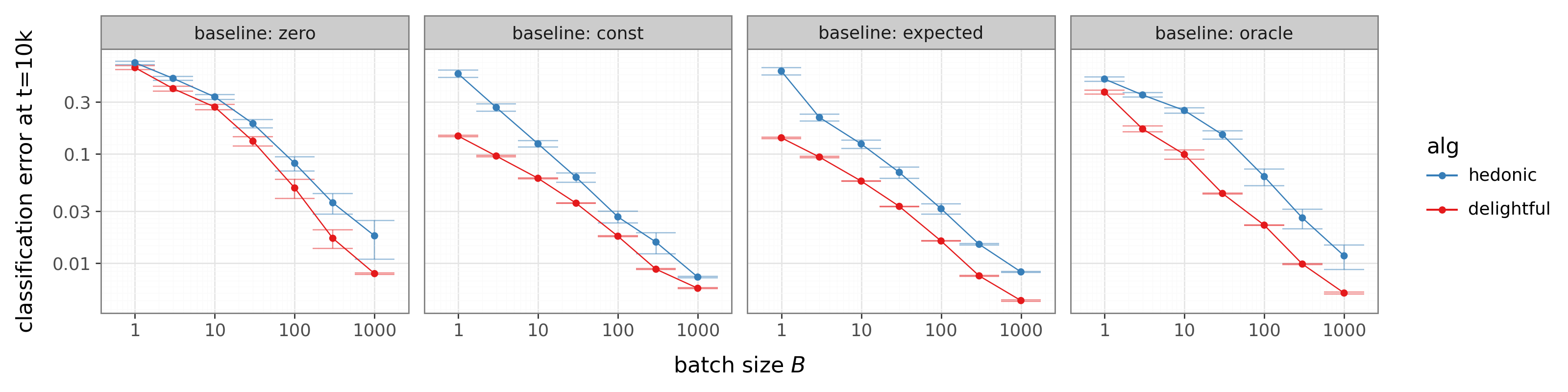}
\caption{Batch size sensitivity.
DG maintains its advantage across all batch sizes.}
\label{fig:mnist_batch}
\end{figure}

\subsection{Robustness to Network Width}
\label{app:mnist_width}

Figure~\ref{fig:mnist_width} varies hidden width $W \in [1, 2048]$.
Both methods exhibit a sweet spot in capacity, but DG achieves lower error at every width.

\begin{figure}[ht]
\centering
\includegraphics[width=\columnwidth]{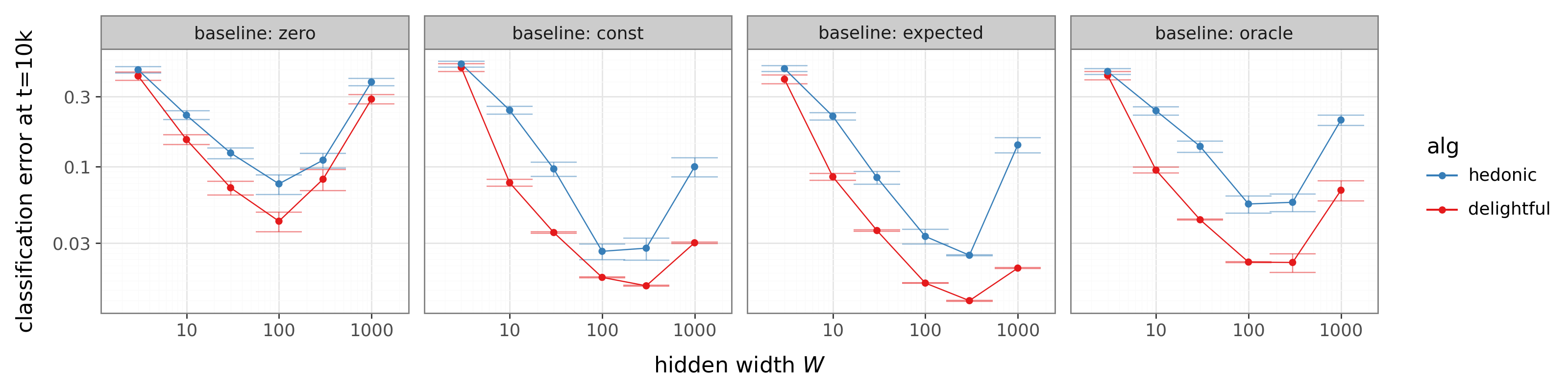}
\caption{Network width sensitivity.
DG outperforms PG across all capacity levels.}
\label{fig:mnist_width}
\end{figure}

\subsection{Alternative Definitions of Delight}
\label{app:mnist_structure}

We investigate whether alternative functional forms could improve on $\chi = U \cdot \ell$.

\paragraph{Additive mixtures.}
Inspired by UCB, we consider $\chi^{\text{UCB}}_\alpha = (1-\alpha) U + \alpha \ell$, interpolating between pure advantage ($\alpha=0$) and pure surprisal ($\alpha=1$).
Figure~\ref{fig:mnist_structure_ablation}(a) shows additive mixtures consistently underperform the multiplicative form.

\paragraph{Surprisal exponent.}
We generalize to $\chi_\beta = U \cdot \ell^\beta$.
Figure~\ref{fig:mnist_structure_ablation}(b) shows the optimum is exactly $\beta=1$; deviations in either direction degrade performance.
Within this family of alternatives, the simple product $\chi = U \cdot \ell$ performs best.

\begin{figure}[ht]
\centering
\begin{subfigure}[b]{0.48\columnwidth}
    \includegraphics[width=\linewidth]{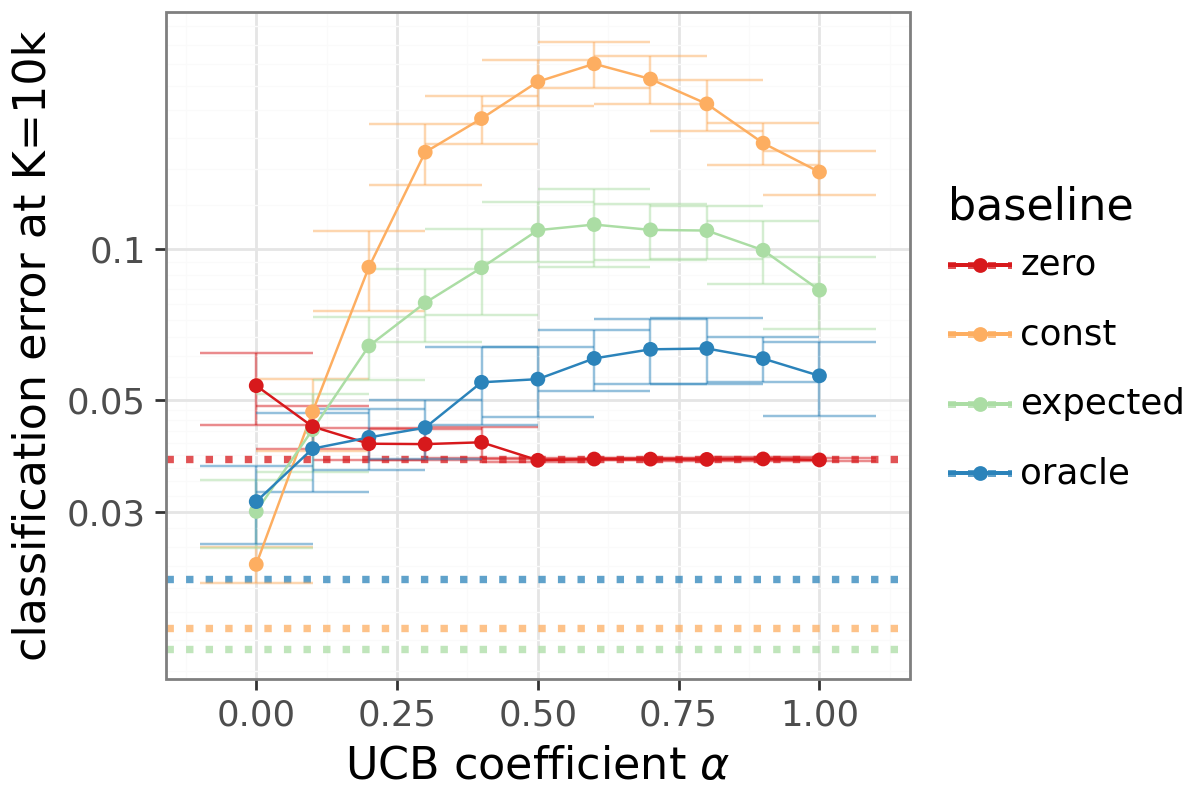}
    \caption{Additive mixtures.}
    \label{fig:mnist_ucb}
\end{subfigure}
\hfill
\begin{subfigure}[b]{0.48\columnwidth}
    \includegraphics[width=\linewidth]{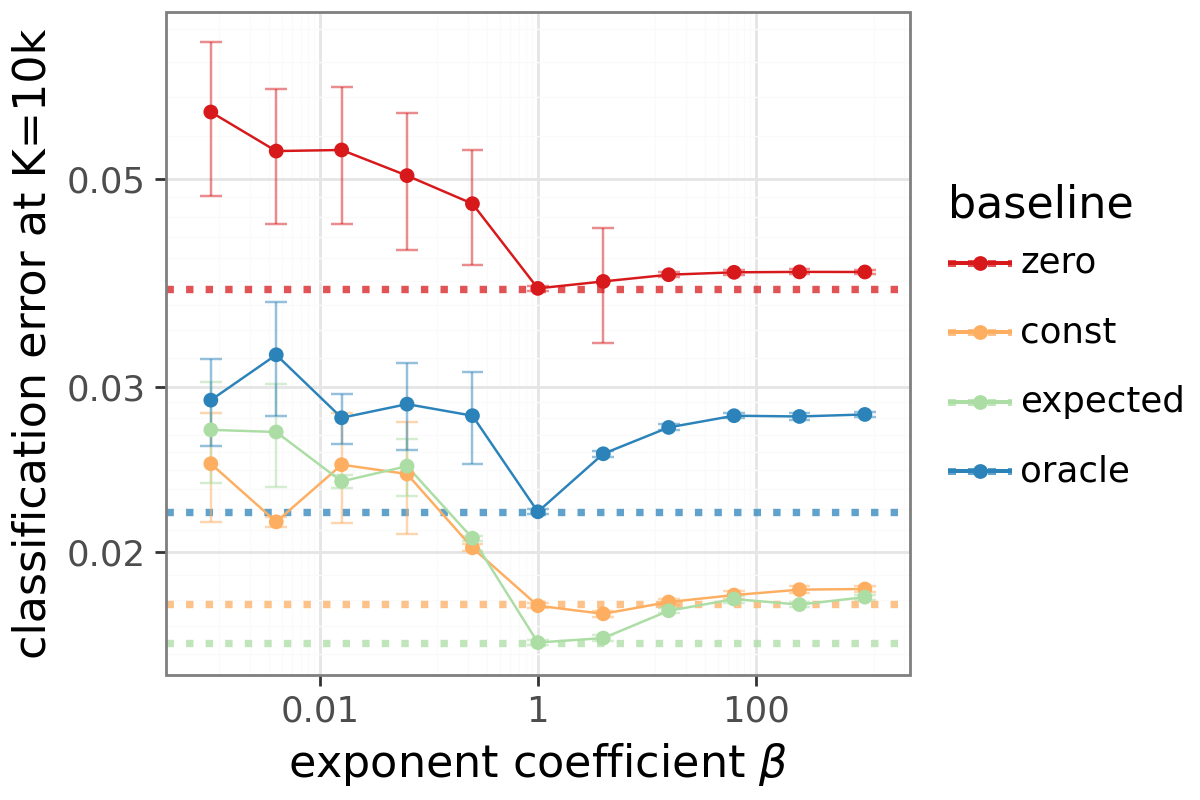}
    \caption{Surprisal exponent $\beta$.}
    \label{fig:mnist_exponent}
\end{subfigure}
\caption{Alternative definitions of delight.
\textbf{(a)}~Additive mixtures underperform the multiplicative form (dotted).
\textbf{(b)}~The simple product ($\beta=1$) is optimal.}
\label{fig:mnist_structure_ablation}
\end{figure}

\section{Proofs for Tabular Analysis}
\label{app:bandit_proofs}

This section collects the proofs for Section~\ref{sec:bandit}.
We first justify the normalized-step model, then prove Propositions~\ref{prop:variance} and~\ref{prop:direction} and their extensions.

\subsection{Cosine Controls Progress}
\label{app:cosine_progress}

The tabular analysis assumes normalized steps $z \leftarrow z + \alpha g/\|g\|$.
The following proposition shows that under this update rule, expected improvement scales linearly with $\cos(g, \nabla J)$, so a higher-cosine gradient estimator translates directly into faster learning.

\begin{proposition}[Cosine controls progress]
\label{prop:cosine_progress}
Let $J$ have $L$-Lipschitz gradient.
For the normalized update $z^+ = z + \alpha g / \|g\|$,
\[
\mathbb{E}[J(z^+) - J(z) \mid z]
\ge \alpha \|\nabla J(z)\| \cdot \mathbb{E}[\cos(g, \nabla J(z)) \mid z] - \tfrac{L}{2}\alpha^2.
\]
\end{proposition}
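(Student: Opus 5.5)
The plan is to apply the standard descent lemma for smooth functions to the normalized step, then take conditional expectations. Since $\nabla J$ is $L$-Lipschitz, for any $z$ and displacement $d$ the quadratic lower bound
\[
J(z+d) \;\ge\; J(z) + \langle \nabla J(z), d\rangle - \tfrac{L}{2}\|d\|^2
\]
holds. It follows from writing $J(z+d)-J(z) = \int_0^1 \langle \nabla J(z+td), d\rangle\,dt$ and bounding $|\langle \nabla J(z+td)-\nabla J(z), d\rangle| \le L t\|d\|^2$ by Cauchy--Schwarz. Integrating over $t$ gives the $\tfrac{L}{2}\|d\|^2$ term. We use the lower-bound version because $J$ is being maximized.

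Next we substitute $d = \alpha g/\|g\|$, assuming $g \neq 0$ almost surely, which we state as a standing convention. Then $\|d\|^2 = \alpha^2$ exactly, so the curvature penalty is deterministic and equal to $\tfrac{L}{2}\alpha^2$. The linear term is
\[
\alpha\,\Big\langle \nabla J(z), \tfrac{g}{\|g\|}\Big\rangle \;=\; \alpha\,\|\nabla J(z)\|\cos(g,\nabla J(z)).
\]
We adopt the convention $\cos = 0$ when $\nabla J(z)=0$, in which case both sides agree. This gives the pointwise inequality
\[
J(z^+)-J(z) \;\ge\; \alpha\|\nabla J(z)\|\cos(g,\nabla J(z)) - \tfrac{L}{2}\alpha^2 .
\]

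Finally we take $\mathbb{E}[\cdot\mid z]$. Since $z$ is fixed under the conditioning, $\|\nabla J(z)\|$ factors out and the constant stays put. The cosine is bounded in $[-1,1]$, so its expectation exists, and linearity of expectation yields the claim.

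There is no real obstacle here. The only points needing care are the degenerate cases $g=0$ and $\nabla J(z)=0$, and the sign convention (ascent rather than descent). The key structural observation is that normalization makes the second-order term independent of the estimator, so the cosine alone determines the first-order progress.
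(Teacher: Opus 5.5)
Your proof is correct and follows the paper's route: apply the smoothness lower bound $J(z+d) \ge J(z) + \langle \nabla J(z), d\rangle - \tfrac{L}{2}\|d\|^2$ with $d = \alpha g/\|g\|$, rewrite the linear term as $\alpha\|\nabla J(z)\|\cos(g,\nabla J(z))$, and take conditional expectations. Deriving the quadratic bound from the integral form and handling the degenerate cases $g=0$ and $\nabla J(z)=0$ is extra care that the paper omits.
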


\begin{proof}
By smoothness: $J(z^+) \ge J(z) + \langle \nabla J, \alpha g/\|g\| \rangle - \frac{L}{2}\alpha^2$.
Taking expectations and using $\langle \nabla J, g/\|g\| \rangle = \|\nabla J\| \cos(g, \nabla J)$.
\end{proof}

\subsection{Proof of Proposition~\ref{prop:variance} (Variance Reduction)}
\label{app:prop_a}

We use the symmetric bandit setup from Section~\ref{subsec:variance}: $K$ actions, correct action $y^*$, $\pi(y^*) = 1 - \varepsilon$, $\pi(a) = q := \varepsilon/(K-1)$ for $a \neq y^*$, baseline $b$, and gate values $w_+$ (correct) and $w_-$ (incorrect).
The proof proceeds in three parts: (i)~a symmetry lemma shows that all expected gradients are collinear, (ii)~the perpendicular variance factors exactly, and (iii)~a Taylor expansion converts the variance ratio into a cosine-gap ratio.

\paragraph{Part (i): Direction preservation.}

Let $u := \phi(y^*) = e_{y^*} - \pi$.
We show $\sum_{a \neq y^*} \phi(a)$ is proportional to $u$, which forces the expected DG gradient to be collinear with $\nabla J = (1 - \varepsilon)\,u$.

\begin{lemma}[Symmetry identity]
\label{lem:symmetry}
$\displaystyle \sum_{a \neq y^*} \phi(a) = -\frac{(K-1)(1-\varepsilon)}{\varepsilon}\,\phi(y^*).$
\end{lemma}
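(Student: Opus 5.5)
The identity is a direct coordinate computation using $\phi(a) = e_a - \pi$ and the symmetric form of $\pi$. I will expand both sides componentwise and check that they agree.

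\textbf{Left-hand side.} Summing over the $K-1$ incorrect actions gives
\[
\sum_{a \neq y^*} \phi(a) \;=\; \sum_{a \neq y^*} e_a \;-\; (K-1)\,\pi .
\]
Since $\pi = (1-\varepsilon) e_{y^*} + q \sum_{a \neq y^*} e_a$ with $q(K-1) = \varepsilon$, this reduces to two coordinate values.
\begin{itemize}
\item On $y^*$ the coefficient is $-(K-1)(1-\varepsilon)$.
\item On each incorrect $a$ the coefficient is $1 - (K-1)q = 1 - \varepsilon$.
\end{itemize}

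\textbf{Right-hand side.} The vector $u = \phi(y^*) = e_{y^*} - \pi$ also takes only two coordinate values.
\begin{itemize}
\item On $y^*$ its coordinate is $1 - (1-\varepsilon) = \varepsilon$.
\item On each incorrect $a$ its coordinate is $-q = -\varepsilon/(K-1)$.
\end{itemize}
Multiplying by $-(K-1)(1-\varepsilon)/\varepsilon$ gives $-(K-1)(1-\varepsilon)$ on $y^*$. On each incorrect coordinate it gives $(K-1)(1-\varepsilon)/\varepsilon \cdot \varepsilon/(K-1) = 1-\varepsilon$. Both coordinate types match the left-hand side, which proves the identity.

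\textbf{Shortcut.} The score identity $\sum_a \pi(a)\phi(a) = 0$ gives the same result more directly. Because all incorrect actions share the probability $q$, it reads $(1-\varepsilon)\phi(y^*) + q \sum_{a\neq y^*}\phi(a) = 0$. Dividing by $q = \varepsilon/(K-1)$ yields the claim immediately. I would present this one-line derivation as the proof and keep the coordinate check as confirmation.

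There is no real obstacle here. The only point needing care is that the score identity collapses to two terms precisely because of the symmetric probabilities; this is why the lemma fails beyond symmetry, as noted in the paper.
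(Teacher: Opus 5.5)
Your proposal is correct. The coordinate check is exactly the paper's proof. The paper writes the sum as $\mathbf{1} - K\pi - \phi(y^*)$ rather than $\sum_{a\neq y^*} e_a - (K-1)\pi$, but it arrives at the same two coordinate values, $-(K-1)(1-\varepsilon)$ and $1-\varepsilon$.

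The shortcut you would lead with is a genuinely different route. The paper does not use the score identity in this proof; it appears only in the main text, to compute $\mathbb{E}[g_{\mathrm{PG}}]$. You instead read the lemma off $\sum_a \pi(a)\phi(a)=0$ by factoring out the common incorrect-action probability $q$. This is valid because $q>0$ whenever $\varepsilon>0$, which the statement presupposes.

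Your version is shorter and explains why the identity holds. Part~(i) actually consumes the quantity $q\sum_{a\neq y^*}\phi(a)$, and your identity gives it directly as $-(1-\varepsilon)\phi(y^*)$. That yields $\mathbb{E}[g_{\mathrm{DG}}] = (1-\varepsilon)\bigl[(1-b)w_+ + b\,w_-\bigr]\phi(y^*)$ in one step, without the detour through $-(K-1)(1-\varepsilon)/\varepsilon$ and back via $q(K-1)=\varepsilon$. It also makes clear that the only thing lost without symmetry is the ability to pull a common factor out of the weighted sum. The paper's componentwise check is more pedestrian, but it is self-contained and displays the explicit coordinates of both sides.
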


\begin{proof}
Since $\sum_{a=1}^K \phi(a) = \sum_a (e_a - \pi) = \mathbf{1} - K\pi$, we have $\sum_{a \neq y^*} \phi(a) = \mathbf{1} - K\pi - \phi(y^*)$.
We verify this is proportional to $\phi(y^*)$ component by component.
In component $y^*$: $1 - K(1-\varepsilon) - \varepsilon = -(K-1)(1-\varepsilon)$.
In component $a \neq y^*$: $1 - Kq - (-q) = 1 - (K-1)q = 1 - \varepsilon$.
Since $\phi(y^*)$ has component $\varepsilon$ in position $y^*$ and $-q = -\varepsilon/(K-1)$ in other positions, the ratio is $-(K-1)(1-\varepsilon)/\varepsilon$ in both components.
\end{proof}

Now compute the expected DG gradient:
\begin{align*}
\mathbb{E}[g_{\mathrm{DG}}]
&= (1-\varepsilon)\,w_+\,(1-b)\,\phi(y^*) + \sum_{a \neq y^*} q\,w_-\,(-b)\,\phi(a) \\
&= (1-\varepsilon)\,w_+\,(1-b)\,\phi(y^*) + w_-\,(-b)\,q \cdot \Big(-\frac{(K-1)(1-\varepsilon)}{\varepsilon}\Big)\phi(y^*) \\
&= (1-\varepsilon)\Big[(1-b)\,w_+ + b\,w_-\Big]\,\phi(y^*) \\
&= s \cdot \nabla J,
\end{align*}
where $s = (1-b)\,w_+ + b\,w_-$ and we used $q(K-1) = \varepsilon$.
Since $w_+, w_- > 0$ and $b \in (0,1)$, we have $s > 0$, confirming the direction is preserved.

\paragraph{Part (ii): Perpendicular variance is exactly $w_-^2 \cdot \mathrm{Var}_\perp(g_{\mathrm{PG}})$.}

Since $\nabla J \propto \phi(y^*)$, the projection $\Pi_\perp \phi(y^*) = 0$.
Therefore the correct-action sample contributes zero perpendicular energy under both PG and DG.
For incorrect actions, $g_{\mathrm{DG}}(a) = w_- \cdot g_{\mathrm{PG}}(a)$ pointwise, so $\Pi_\perp(g_{\mathrm{DG}}(a)) = w_- \cdot \Pi_\perp(g_{\mathrm{PG}}(a))$.

Note that $\mathbb{E}[\Pi_\perp(g)] = \Pi_\perp(\mathbb{E}[g]) = 0$ for both PG and DG (from Part~(i), both means are parallel to $\nabla J$), so $\mathrm{Var}_\perp$ equals the second moment.
The perpendicular variance is:
\begin{align*}
\mathrm{Var}_\perp(g_{\mathrm{DG}})
&= \mathbb{E}\,\|\Pi_\perp(g_{\mathrm{DG}})\|^2
= \sum_{a \neq y^*} \pi(a)\,w_-^2\,b^2\,\|\Pi_\perp(\phi(a))\|^2 \\
&= w_-^2 \sum_{a \neq y^*} \pi(a)\,b^2\,\|\Pi_\perp(\phi(a))\|^2
= w_-^2 \cdot \mathrm{Var}_\perp(g_{\mathrm{PG}}).
\end{align*}

\paragraph{Part (iii): Alignment gap ratio.}

When $\bar{g}$ concentrates near $\mu := \mathbb{E}[\bar{g}]$, write $\bar{g} = \mu + \xi$ with $\xi$ small.
Since $\mu$ is parallel to $\nabla J$ (Part~(i)), $\cos(\bar{g}, \nabla J) = \cos(\mu + \xi, \mu)$.
Taylor-expanding to second order in $\xi$:
\[
1 - \cos(\mu + \xi, \mu) \approx \frac{\|\xi_\perp\|^2}{2\|\mu\|^2},
\]
where $\xi_\perp = \Pi_\perp(\xi)$ is the perpendicular component.
Taking expectations:
\[
1 - \mathbb{E}[\cos(\bar{g}, \nabla J)]
\approx \frac{\mathrm{Var}_\perp(\bar{g})}{2\,\|\mathbb{E}[\bar{g}]\|^2}
= \frac{\mathrm{Var}_\perp(g)}{2B\,\|\mathbb{E}[\bar{g}]\|^2}.
\]
For PG: $\|\mathbb{E}[\bar{g}_{\mathrm{PG}}]\| = \|\nabla J\|$.
For DG: $\|\mathbb{E}[\bar{g}_{\mathrm{DG}}]\| = s\,\|\nabla J\|$.
Using Part~(ii):
\[
\frac{1 - \mathbb{E}[\cos(\bar{g}_{\mathrm{DG}}, \nabla J)]}
     {1 - \mathbb{E}[\cos(\bar{g}_{\mathrm{PG}}, \nabla J)]}
\approx \frac{w_-^2 \cdot \mathrm{Var}_\perp(g_{\mathrm{PG}}) / (s^2 \|\nabla J\|^2)}
             {\mathrm{Var}_\perp(g_{\mathrm{PG}}) / \|\nabla J\|^2}
= \frac{w_-^2}{s^2}. \qedhere
\]

\subsection{Extension to Non-Symmetric Bandits}
\label{app:nonsymmetric}

The symmetric assumption in Proposition~\ref{prop:variance} gives clean equalities, but the noise-suppression mechanism extends to general policies.
The coincidence of oracles does not require symmetry: in any single-context bandit, the score identity $\sum_a \pi(a)\,\phi(a) = 0$ forces $\mathbb{E}[g_{\mathrm{PG}}] = \pi(y^*)\,\phi(y^*) \propto g^*_{\mathrm{CE}}$.
The directional improvement of Proposition~\ref{prop:direction} is therefore a fundamentally multi-context phenomenon; in a single context, DG can only reduce variance.

\paragraph{Variance suppression.}
For any incorrect action $a$ with negative advantage ($U < 0$) and surprisal $\ell(a) = -\log\pi(a)$, the DG gate satisfies
\[
w(a) = \sigma(-b\,\ell(a)/\eta) \;\le\; e^{-b\,\ell(a)/\eta} = \pi(a)^{b/\eta}.
\]
For rare actions ($\pi(a) \ll 1$), this bound is small: the gate suppresses their contribution by at least $\pi(a)^{b/\eta}$.
In a general (non-symmetric) bandit, each incorrect action $a$ contributes $\pi(a)\,w(a)^2\,b^2\,\|\Pi_\perp(\phi(a))\|^2$ to $\mathrm{Var}_\perp(g_{\mathrm{DG}})$.
Using the bound above:
\[
\mathrm{Var}_\perp(g_{\mathrm{DG}})
\;\le\; \sum_{a \neq y^*} \pi(a)^{1+2b/\eta}\,b^2\,\|\Pi_\perp(\phi(a))\|^2.
\]
When $b/\eta > 0$, the exponent $1 + 2b/\eta > 1$ ensures that rare actions are suppressed more aggressively than under PG (which has exponent~$1$).
The exact equality of Proposition~\ref{prop:variance}(ii) becomes an inequality, but the qualitative conclusion---DG suppresses perpendicular noise from rare incorrect actions---holds without symmetry.

\paragraph{Directional bias.}
Without symmetry, each incorrect action receives a different gate value, so $\mathbb{E}[g_{\mathrm{DG}}]$ is no longer exactly parallel to $\nabla J$.
This introduces a small perpendicular bias---a directional cost, since the two oracles already coincide in a single context.
The bias vanishes as the policy concentrates on the correct action ($\varepsilon \to 0$): all incorrect-action contributions shrink, and the gate values converge.

\subsection{Proof of Lemma~\ref{lem:greedy} (Greedy Directions)}
\label{app:greedy_lemma}

We derive the greedy-optimal direction for each objective by Cauchy--Schwarz.
Let $a_n := \|v_n\|^2$ then for an update $z \leftarrow z + \alpha\, g/\|g\|$ with $g = \sum_n c_n v_n$ and $\|g\|^2 = \sum_n c_n^2 a_n$ (by orthogonality):

\paragraph{Arithmetic objective.}
$\Delta(\sum_n p_n) = \sum_n \langle \nabla_{z_n} p_n, \alpha c_n v_n / \|g\| \rangle$.
Since $\nabla_{z_n} p_n = p_n v_n$, each term is $\alpha c_n p_n a_n / \|g\|$.
So $\Delta(\sum p_n) = \alpha \sum_n c_n p_n a_n / \sqrt{\sum_m c_m^2 a_m}$.

To maximize over $c$, set $x_n = c_n \sqrt{a_n}$ and $y_n = p_n \sqrt{a_n}$.
The ratio $\sum x_n y_n / \|x\|$ is maximized by Cauchy--Schwarz when $x \propto y$, i.e., $c_n \propto p_n$.

\paragraph{Log objective.}
$\Delta(\sum_n \log p_n) = \sum_n \langle \nabla_{z_n} \log p_n, \alpha c_n v_n / \|g\| \rangle$.
Since $\nabla_{z_n} \log p_n = v_n$, each term is $\alpha c_n a_n / \|g\|$.
Maximized when $c_n \propto a_n / a_n = 1$ (again by Cauchy--Schwarz with $y_n = \sqrt{a_n}$). \qed

\subsection{Proof of Proposition~\ref{prop:direction} ($N = 2$)}
\label{app:prop_b}

The key tool is a monotonicity lemma: the cosine between a weighted sum and the equal-weight sum is uniquely maximized when the ratio of weights equals one.

\begin{lemma}[Two-vector cosine monotonicity]
\label{lem:cosine_mono}
For $v_1, v_2$ orthogonal with norms $a_1, a_2 > 0$, define $C(r) := \cos(r\,v_1 + v_2,\; v_1 + v_2)$ for $r > 0$.
Then $C(r)$ is uniquely maximized at $r = 1$ and strictly decreasing as $r$ moves away from $1$.
\end{lemma}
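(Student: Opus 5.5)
We compute $C(r)$ in closed form using orthogonality and then study its logarithmic derivative. With $\|v_n\|^2 = a_n$ as in the statement (reading $a_n$ as squared norms, consistent with the proof of Lemma~\ref{lem:greedy}; the argument is identical with plain norms squared), we have
\[
\langle r v_1 + v_2,\, v_1 + v_2\rangle = r a_1 + a_2,\qquad \|r v_1 + v_2\|^2 = r^2 a_1 + a_2,\qquad \|v_1+v_2\|^2 = a_1 + a_2 .
\]
Hence
\[
C(r) = \frac{r a_1 + a_2}{\sqrt{(r^2 a_1 + a_2)(a_1+a_2)}} .
\]
The factor $(a_1+a_2)^{-1/2}$ is a positive constant, so it suffices to study
\[
f(r) := \frac{(r a_1 + a_2)^2}{r^2 a_1 + a_2}.
\]
Since $r a_1 + a_2 > 0$ for $r>0$, $C$ is positive and is a strictly increasing function of $f$.

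Next I differentiate $\log f$:
\[
\frac{f'(r)}{f(r)} = \frac{2a_1}{r a_1 + a_2} - \frac{2 r a_1}{r^2 a_1 + a_2}.
\]
Putting this over a common denominator, the numerator is
\[
2a_1\bigl[(r^2 a_1 + a_2) - r(r a_1 + a_2)\bigr] = 2a_1 a_2 (1 - r).
\]
The denominators are positive, so $f'(r)$ has the sign of $1-r$. Thus $f$, and therefore $C$, is strictly increasing on $(0,1)$ and strictly decreasing on $(1,\infty)$. This gives the unique maximum at $r=1$, where $C(1)=1$, and the strict decrease as $r$ moves away from $1$ on either side.

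The only point needing care, rather than a real obstacle, is that "moving away from 1" is meant on each side separately: $C(r)$ and $C(1/r)$ need not be equal when $a_1 \ne a_2$. For Proposition~\ref{prop:direction} we only need the one-sided statement, since both $r_{\mathrm{DG}}$ and $r_{\mathrm{PG}}$ lie on the same side of $1$ (say $>1$ after relabeling so $p_1>p_2$). Strict decrease on $(1,\infty)$ then yields $C(r_{\mathrm{DG}}) > C(r_{\mathrm{PG}})$ from $1<r_{\mathrm{DG}}<r_{\mathrm{PG}}$.
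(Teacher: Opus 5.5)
Your proposal is correct and follows essentially the same route as the paper: the same closed form for $C(r)$, reduction to the same function $Q(r)=(ra_1+a_2)^2/(r^2a_1+a_2)$, and a derivative whose sign is that of $1-r$. The only difference is that the paper applies the quotient rule directly, while you use the logarithmic derivative, which is cosmetic. Your reading of $a_n$ as squared norms matches the formula the paper actually uses.
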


\begin{proof}
By orthogonality:
\[
C(r) = \frac{r\,a_1 + a_2}{\sqrt{(r^2 a_1 + a_2)(a_1 + a_2)}}.
\]
Since $C(r) > 0$ on $(0,\infty)$, $C$ is maximized where $C(r)^2$ is.
Define $Q(r) := C(r)^2 \cdot (a_1 + a_2) = (r\,a_1 + a_2)^2 / (r^2 a_1 + a_2)$.
Differentiating by the quotient rule and simplifying the numerator:
\[
Q'(r) = \frac{2a_1 a_2(1 - r)(r\,a_1 + a_2)}{(r^2 a_1 + a_2)^2}.
\]
Since $a_1, a_2 > 0$ and $r\,a_1 + a_2 > 0$, we have $Q'(r) > 0$ for $r < 1$ and $Q'(r) < 0$ for $r > 1$.
Thus $Q$ (and hence $C$) is uniquely maximized at $r = 1$.
\end{proof}

\paragraph{Proof of Proposition~\ref{prop:direction}.}
The PG population direction has ratio $r_{\mathrm{PG}} = p_1/p_2$.
The DG population direction has ratio $r_{\mathrm{DG}} = h(p_1)/h(p_2)$, where $h(p) = p\,\sigma((-\log p)/\eta)$.

Assume WLOG $p_1 > p_2$, so $r_{\mathrm{PG}} > 1$.
Since $h$ is increasing, $r_{\mathrm{DG}} = h(p_1)/h(p_2) > 1$.
Since the gate $\sigma((-\log p)/\eta)$ is decreasing, $r_{\mathrm{DG}} = (p_1/p_2) \cdot [\sigma((-\log p_1)/\eta)/\sigma((-\log p_2)/\eta)] < r_{\mathrm{PG}}$.
Thus $1 < r_{\mathrm{DG}} < r_{\mathrm{PG}}$, and by Lemma~\ref{lem:cosine_mono}, $C(r_{\mathrm{DG}}) > C(r_{\mathrm{PG}})$. \qed

\subsection{Directional Improvement for General $N$}
\label{app:prop_b_general}

The $N = 2$ proof (Appendix~\ref{app:prop_b}) uses direct ratio compression.
For general $N$, we interpolate continuously between PG and DG weights and show the squared cosine strictly increases along the path.

\begin{proposition}[Directional improvement, general $N$]
\label{prop:direction_general}
Let $N \ge 2$ contexts with orthogonal score vectors $v_n$ of norms $a_n > 0$.
Let $\eta > 1/2$, and suppose the $p_n$ are not all equal.
Then
\[
\cos\!\big(\mathbb{E}[g_{\mathrm{DG}}],\; g_{\mathrm{CE}}\big)
\;>\;
\cos\!\big(\mathbb{E}[g_{\mathrm{PG}}],\; g_{\mathrm{CE}}\big).
\]
\end{proposition}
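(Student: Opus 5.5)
The plan is to interpolate continuously between the PG weights and the DG weights, and show that the squared cosine to $g_{\mathrm{CE}}$ strictly increases along the path. Write $\gamma(p) := \sigma((-\log p)/\eta)$ for the gate. For $t \in [0,1]$ set $c_n(t) := p_n\,\gamma(p_n)^t$ and $g(t) := \sum_n c_n(t)\,v_n$. Then $g(0) = \mathbb{E}[g_{\mathrm{PG}}]$ and $g(1) = \mathbb{E}[g_{\mathrm{DG}}]$, using the weights in \eqref{eq:three_directions}. By orthogonality of the $v_n$,
\[
F(t) := \log \cos^2\!\big(g(t), g_{\mathrm{CE}}\big) = 2\log\textstyle\sum_n a_n c_n - \log\sum_n a_n c_n^2 - \log\sum_n a_n .
\]
Here I take $a_n = \|v_n\|^2$ in the paper's Lemma~\ref{lem:greedy} convention; the argument works verbatim with any positive $a_n$. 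Both cosines are positive because all weights are positive, so it suffices to show $F(1) > F(0)$. For that I will show $F'(t) > 0$ on $[0,1]$.

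\textbf{Step 1 (derivative as a covariance).} Put $\lambda_n := \log \gamma(p_n)$, so that $c_n' = \lambda_n c_n$. Define probability weights $\mu_n \propto a_n c_n$ and $\nu_n \propto a_n c_n^2$, the latter being $\mu$ tilted by $c$. Differentiating gives
\[
F'(t) = 2\big(\mathbb{E}_\mu[\lambda] - \mathbb{E}_\nu[\lambda]\big) = -\frac{2\,\mathrm{Cov}_\mu(c,\lambda)}{\mathbb{E}_\mu[c]} .
\]
The problem therefore reduces to showing $\mathrm{Cov}_\mu(c(t),\lambda) < 0$ for every $t\in[0,1]$.

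\textbf{Step 2 (opposite monotonicity).} Both $c_n(t)$ and $\lambda_n$ are functions of $p_n$ alone. First, $\lambda(p) = \log\sigma((-\log p)/\eta)$ is strictly decreasing in $p$, because $\sigma$ is increasing and $-\log p$ is decreasing. Second, I need $p \mapsto p\,\gamma(p)^t$ to be strictly increasing for all $t \in [0,1]$; this is the step where the hypothesis $\eta > 1/2$ enters. Set $x = -\log p/\eta \ge 0$ and use $\frac{d}{dx}\log\sigma(x) = 1-\sigma(x)$. Then
\[
\frac{d \log c}{d\log p} = 1 - \frac{t\,(1-\sigma(x))}{\eta} \;\ge\; 1 - \frac{t}{2\eta} \;>\; 0 ,
\]
since $\sigma(x) \ge 1/2$ for $x\ge 0$. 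This extends the paper's claim that $h$ is increasing to the whole interpolation path.

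\textbf{Step 3 (Chebyshev's sum inequality).} Under the full-support distribution $\mu$, $c$ and $\lambda$ are strictly oppositely ordered functions of the same variable $p$. The identity $\mathrm{Cov}_\mu(c,\lambda) = \tfrac12\sum_{n,m}\mu_n\mu_m (c_n-c_m)(\lambda_n-\lambda_m)$ then gives covariance $\le 0$. Every term with $p_n \ne p_m$ is strictly negative, and at least one such pair exists because the $p_n$ are not all equal. Hence $F' > 0$ on $[0,1]$, so $F(1) > F(0)$, which is the claim.

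I expect the main obstacle to be Step 2, namely confirming that monotonicity of $c$ holds uniformly along the path, not just at the endpoint $t=1$. That is precisely the role of $\eta>1/2$, and the bound $\sigma \ge 1/2$ on $x\ge0$ disposes of it. Everything else is a routine derivative computation followed by a standard covariance inequality.
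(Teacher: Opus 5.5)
Your proposal is correct and follows essentially the same route as the paper. It uses the same interpolated weights $c_n(t)=p_n\,\sigma((-\log p_n)/\eta)^t$ and the same use of $\eta>1/2$ to keep $c_n(t)$ ordered like $p_n$ along the whole path, and your $-\mathrm{Cov}_\mu(c,\lambda)$ is, up to a positive factor, exactly the paper's antisymmetrized sum $\sum_{n<m}(\lambda_n-\lambda_m)\,a_n a_m c_n c_m (c_m-c_n)$; differentiating $\log\cos^2$ instead of $\cos^2$ and reading $a_n$ as $\|v_n\|^2$ (which the paper's formula for $\Phi$ actually requires, despite calling $a_n$ ``norms'') are only cosmetic differences.
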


\begin{proof}
Define the interpolated weights $c_n(t) = p_n\,[\sigma((-\log p_n)/\eta)]^t$ for $t \in [0,1]$.
At $t = 0$, $c_n(0) = p_n$ (PG weights); at $t = 1$, $c_n(1) = p_n \,\sigma((-\log p_n)/\eta)$ (DG weights).
Define the squared-cosine proxy
\[
\Phi(t) := \frac{\big(\sum_n c_n(t)\,a_n\big)^2}{\sum_n c_n(t)^2\,a_n},
\]
which is proportional to $\cos^2\!\big(\sum c_n(t)\,v_n,\;\sum v_n\big)$.
We show $\Phi'(t) > 0$ for all $t \in [0,1)$.

Setting $\lambda_n := \log \sigma((-\log p_n)/\eta) < 0$ (since $\sigma < 1$ on $(0,1)$), we have $c_n'(t) = c_n(t)\,\lambda_n$.
Differentiating $\Phi = A^2/B$ with $A = \sum c_n a_n$ and $B = \sum c_n^2 a_n$:
\[
\Phi'(t) = \frac{2A}{B^2}\Big[A'\,B - A\,B'\!/2\Big] \;\propto\; \sum_{n,m} c_n\,c_m^2\,a_n\,a_m\,(\lambda_n - \lambda_m).
\]
Antisymmetrizing:
\begin{equation}
\label{eq:path_sign}
\Phi'(t) \;\propto\; \sum_{n < m} (\lambda_n - \lambda_m)\,a_n\,a_m\,c_n\,c_m\,(c_m - c_n).
\end{equation}
We check the sign of each term.
Since $\eta > 1/2$, the function $p \mapsto p\,[\sigma((-\log p)/\eta)]^t$ is increasing for all $t \in [0,1]$.%
\footnote{Write $p\,\sigma((-\log p)/\eta)^t = p/(1+p^{1/\eta})^t$.
Its derivative has the sign of $1 + (1-t/\eta)\,p^{1/\eta}$, which at worst ($t{=}1$, $p{\to}1$) equals $2 - 1/\eta > 0$.}
Therefore $c_n(t)$ preserves the ordering of the $p_n$.
Since $\lambda_n$ is strictly decreasing in $p_n$ (as $\sigma((-\log p)/\eta)$ is decreasing), for each pair $n < m$ with $p_n \neq p_m$:
\begin{itemize}
\item $p_n > p_m \implies c_n > c_m$ and $\lambda_n < \lambda_m$, so $(\lambda_n - \lambda_m)(c_m - c_n) > 0$.
\item $p_n < p_m \implies c_n < c_m$ and $\lambda_n > \lambda_m$, so $(\lambda_n - \lambda_m)(c_m - c_n) > 0$.
\end{itemize}
Each factor $a_n\,a_m\,c_n\,c_m$ is positive, so every non-degenerate term in~\eqref{eq:path_sign} is strictly positive.
Since not all $p_n$ are equal, $\Phi'(t) > 0$ for all $t \in [0,1)$, giving $\Phi(1) > \Phi(0)$.
Both cosines are positive, so $\cos(\mathbb{E}[g_{\mathrm{DG}}], g_{\mathrm{CE}}) > \cos(\mathbb{E}[g_{\mathrm{PG}}], g_{\mathrm{CE}})$.
\end{proof}

\paragraph{On the $\eta > 1/2$ condition.}
The condition $\eta > 1/2$ ensures that the DG weight function $p \mapsto p\,\sigma((-\log p)/\eta)$ is monotonically increasing, which the path argument requires to preserve the ordering of weights.
Since we use $\eta = 1$ throughout, this condition is always satisfied.
For $\eta \le 1/2$ the weight function can be non-monotone near $p = 1$, and Proposition~\ref{prop:direction} may fail; exploring this regime is an interesting direction for future work.

\section{Transformer Sequence Modeling}
\label{app:scaling}

We provide experimental details and robustness checks for the Token Reversal experiments in Section~\ref{sec:transformer}.

\paragraph{Experimental Setup.}
The agent is a decoder-only Transformer with causal attention, model dimension $d_{\text{model}}=64$, 2 layers, and 2 attention heads.
We use a distributed actor-learner architecture with 10 parallel actors, each collecting batches of 10 trajectories, yielding 100 episodes per gradient step.
All agents are trained with Adam using a standard empirical mean baseline for variance reduction.
Unless otherwise noted, the training budget is $K=1{,}000$ episodes.

\subsection{Baseline Tuning}
\label{app:scaling_tuning}

To ensure fair comparison, we tuned hyperparameters for PPO and PMPO.
For PPO, we swept the clipping parameter $\epsilon$ and KL penalty $\beta$; Figure~\ref{fig:baseline_tune}a shows the algorithm is insensitive to $\epsilon$, and the KL penalty does not help on this deterministic task.
For PMPO, we swept the weighting threshold $\alpha$ and KL penalty $\beta$; Figure~\ref{fig:baseline_tune}b shows boundary values ($\alpha \approx 0$ or $1$) outperform intermediate values.
Even the best-tuned PPO ($\text{Regret} \approx 0.03$) and PMPO ($\text{Regret} \approx 0.08$) lag behind DG ($\text{Regret} < 0.01$).

\begin{figure}[ht]
\centering
\begin{subfigure}[b]{0.48\columnwidth}
    \includegraphics[width=\linewidth]{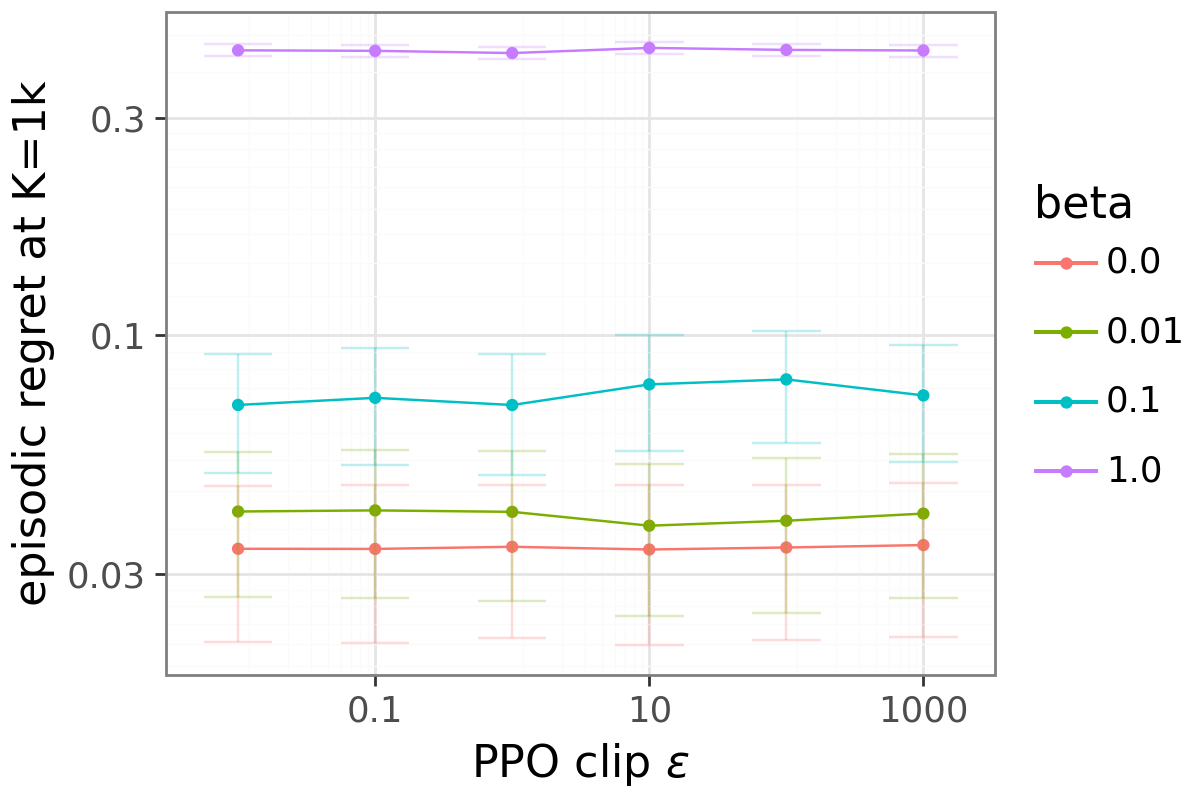}
    \caption{Tuning PPO ($\epsilon$, $\beta$).}
\end{subfigure}
\hfill
\begin{subfigure}[b]{0.48\columnwidth}
    \includegraphics[width=\linewidth]{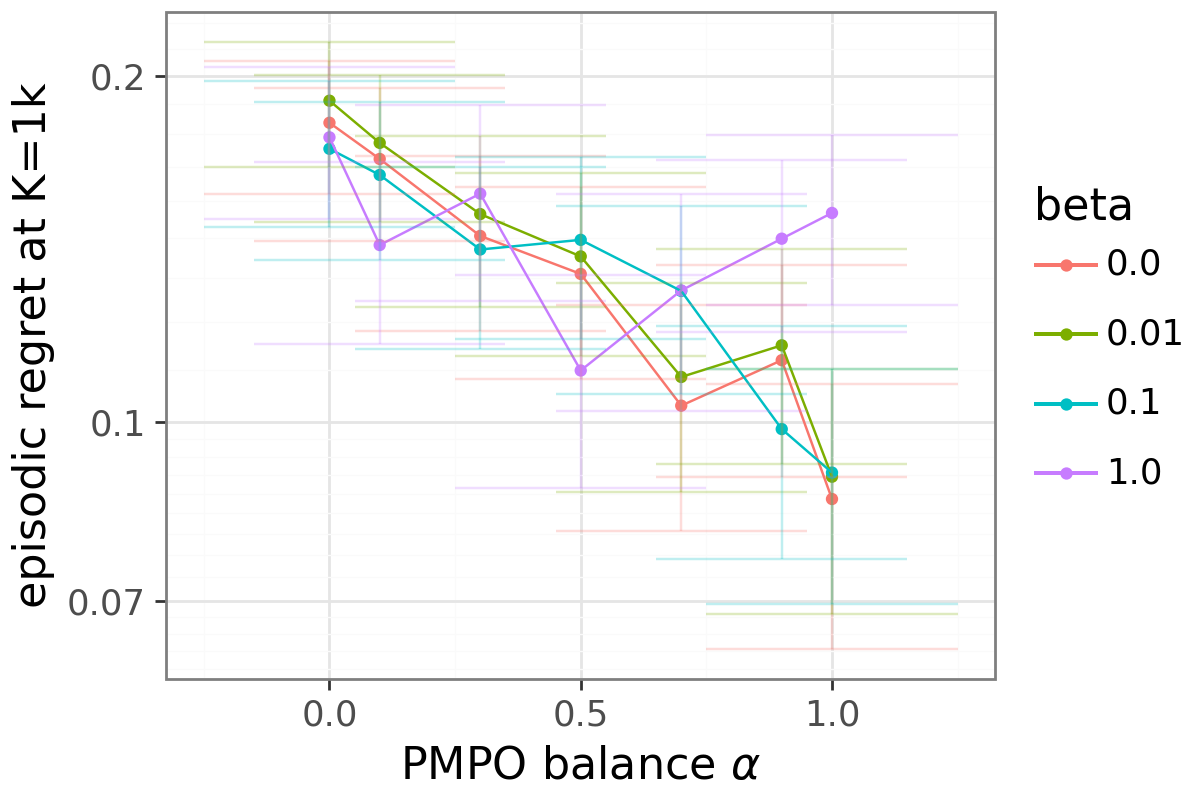}
    \caption{Tuning PMPO ($\alpha$, $\beta$).}
\end{subfigure}
\caption{Hyperparameter tuning for baselines. Neither PPO nor PMPO matches DG despite extensive sweeps.}
\label{fig:baseline_tune}
\end{figure}

\subsection{Robustness}
\label{app:scaling_robustness}

We validate robustness on the default task ($H=10$, $M=2$).
Figure~\ref{fig:combined_ablation}a shows regret at $K=1{,}000$ across learning rates; DG consistently outperforms baselines over a wide effective range.
Figure~\ref{fig:combined_ablation}b extends training to $K=10{,}000$ episodes; baselines do not catch up, confirming the advantage is not due to faster early learning alone.

\begin{figure}[ht]
\centering
\begin{subfigure}[b]{0.48\columnwidth}
    \includegraphics[width=\linewidth]{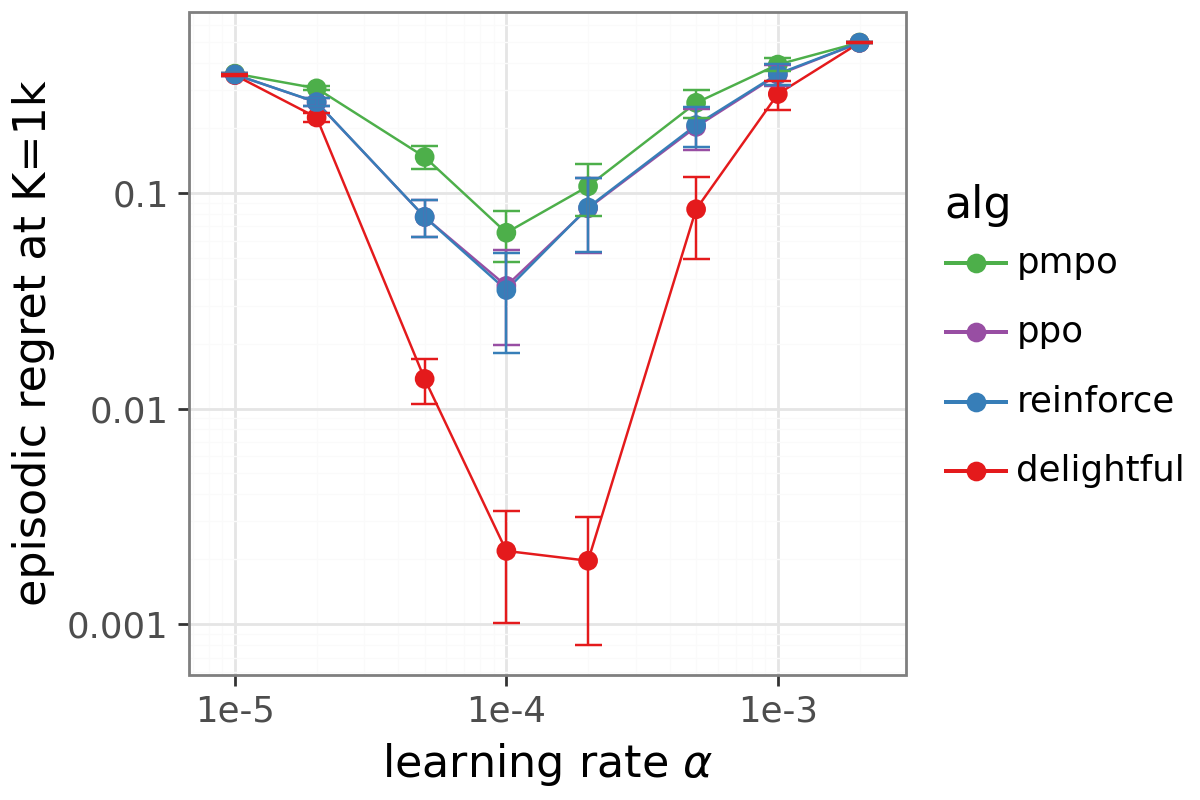}
    \caption{Learning rate sensitivity.}
\end{subfigure}
\hfill
\begin{subfigure}[b]{0.48\columnwidth}
    \includegraphics[width=\linewidth]{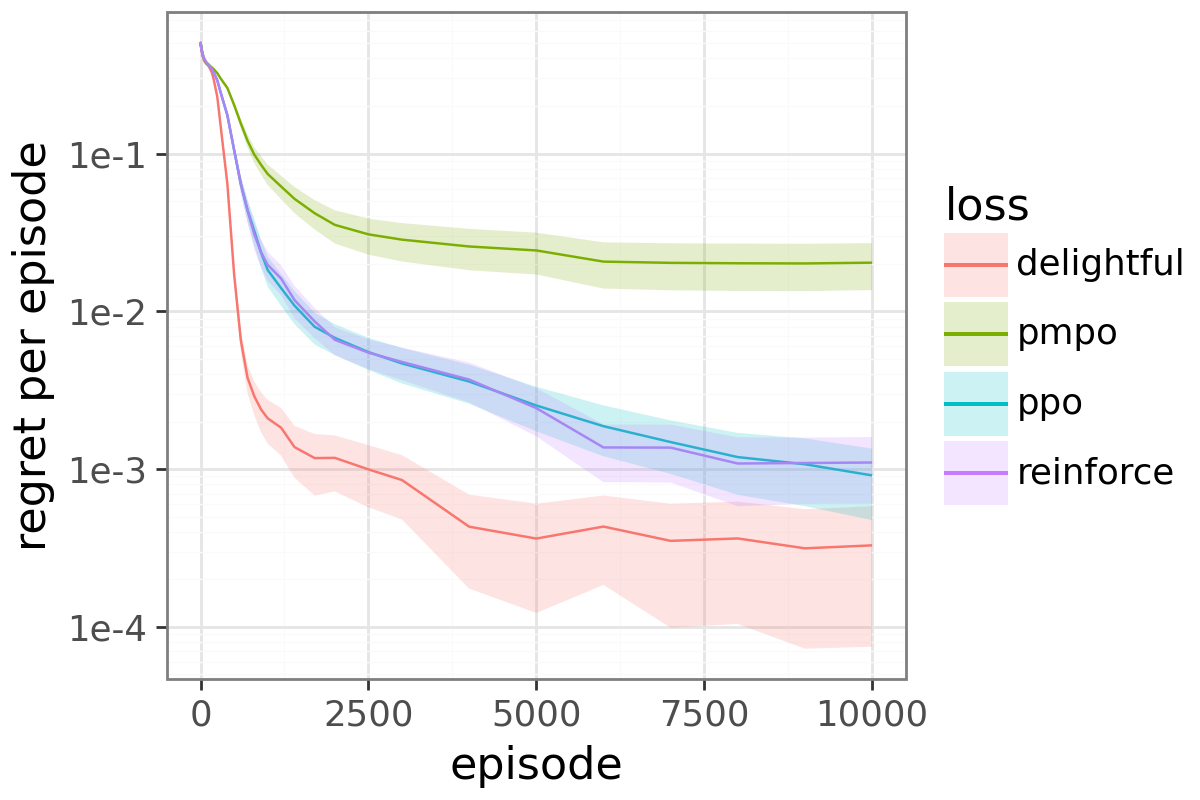}
    \caption{Extended training ($K=10\text{k}$).}
\end{subfigure}
\caption{DG's advantage is robust to learning rate \textbf{(a)} and persists asymptotically \textbf{(b)}.}
\label{fig:combined_ablation}
\end{figure}

\subsection{Task Variations}
\label{app:scaling_environment}

To ensure findings generalize beyond reversal, we test four target logics:
\begin{itemize}
    \item \textbf{Copy:} $y_i = x_i$
    \item \textbf{Flip:} $y_i = 1 - x_i$
    \item \textbf{Reverse Copy:} $y_i = x_{H-i+1}$ (the default)
    \item \textbf{Reverse Flip:} reverse and negate
\end{itemize}
We also vary reward structure.
\textbf{Bag-of-Tokens} gives credit for each correct token regardless of position; \textbf{Sequential} gives credit only up to the first mistake, making credit assignment harder.
Figure~\ref{fig:env_structure} illustrates the difference.

Figure~\ref{fig:env_regret} shows learning curves for all eight configurations.
DG achieves the lowest regret in every setting.
The gap is often larger in the harder Sequential settings, where baselines struggle with truncated reward streams.

\begin{figure}[ht]
\centering
\begin{subfigure}[b]{0.48\columnwidth}
    \includegraphics[width=\linewidth]{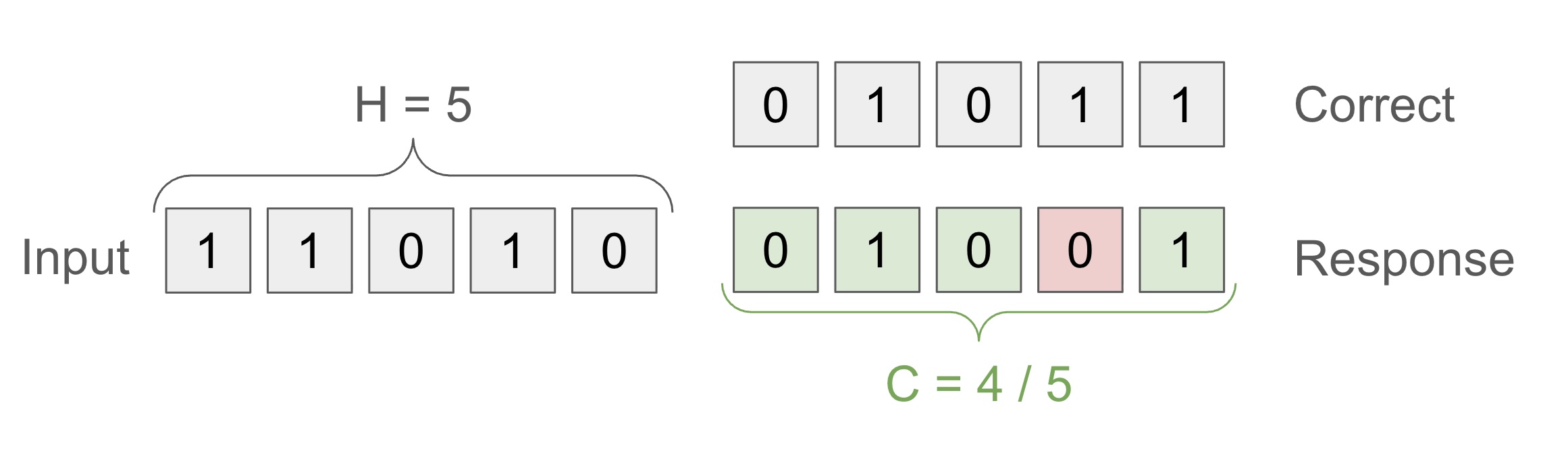}
    \caption{Bag-of-Tokens (dense).}
\end{subfigure}
\hfill
\begin{subfigure}[b]{0.48\columnwidth}
    \includegraphics[width=\linewidth]{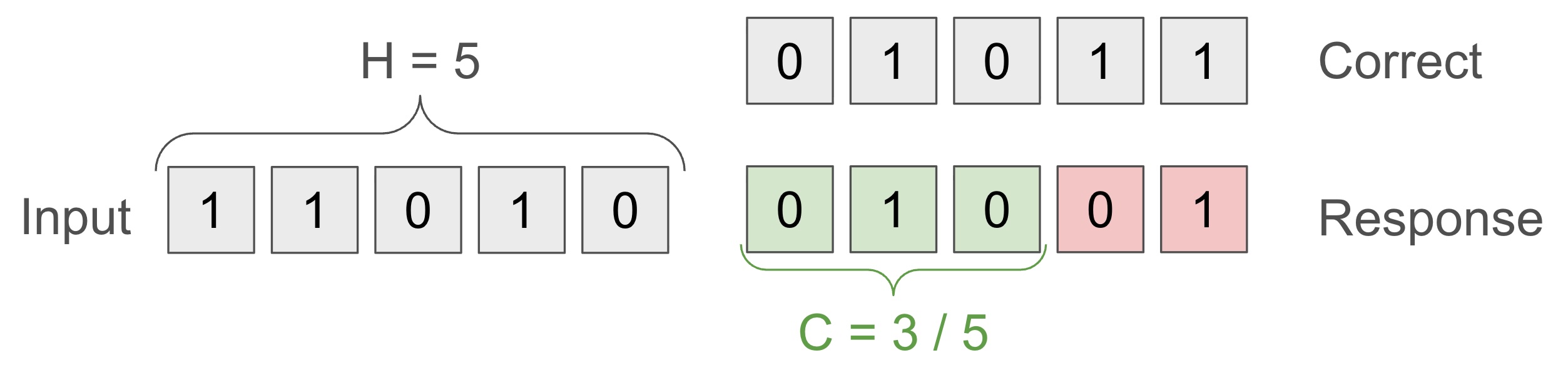}
    \caption{Sequential (strict).}
\end{subfigure}
\caption{Reward structures. Bag-of-Tokens credits all correct tokens; Sequential stops at the first error.}
\label{fig:env_structure}
\end{figure}

\begin{figure}[ht]
\centering
\includegraphics[width=\columnwidth]{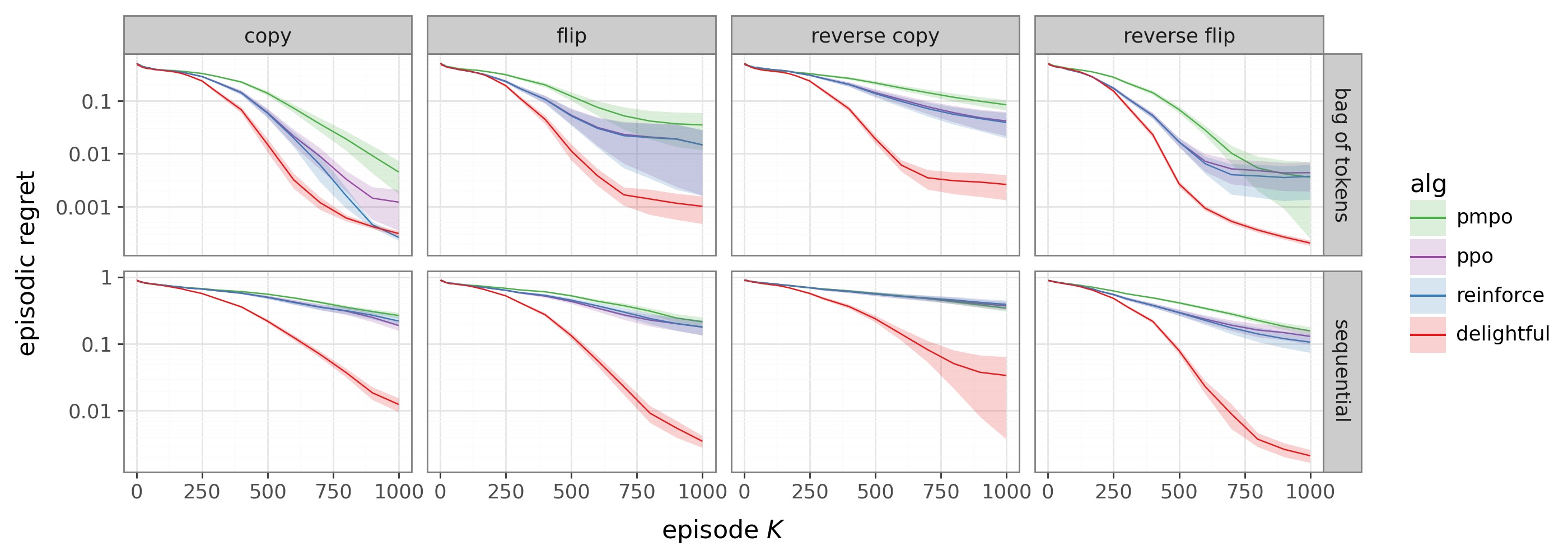}
\caption{Learning curves across 8 task variants. Top: Bag-of-Tokens. Bottom: Sequential. DG (red) achieves the lowest error in all configurations.}
\label{fig:env_regret}
\end{figure}

\subsection{Multiplicative vs.\ Additive Delight}
\label{app:scaling_alternative}

We compare DG to UCB-style additive mixtures:
\begin{equation}
    \chi^{\rm UCB}_\alpha = (1-\alpha) U + \alpha \ell.
\end{equation}
Figure~\ref{fig:compare_ucb} sweeps $\alpha \in [0, 1.25]$ and $\eta \in \{0.2, 0.5, 1, 2, 5\}$.
Additive mixtures can outperform REINFORCE (black dashed) but never approach DG (red dashed).
The best additive achieves regret $\approx 0.1$ versus DG's $\approx 0.04$.
Additive bonuses treat surprisal symmetrically, encouraging the policy to chase high-surprisal actions even when advantage is negative.
DG's multiplicative gate suppresses such blunders, filtering them from the update.

\begin{figure}[ht]
\centering
\includegraphics[width=0.55\columnwidth]{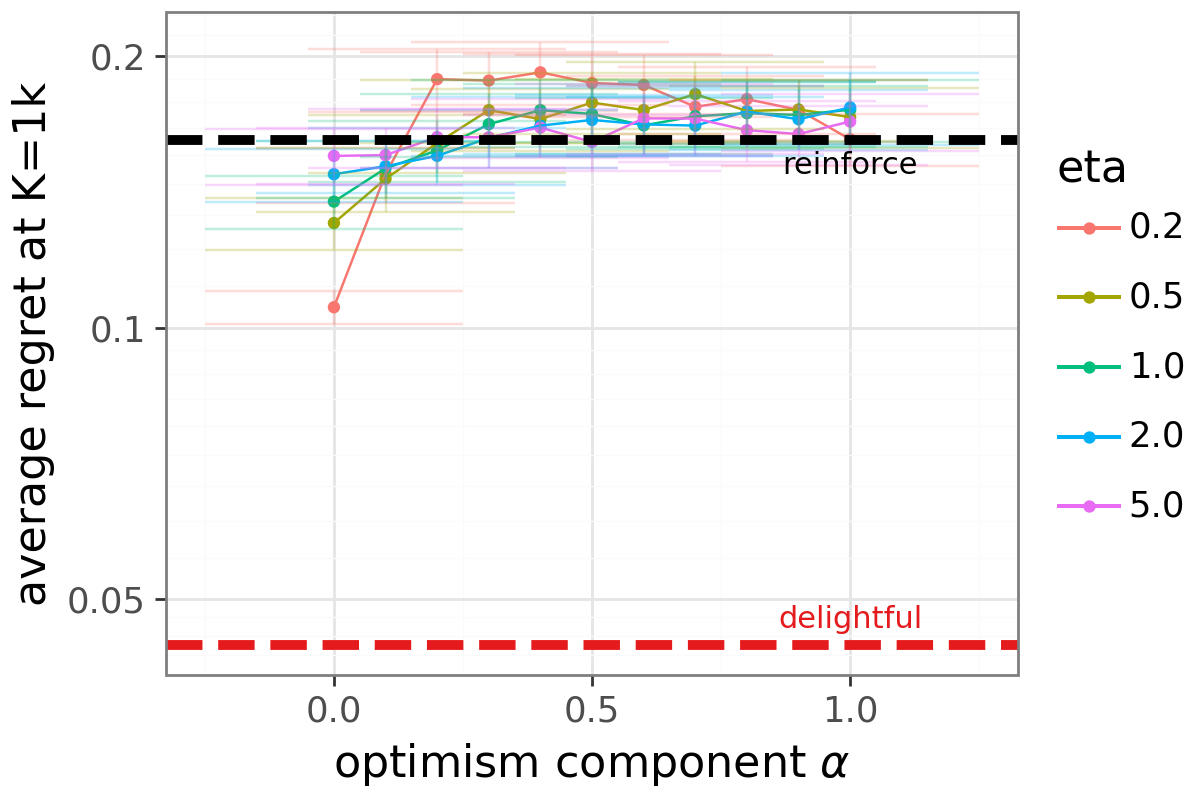}
\caption{Additive mixtures (colored lines) vs.\ multiplicative DG (red dashed). No additive configuration matches DG.}
\label{fig:compare_ucb}
\end{figure}

\section{Detailed Control Suite Results}
\label{app:control_full}

We provide experimental details and extended results for Section~\ref{sec:control}.

\subsection{Experimental Setup}
\label{app:control_setup}

\paragraph{Architecture.}
All methods use the same actor-critic architecture: a 2-layer MLP with 256 hidden units for both actor and critic.
The actor outputs mean and diagonal covariance of a Gaussian policy; the critic estimates state-action values.
We use the Retrace algorithm~\citep{munos2016safe} for off-policy correction.

\paragraph{Optimization.}
All methods use Adam with learning rate $3 \times 10^{-4}$.
We train for 10 million environment steps with 4 parallel actors, a replay buffer of size $2 \times 10^6$, and batch size 256.
Target networks are updated every 100 learner steps.

\paragraph{DG-specific details.}
For continuous actions, surprisal $\ell = -\log \pi(a \mid s)$ can take large values.
We clip surprisal to $[-10, 10]$ before computing delight.
We also normalize rewards using an exponential moving average (decay 0.999) for critic stability; this normalization is applied to all methods.
The gate temperature is $\eta = 1$, consistent with all other experiments.

Algorithm~\ref{alg:dg_continuous} provides the continuous-action variant used in all control experiments.

\begin{algorithm}[ht]
\caption{Delightful Policy Gradient (continuous actions)}
\label{alg:dg_continuous}
\begin{algorithmic}[1]
\Require Batch $\mathcal{B}$, Gaussian policy $\pi_\theta(\cdot \mid s)$, temperature $\eta{=}1$, clip bound $C{=}10$
\State $\Delta\theta \gets 0$
\For{$t \in \mathcal{B}$}
    \State $\ell_t \gets \mathrm{clip}(-\log \pi_\theta(A_t \mid \mathcal{H}_t),\; -C,\; C)$
    \State $\chi_t \gets U_t \cdot \ell_t$ \Comment{Delight}
    \State $w_t \gets \sigma(\chi_t / \eta)$ \Comment{Gate}
    \State $\Delta\theta \gets \Delta\theta + w_t \, U_t \, \nabla_\theta \log \pi_\theta(A_t \mid \mathcal{H}_t)$
\EndFor
\State \Return $\Delta\theta$
\end{algorithmic}
\end{algorithm}

\paragraph{Baselines.}
We compare against three baselines within our codebase:
\begin{itemize}
    \item \textbf{PG:} Standard policy gradient with Retrace critic and no gating.
    \item \textbf{PPO:} Clipped surrogate objective with $\epsilon = 0.2$.
    \item \textbf{MPO:} Softmax-weighted updates with temperature $\eta = 1.0$.
\end{itemize}
All baselines use identical architecture, optimizer, and replay settings.

\subsection{Per-Environment Learning Curves}
\label{app:control_curves}

Figure~\ref{fig:reward_control} displays individual learning curves for all 28 Control Suite environments.
DG (red) consistently matches or exceeds baseline performance, with notable improvements on exploration-heavy tasks such as \texttt{acrobot:swingup}, \texttt{finger:turn\_hard}, and \texttt{humanoid:run}.

\begin{figure*}[h]
    \centering
    \includegraphics[width=\textwidth]{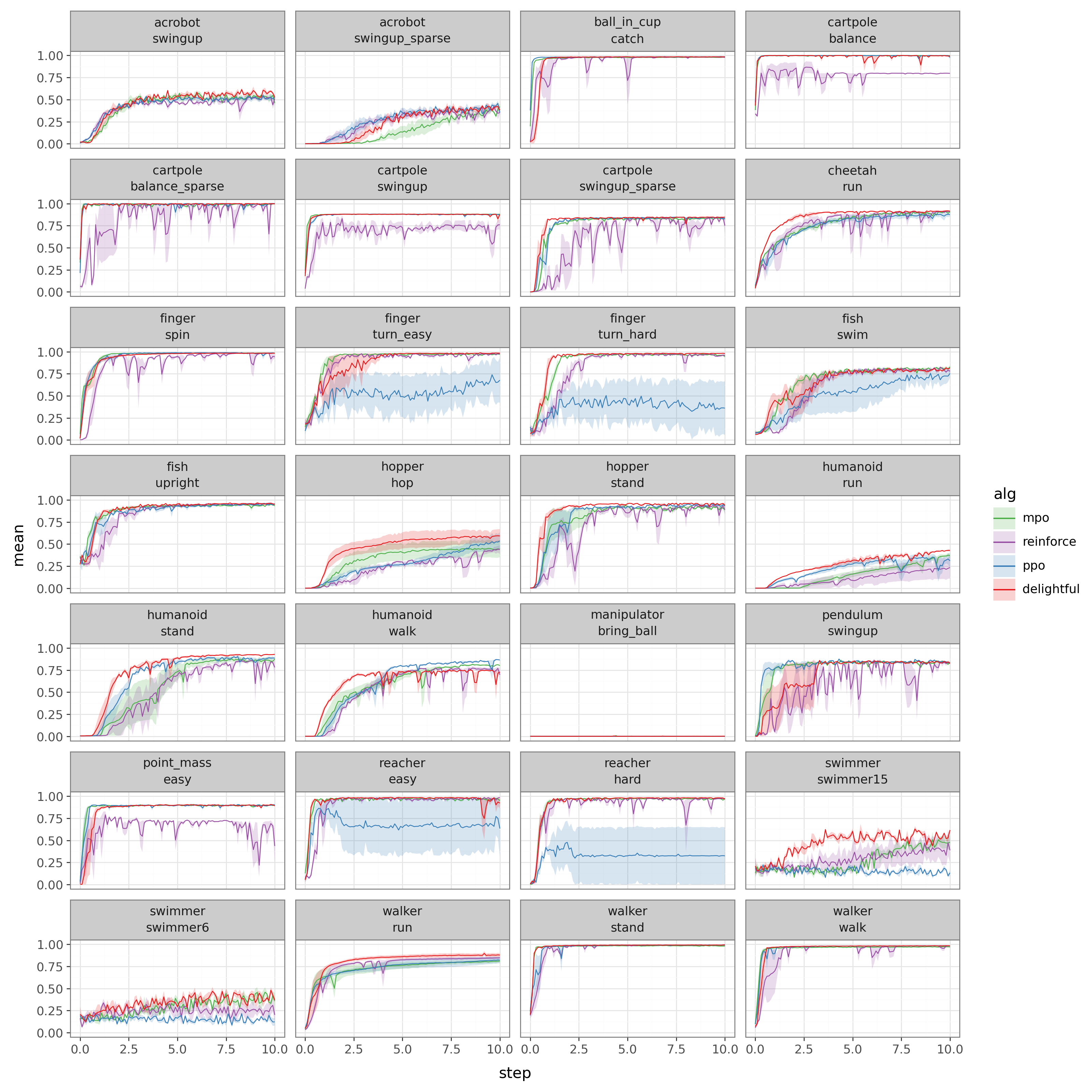}
    \caption{Learning curves for 28 Control Suite environments.
    DG (red) consistently matches or exceeds the hedonic baseline (purple), PPO (blue), and MPO (green).}
    \label{fig:reward_control}
\end{figure*}

\subsection{Baseline Hyperparameter Sensitivity}
\label{app:control_tuning}

To ensure fair comparison, we verified that default hyperparameters are reasonable for PPO and MPO.
Figure~\ref{fig:control_tuning} shows sensitivity sweeps on \texttt{cartpole:swingup}.
For PPO, we sweep clip parameter $\epsilon \in [0.01, 100]$; for MPO, we sweep temperature $\eta \in [0.001, 1000]$.
Neither sweep reveals a clear optimum that substantially outperforms the defaults ($\epsilon = 0.2$, $\eta = 1.0$).
We therefore use these standard values throughout.

\begin{figure}[h]
    \centering
    \begin{subfigure}[b]{0.48\columnwidth}
        \includegraphics[width=\linewidth]{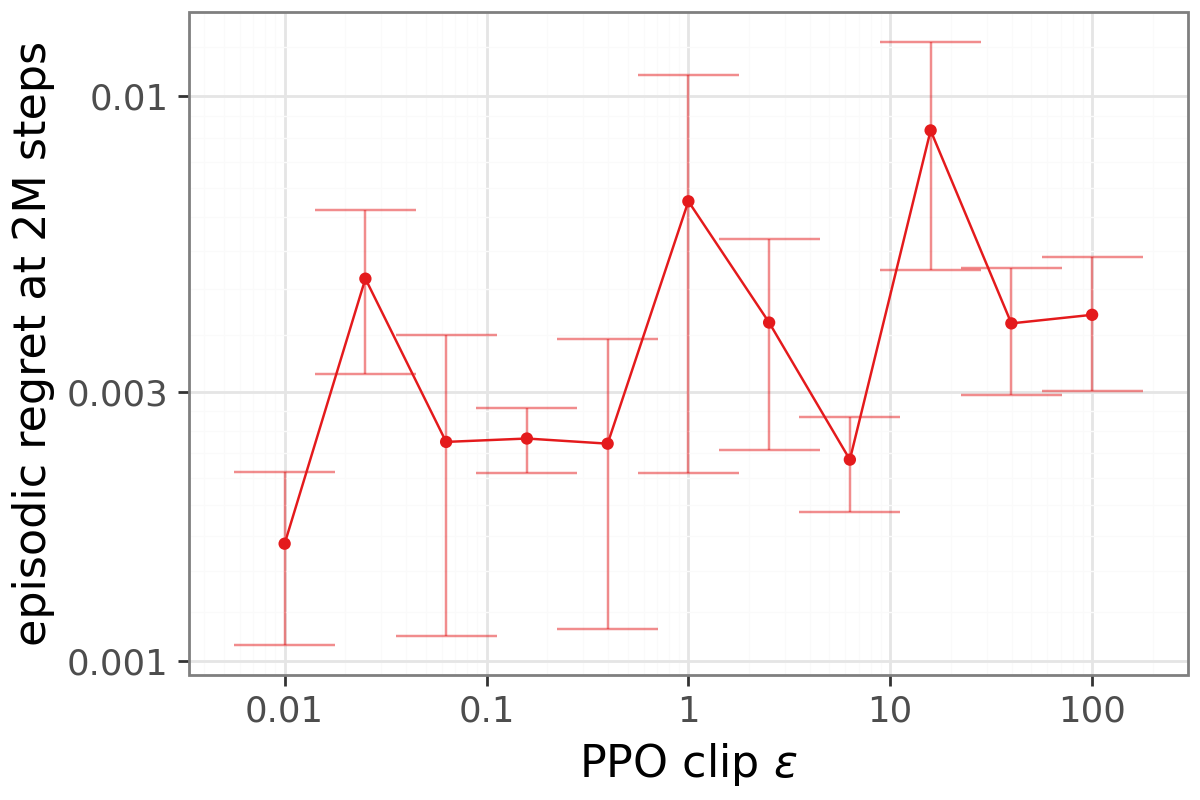}
        \caption{PPO clip $\epsilon$.}
        \label{fig:cartpole_tune_ppo}
    \end{subfigure}
    \hfill
    \begin{subfigure}[b]{0.48\columnwidth}
        \includegraphics[width=\linewidth]{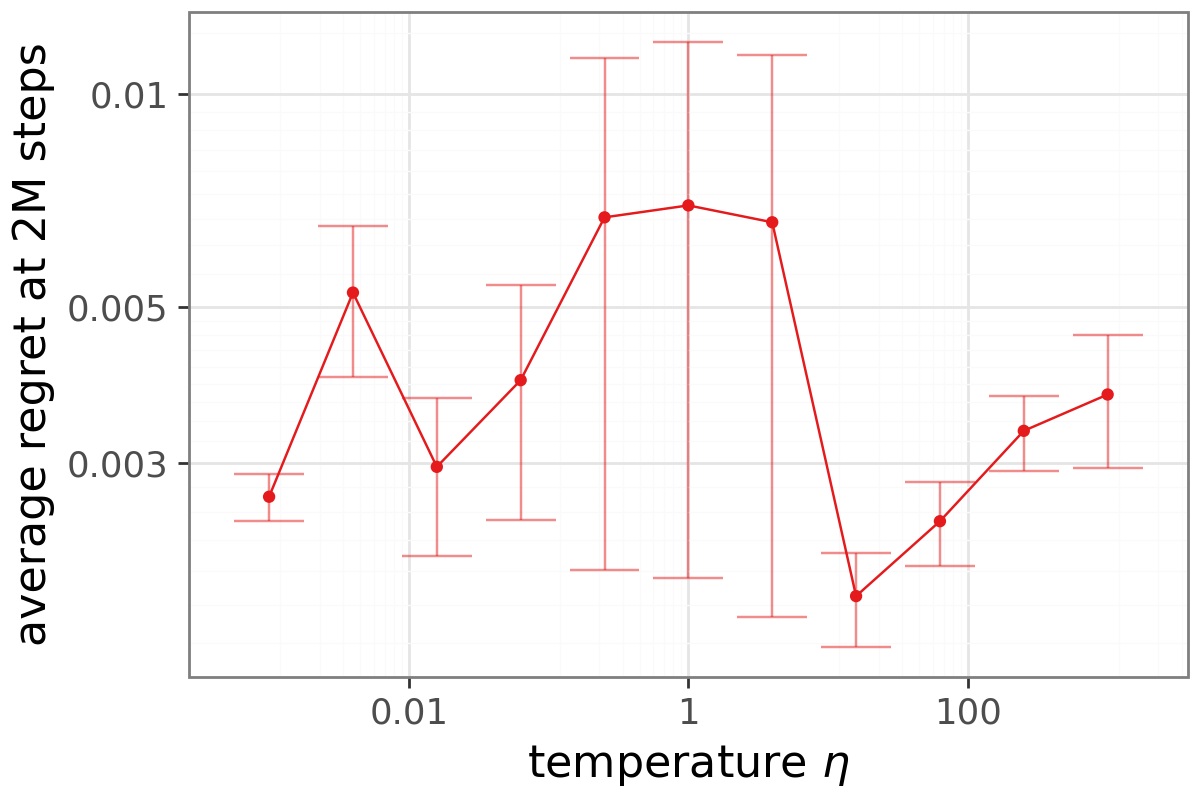}
        \caption{MPO temperature $\eta$.}
        \label{fig:cartpole_tune_mpo}
    \end{subfigure}
    \caption{Hyperparameter sensitivity on \texttt{cartpole:swingup}. 
    Default values (dashed) perform comparably to alternatives.}
    \label{fig:control_tuning}
\end{figure}

\subsection{Comparison to Tuned External Implementations}
\label{app:control_sota}

We also benchmark DG against highly-optimized external implementations: \textbf{MPO (Tuned)} with adaptive temperature and \textbf{SAC (Tuned)} with automatic entropy adjustment.
These use separate codebases with extensive per-task tuning.
This comparison is especially stringent because regret is defined relative to the best performance achieved by \emph{any} method: $\text{Regret}_k = R_{\text{best}} - R_k$.

Figure~\ref{fig:regret_ave_control_tuned} shows aggregate regret over training.
Even against these optimized baselines, DG achieves the lowest average regret.
Figure~\ref{fig:regret_env_control} breaks down final performance by environment, confirming that DG's advantage is broad-based rather than driven by outliers.
DG achieves low regret across domains ranging from \texttt{cartpole} to \texttt{humanoid} and \texttt{dog}.

\begin{figure}[h]
    \centering
    \includegraphics[width=0.55\columnwidth]{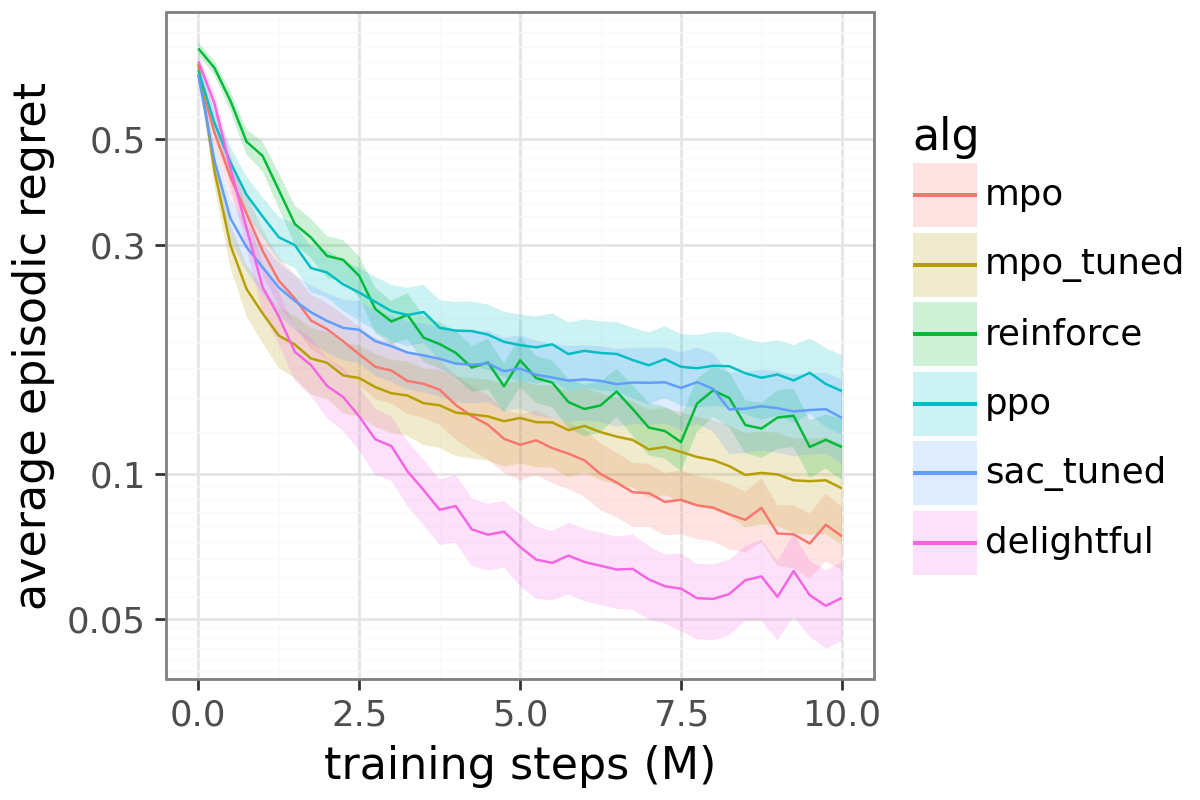}
    \caption{Aggregate regret against tuned SOTA baselines.
    DG (pink) outperforms tuned MPO (gold) and SAC (blue).}
    \label{fig:regret_ave_control_tuned}
\end{figure}

\begin{figure*}[h]
    \centering
    \includegraphics[width=\textwidth]{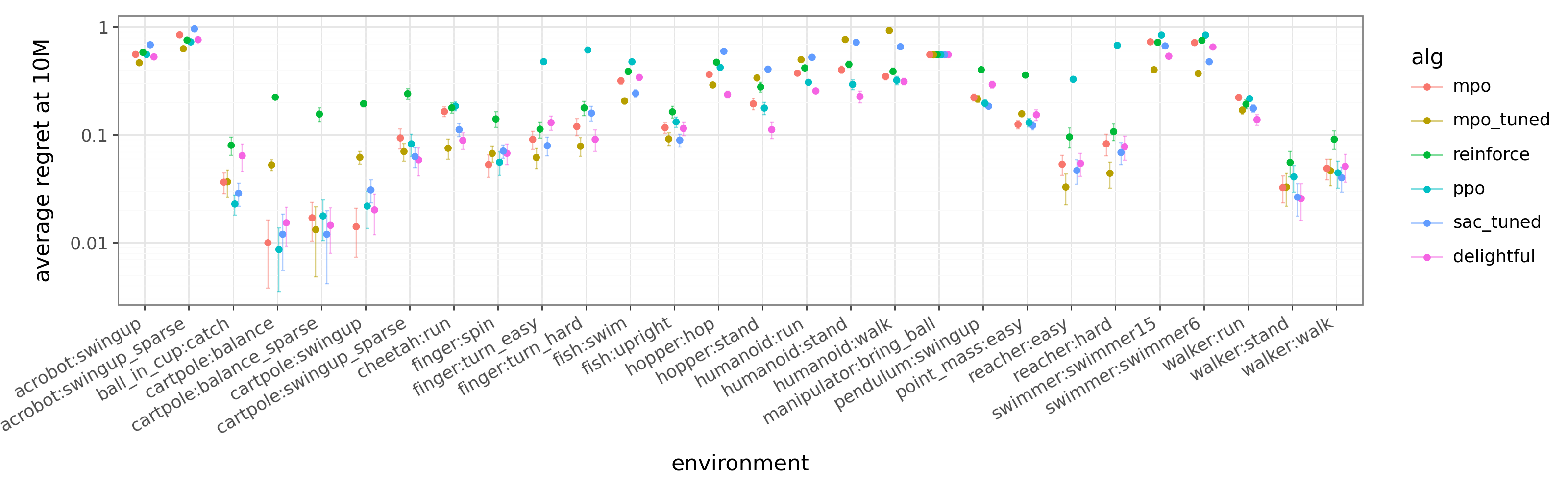}
    \caption{Per-environment regret at 10M steps.
    DG (pink) consistently achieves low regret across the suite.}
    \label{fig:regret_env_control}
\end{figure*}

\end{document}